\newcommand{\learner}{\mathcal A}
\newcommand{\supersample}{\widetilde{{Z}}}
\newcommand{\supersamplesmall}{\tilde{{z}}}
\newcommand{\supersamplesmallx}{\tilde{{x}}}
\newcommand{\subsetchoice}{S}
\newcommand{\subsetchoicesmall}{s}
\newcommand{\trainingdata}{\supersample_{\subsetchoice}}
\newcommand{\trainingdatazero}{Z}
\newcommand{\trainingdatazS}{\supersamplesmall_{\subsetchoice}}
\newcommand{\trainingdatazs}{\supersamplesmall_{\subsetchoicesmall}}
\newcommand{\testdata}{\supersample_{\bar\subsetchoice}}
\newcommand{\testdatazs}{\supersamplesmall_{\bar\subsetchoicesmall}}
\newcommand{\randomsubset}{U}
\newcommand{\randomsubsetsmall}{u}
\newcommand{\randomness}{R}
\newcommand{\randomnesssmall}{r}
\newcommand{\datadistro}{\mathcal{D}}
\newcommand{\poprisk}{L_{\datadistro}(\learner,\supersample_{\subsetchoice},\randomness)}
\newcommand{\poprisksmall}{L_{\datadistro}(\learner,\supersamplesmall_{\subsetchoicesmall},\randomnesssmall)}
\newcommand{\popriskavgshort}{L_{\datadistro}} 
\newcommand{\popriskavg}{\Ex{\supersample,\subsetchoice,\randomness}{\poprisk}}
\newcommand{\popriskzeroF}{L_{\mathcal D}(F)}
\newcommand{\trainrisk}{L_{\supersample_{\subsetchoice}}(\learner,\supersample_{\subsetchoice},\randomness)}
\newcommand{\trainriskU}{L_{\supersample_{\subsetchoice_\randomsubset}\!\!}(\learner,\supersample_{\subsetchoice},\randomness)}
\newcommand{\trainriskzSRpi}{L_{\supersamplesmall_{\subsetchoice'_i}}(\learner,\supersamplesmall_{\subsetchoicesmall},\randomness)}
\newcommand{\trainriskzsR}{L_{\supersamplesmall_{\subsetchoicesmall}}(\learner,\supersamplesmall_{\subsetchoicesmall},\randomness)}
\newcommand{\trainriskzSru}{L_{\supersamplesmall_{\subsetchoice_\randomsubsetsmall}}(\learner,\supersamplesmall_{\subsetchoice},\randomnesssmall)}
\newcommand{\trainrisksmall}{L_{\supersamplesmall_{\subsetchoicesmall}}(\learner,\supersamplesmall_{\subsetchoicesmall},\randomnesssmall)}
\newcommand{\trainriskavgshort}{\hat L}
\newcommand{\trainriskavg}{\Ex{\supersample,\subsetchoice,\randomness}{\trainrisk}}
\newcommand{\gengapavg}{\Ex{\supersample,\subsetchoice,\randomness}{\poprisk-\trainrisk}}
\newcommand{\gengapavgshort}{\popriskavgshort-\trainriskavgshort}
\newcommand{\testrisk}{L_{\supersample_{\bar\subsetchoice}}(\learner,\supersample_{\subsetchoice},\randomness)}
\newcommand{\testriskU}{L_{\supersample_{\bar\subsetchoice_\randomsubset}\!\!}(\learner,\supersample_{\subsetchoice},\randomness)}
\newcommand{\testriskzSRpi}{L_{\supersamplesmall_{\bar\subsetchoice'_i}}(\learner,\supersamplesmall_{\subsetchoicesmall},\randomness)}
\newcommand{\testriskzsR}{L_{\supersamplesmall_{\bar\subsetchoicesmall}}(\learner,\supersamplesmall_{\subsetchoicesmall},\randomness)}
\newcommand{\testrisksmall}{L_{\supersamplesmall_{\bar\subsetchoicesmall}}(\learner,\supersamplesmall_{\subsetchoicesmall},\randomnesssmall)}
\newcommand{\sampleriskcustom}[3]{L_{#1}(\learner,#2,#3)}
\newcommand{\lambdatrainrisk}{\hat\lambda_{\subsetchoice_\randomsubsetsmall}}
\newcommand{\lambdatestrisk}{\hat\lambda_{\bar\subsetchoice_\randomsubsetsmall}}
\newcommand{\lambdasuperrisk}{\hat\lambda_{\randomsubsetsmall}}
\newcommand{\lambdatrainriskus}{\hat\lambda_{\subsetchoicesmall_\randomsubsetsmall}}
\newcommand{\lambdatestriskus}{\hat\lambda_{\bar\subsetchoicesmall_\randomsubsetsmall}}
\newcommand{\lambdatrainriskp}{\hat\lambda_{\subsetchoice'_\randomsubsetsmall}}
\newcommand{\lambdatrainriskpn}{\hat\lambda_{\subsetchoice'}}
\newcommand{\lambdatrainriskpns}{\hat\lambda_{\subsetchoicesmall'}}
\newcommand{\lambdatrainriskpU}{\hat\lambda_{\subsetchoice'_\randomsubset}}
\newcommand{\lambdatestriskp}{\hat\lambda_{\bar\subsetchoice'_\randomsubsetsmall}}
\newcommand{\lambdatestriskpn}{\hat\lambda_{\bar\subsetchoice'}}
\newcommand{\lambdatestriskpns}{\hat\lambda_{\bar\subsetchoicesmall'}}
\newcommand{\lambdatestriskpU}{\hat\lambda_{\bar\subsetchoice'_\randomsubset}}
\newcommand{\tautrainrisk}{\hat\tau_{\subsetchoice_\randomsubsetsmall}}
\newcommand{\tautestrisk}{\hat\tau_{\bar\subsetchoice_\randomsubsetsmall}}
\newcommand{\tautrainriskp}{\hat\tau_{\subsetchoice'_\randomsubsetsmall}}
\newcommand{\tautestriskp}{\hat\tau_{\bar\subsetchoice'_\randomsubsetsmall}}
\newcommand{\trainingdatasmall}{\supersamplesmall_{\subsetchoicesmall}}
\newcommand{\ellfull}{\ell(\learner(\trainingdata,\randomness),\supersample)}
\newcommand{\lCMIU}{I(\ell(\learner(\trainingdata,\randomness),\supersample_\randomsubset);\subsetchoice_\randomsubset\vert \supersample,\randomsubset)}
\newcommand{\lCMIi}{I(\ell(\learner(\trainingdata,\randomness),\supersample_i);\subsetchoice_i\vert \supersample)}
\newcommand{\lCMI}{I(\ell(\learner(\trainingdata,\randomness),\supersample);\subsetchoice\vert \supersample)}
\newcommand{\fpreds}{f(\learner(\trainingdata,\randomness),\supersample)}
\newcommand{\fCMI}{I(f(\learner(\trainingdata,\randomness),\supersample);\subsetchoice\vert \supersample)}
\newcommand{\lCMIidisZ}{I^{\supersample\!}(\ell(\learner(\trainingdata,\randomness),\supersample_i);\subsetchoice_i)}
\newcommand{\lCMIidisz}{I^{\supersamplesmall\!}(\ell(\learner(\trainingdatazS,\randomness),\supersamplesmall_i);\subsetchoice_i)}
\newcommand{\lCMIUdisZ}{I^{\supersample\!}(\ell(\learner(\trainingdata,\randomness),\supersample_\randomsubset);\subsetchoice_\randomsubset\vert \randomsubset)}
\newcommand{\binrelent}[2]{d(#1\,||\,#2)}
\newcommand{\binrelentlr}[2]{d\lefto(#1\,||\,#2\righto)}
\newcommand{\binrelentg}[2]{d_\gamma(#1\,||\,#2)}
\title{A New Family of Generalization Bounds
\\Using 
Samplewise Evaluated CMI}
\author{%
  Fredrik Hellstr\"om \\
  Chalmers University of Technology\\
  Gothenburg, Sweden \\
  \texttt{frehells@chalmers.se} \\
  \And
  Giuseppe Durisi \\
  Chalmers University of Technology\\
  Gothenburg, Sweden \\
  \texttt{durisi@chalmers.se} \\
}
\begin{document}

\maketitle

\begin{abstract}
We present a new family of information-theoretic generalization bounds, in which the training loss and the population loss are compared through a jointly convex function.
This function is upper-bounded in terms of the disintegrated, samplewise, evaluated conditional mutual information (CMI), an information measure that depends on the losses incurred by the selected hypothesis, rather than on the hypothesis itself, as is common in probably approximately correct (PAC)-Bayesian results.
We demonstrate the generality of this framework by recovering and extending previously known information-theoretic bounds.
Furthermore, using the evaluated CMI, we derive a samplewise, average version of Seeger's PAC-Bayesian bound, where the convex function is the binary KL divergence.
In some scenarios, this novel bound results in a tighter characterization of the population loss of deep neural networks than previous bounds.
Finally, we derive high-probability versions of some of these average bounds.
We demonstrate the unifying nature of the evaluated CMI bounds by using them to recover average and high-probability generalization bounds for multiclass classification with finite Natarajan dimension.
\end{abstract}

\section{Introduction}
\label{introduction}

Information-theoretic generalization bounds, i.e., generalization bounds that are expressed in terms of information-theoretic metrics such as the Kullback-Leibler (KL) divergence, mutual information, and conditional mutual information (CMI), have emerged as useful tools to obtain an accurate characterization of the performance of deep neural networks.
To obtain such bounds, one compares a \textit{posterior}, that is, the distribution over the hypotheses induced by the learning algorithm, to a reference distribution called the \textit{prior}, an approach first introduced to provide probably approximately correct (PAC) bounds for Bayesian classifiers~\cite{shawetaylor-97a, mcallester-98a,guedj-19a,alquier-21a}.
The connection between these results and classic information-theoretic metrics was clarified in~\cite{russo-16a,xu-17a}, where the generalization gap, averaged with respect to the joint posterior and data distribution, is bounded in terms of the mutual information between the training data and the hypothesis.
These bounds have since been extended in several ways~\cite{asadi-18a,hafezkolahi-20a,rodriguezgalvez-20a,bu-20a,negrea-19a,haghifam-20a,hellstrom-21a,zhou-21a,neu-21a}.

A major step was taken in~\cite{steinke-20a}, in which a setting where the training set is randomly selected from a larger supersample is considered.
We refer to this setup as the CMI setting, as it leads to generalization bounds in terms of the CMI between the hypothesis and training set selection, given the supersample.
It turns out that these bounds can be further tightened by observing that the selected hypothesis enters the derivation only through the loss that it induces on the supersample.
Using this observation,~\cite{steinke-20a} also derives bounds in terms of the information encoded in these losses rather than in the hypothesis itself, a quantity referred to as the \textit{evaluated CMI} (e-CMI).
Due to the data-processing inequality, these bounds are always tighter than the regular CMI bounds.
This observation was recently further exploited by~\cite{harutyunyan-21a}, which derived samplewise versions of these bounds.\footnote{While~\cite{harutyunyan-21a} considers the information stored in predictions  rather than losses, referred to as the~$f$-CMI, the derivations therein can be adapted to obtain bounds that depend on the losses, as we clarify in Section~\ref{sec:deriving_bounds_subgauss}.}
Intriguingly, these bounds are both easier to evaluate than their hypothesis-based counterparts, due to the lower dimensionality of the random variables involved, and quantitatively tighter for deep neural networks.
In particular, while bounds involving information measures based on the hypothesis space tend to increase
as training progresses \cite{negrea-19a,hellstrom-21b}, the bound of~\cite{harutyunyan-21a} remains stable.

The results that have so far been derived for the CMI setting pertain only to the (weighted) difference between population loss and training loss, or to its squared or absolute value \cite{hafezkolahi-20a,rodriguezgalvez-20a,zhou-21a,steinke-20a,harutyunyan-21a,hellstrom-21b}.
In the PAC-Bayesian literature, other types of discrepancy measures have been considered, and shown to result in tighter bounds on the population loss.
For example, \cite{langford-01a,seeger-02a,langford-02b,maurer-04a} consider the binary KL divergence (i.e., the KL divergence between two Bernoulli distributions) with parameters given by the training and population loss, respectively, while~\cite{germain-09a,begin-14a} consider arbitrary jointly convex functions. Finally,~\cite{rivasplata-20a} allows for arbitrary functions.
It should be noted that in all of these results, a moment-generating function that depends on the selected function has to be controlled for the bound to be computable.

Recently, the e-CMI framework was proven to be expressive enough to allow one to rederive known results in learning theory, e.g., generalization bounds expressed in terms of algorithmic stability, VC dimension, and related complexity measures~\cite{harutyunyan-21a,haghifam-21a}.
Tightening and extending e-CMI bounds, which is the main objective of this paper, has the potential to further increase the unifying nature of the e-CMI framework.

\paragraph{Contributions} Leveraging a basic inequality involving a generic convex function of two random variables~(Lemma~\ref{lem:main-inequality-generic-xy}), we establish several novel disintegrated, samplewise, e-CMI bounds on the average generalization error.
Specifically, we present \begin{inparaenum}[i)]
    \item a square-root bound (Theorem~\ref{thm:subgauss-results}) on the generalization error, together with a mean-squared error extension, which tightens the bound recently reported in~\cite{harutyunyan-21a};
    \item a linear bound (Theorem~\ref{thm:steinke-results}) that tightens the bound given in~\cite{hellstrom-21b};
    \item a binary KL bound (Theorem~\ref{thm:samplewise-maurer}), which is a natural extension to the e-CMI setting of a well-known bound in the PAC-Bayes literature~\cite{maurer-04a}.
    \end{inparaenum}
While the derivation of the first two bounds involves an adaptation of results available in the literature, to obtain the binary KL bound we need a novel concentration inequality involving independent but not identically distributed random variables (Lemma~\ref{lem:mcallester-concentration}).
As an additional contribution, we show how to adapt the techniques presented in the paper to obtain high-probability (rather than average) e-CMI bounds (Theorem~\ref{thm:tail-bound-main-text}).
Furthermore, we illustrate the expressiveness of the e-CMI framework by using our bounds to recover average and high-probability generalization bounds for multiclass classification with finite Natarajan dimension (Theorem~\ref{thm:info-measure-bounds}).
Finally, we conduct numerical experiments on MNIST and CIFAR10, which reveal that the binary KL bound results in a tighter characterization of the population loss compared to the square-root bound and linear bound for some deep learning scenarios.

\subsection*{Preliminaries and Notation}\label{sec:notation}
We let~$\relent{P}{Q}$ denote the KL divergence between the two probability measures~$P$ and~$Q$.
This quantity is well-defined if~$P$ is absolutely continuous with respect to~$Q$.
When~$P$ and~$Q$ are Bernoulli distributions with parameters~$p$ and~$q$ respectively, we let~$\relent{P}{Q}=d(p\,||\,q)=p\log(p/q)+(1-p)\log((1-p)/(1-q))$, and we refer to~$d(p\,||\,q)$ as the binary KL divergence.
For two random variables~$X$ and~$Y$ with joint distribution~$P_{XY}$ and respective marginals~$P_X$ and~$P_Y$, we let~$I(X;Y)=\relent{P_{XY}}{P_XP_Y}$ denote their mutual information.
Throughout the paper, we use uppercase letters to denote random variables and lowercase letters to denote their realizations.
We denote the conditional joint distribution of~$X$ and~$Y$ given an instance~$Z=z$ by~$P_{XY\vert Z=z}$ and the corresponding conditional distribution for the product of marginals by~$P_{X\vert Z=z}P_{Y\vert Z=z}$. Furthermore, we let~$I^{z\!}(X;Y)=\relent{P_{XY\vert Z=z}}{P_{X\vert Z=z}P_{Y\vert Z=z}}$, which is referred to as the \textit{disintegrated} mutual information~\cite{negrea-19a}.
Its expectation is the conditional mutual information~$I(X;Y\vert Z)=\Ex{Z}{I^{Z\!}(X;Y)}$.

\paragraph{CMI Setting}
Let~$\mathcal Z$ be the sample space and let~$\datadistro$ denote the data-generating distribution.
Consider a supersample~$\supersample\in \mathcal{Z}^{n\times 2}$, where each entry is generated independently from~$\datadistro$.
For convenience, we index the columns of~$\supersample$ starting from~$0$ and the rows starting from~$1$.
Furthermore, we denote the~$i$th row of~$\supersample$ as~$\supersample_i$.
Let~$\subsetchoice=(\subsetchoice_1,\dots,\subsetchoice_n)$ denote a membership vector, with entries generated independently according to a~$\mathrm{Bern}(1/2)$ distribution, independently of~$\supersample$.
Let~$\bar\subsetchoice=(1-\subsetchoice_1,\dots,1-\subsetchoice_n)$ denote the modulo-2 complement of~$\subsetchoice$.
We refer to~$\subsetchoice$ as a membership vector because it is used to divide the supersample into an~$n$-dimensional training vector~$\trainingdata$ with entries~$[\trainingdata]_i=\supersample_{i,\subsetchoice_i}$ and an~$n$-dimensional test vector~$\testdata$ with entries~$[\testdata]_i=\supersample_{i,\bar\subsetchoice_i}$.
To be able to handle arbitrary learning settings, we consider learning algorithms as maps~$\learner: \mathcal{Z}^n\times \mathcal{R}\rightarrow \mathcal{F}$, where~$R\in \mathcal{R}$ is a random variable (independent of~$\supersample$ and~$\subsetchoice$) that captures the stochasticity of the algorithm and~$\mathcal{F}$ is a space, for instance, a parameter space or function space.
For a fixed~$\randomness=\randomnesssmall$ and~$\trainingdata=\trainingdatasmall$,~$\learner(\trainingdatasmall,\randomnesssmall)$ is a deterministic function of~$\trainingdatasmall$.
The quality of the learning algorithm's output is evaluated using a bounded loss function~$\ell:\mathcal F\times \mathcal Z\rightarrow [0,1]$.
We let~$\randomsubset$ be a vector of size~$m$, the elements of which are sampled without replacement uniformly at random from~$(1,\dots,n)$.
For a given realization~$\randomsubset=\randomsubsetsmall=(\randomsubsetsmall_1,\dots,\randomsubsetsmall_m)$, we let~$\supersamplesmall_\randomsubsetsmall$ denote the~$m\times 2$ matrix obtained by stacking the vectors~$\supersamplesmall_{u_i}$ for~$i=1,\dots,m$.
Similarly,~$\ell(\learner(\supersamplesmall_\subsetchoicesmall,R),\supersamplesmall_\randomsubsetsmall)$ denotes the~$m \times 2$ matrix of losses obtained by applying~$\ell(\learner(\supersamplesmall_\subsetchoicesmall,R),\cdot)$ elementwise to~$\supersamplesmall_\randomsubsetsmall$.
We denote the population loss as~$\poprisksmall=\Ex{Z'}{\ell(\learner(\trainingdatasmall,\randomnesssmall),Z')}$, where~$Z'\distas \mathcal{D}$. The training loss is~$\trainrisksmall=\frac1n\sum_{i=1}^n\ell(\learner(\trainingdatasmall,\randomnesssmall),[\trainingdatasmall]_i)$.
In general, for any~$p$-dimensional vector of data points~$\hat z$, we let~$\sampleriskcustom{\hat z}{\trainingdatasmall}{\randomnesssmall}=\frac1{p}\sum_{i=1}^p\ell(\learner(\trainingdatasmall,\randomnesssmall),\hat z_i)$.
Since~$\learner(\trainingdatazs,\randomnesssmall)$ does not depend on~$\testdatazs$,~$\testrisksmall$ is a test loss.
Finally, we let~$\popriskavgshort=\Exop_{\supersample,\subsetchoice,\randomness}[\poprisk]$ denote the average population loss and~$\trainriskavgshort=\Exop_{\supersample,\subsetchoice,\randomness}[\trainrisk]$ denote the average training loss.

\section{Average Generalization Bounds}\label{sec:average_bounds}
In this section, we present the main results of this paper: a new family of disintegrated samplewise e-CMI bounds for the average population loss.
High-probability versions of these bounds are given in Section~\ref{sec:high-probability-bounds}.

\subsection{Main Lemma}\label{sec:main_lemma}
We now present the generic inequality upon which the bounds in this section are based. %
This inequality, which is similar in nature to the ones provided in~\cite{germain-09a} and~\cite{rivasplata-20a}, gives us a generic framework to derive generalization bounds, as it allows for a wide choice of functions for comparing training loss and population loss.
A crucial difference compared to \cite{germain-09a} and \cite{rivasplata-20a} is that our focus in this section is on average rather than PAC-Bayesian generalization bounds, and on bounds based on the disintegrated, samplewise e-CMI, rather than traditional KL-based bounds.

\begin{lem}\label{lem:main-inequality-generic-xy}
Let~$f_\gamma:[0,1]^2\rightarrow \reals$ be a function that is jointly convex in its arguments and is parameterized by~$\gamma$.
Let~$X$ and~$Y$ be two random variables, and let~$Y'$ be a random variable with the same marginal distribution as~$Y$ such that~$Y'$ and~$X$ are independent.
Assume that the joint distribution of~$X,Y$ is absolutely continuous with respect to the joint distribution of~$X,Y'$.
Let~$g_1(X,Y)$ and~$g_2(X,Y)$ be measurable functions with range~$[0,1]$ and finite first moments, such that, for all~$\gamma$,~$\Ex{X,Y}{f_\gamma(g_1(X,Y),g_2(X,Y))}$ is finite.
Let
\begin{equation}\label{eq:xigamma-def}
\xi_\gamma = \log \Ex{X,Y'}{e^{f_\gamma\lefto(g_1(X,Y'), g_2(X,Y') \righto)}}
\end{equation}
and assume that~$\xi_\gamma$ is finite. Then,
\begin{align}
\sup_\gamma f_\gamma\lefto(\Ex{X,Y}{g_1(X,Y)}, \Ex{X,Y}{g_2(X,Y)}\righto) - \xi_\gamma &\leq 
\sup_\gamma \Ex{X,Y}{f_\gamma\lefto(g_1(X,Y), g_2(X,Y)\righto)} - \xi_\gamma \nonumber
\\
&\leq I(X;Y). \label{eq:main-inequality}
\end{align}
\end{lem}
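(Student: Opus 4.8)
The plan is to split the displayed chain into its two inequalities and handle them separately: the left inequality is a one-line application of Jensen's inequality, while the right inequality is an instance of the Donsker--Varadhan variational representation of the KL divergence.

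For the left inequality I would fix $\gamma$ and use the joint convexity of $f_\gamma$. Since $g_1,g_2$ have range $[0,1]$ and finite first moments, Jensen's inequality gives $f_\gamma\lefto(\Ex{X,Y}{g_1(X,Y)},\Ex{X,Y}{g_2(X,Y)}\righto)\leq\Ex{X,Y}{f_\gamma(g_1(X,Y),g_2(X,Y))}$. Subtracting the ($\gamma$-dependent but $(X,Y)$-free) constant $\xi_\gamma$ from both sides preserves the inequality for every $\gamma$, and since the pointwise inequality survives taking suprema, the left inequality follows at once.

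The substance of the lemma is the right inequality. I would fix $\gamma$ and abbreviate $h(x,y)=f_\gamma(g_1(x,y),g_2(x,y))$, a measurable real-valued function. The key observation is that, because $Y'$ has the same marginal as $Y$ and is independent of $X$, the pair $(X,Y')$ is distributed according to the product of marginals $P_XP_Y$; hence $\xi_\gamma=\log\Ex{X,Y'}{e^{h(X,Y')}}$ is precisely $\log\mathbb{E}_{P_XP_Y}[e^{h}]$, while $\Ex{X,Y}{h(X,Y)}$ is $\mathbb{E}_{P_{XY}}[h]$. What remains is therefore the single inequality $\mathbb{E}_{P_{XY}}[h]-\log\mathbb{E}_{P_XP_Y}[e^{h}]\leq\relent{P_{XY}}{P_XP_Y}=I(X;Y)$, which is exactly the Donsker--Varadhan bound. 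I would prove it by a change of measure: the absolute-continuity hypothesis guarantees the Radon--Nikodym derivative $\frac{dP_{XY}}{dP_XP_Y}$ exists, so I can write $\mathbb{E}_{P_{XY}}[h]-I(X;Y)=\mathbb{E}_{P_{XY}}\lefto[\log\lefto(e^{h}\,\frac{dP_XP_Y}{dP_{XY}}\righto)\righto]$ and apply Jensen's inequality to the concave logarithm, which collapses the expectation into $\log\mathbb{E}_{P_XP_Y}[e^{h}]$, up to a restriction to the support of $P_{XY}$ that can only decrease the right-hand side. Equivalently, one can introduce the Gibbs measure $dP_h\propto e^{h}\,dP_XP_Y$ and identify the gap with the nonnegative divergence $\relent{P_{XY}}{P_h}$. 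Taking the supremum over $\gamma$ then finishes the right inequality, since every term is dominated by the $\gamma$-free quantity $I(X;Y)$.

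The main difficulty is one of regularity rather than of ideas: I must verify that the integrals appearing in the Donsker--Varadhan step are well defined and that the change of measure is legitimate, in particular when passing from $\mathbb{E}_{P_{XY}}$ to $\mathbb{E}_{P_XP_Y}$ inside the logarithm and accounting for the support of $P_{XY}$. This is precisely what the standing assumptions provide---finiteness of $\Ex{X,Y}{f_\gamma(g_1(X,Y),g_2(X,Y))}$ and of $\xi_\gamma$, together with absolute continuity of $P_{XY}$ with respect to $P_XP_Y$---so these conditions are exactly the hypotheses needed to make the variational argument rigorous.
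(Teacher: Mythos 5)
Your proof is correct and takes essentially the same route as the paper's: Jensen's inequality via the joint convexity of $f_\gamma$ for the left inequality, and the Donsker--Varadhan variational representation of the KL divergence (applied after identifying the law of $(X,Y')$ with $P_XP_Y$) for the right one, followed by taking the supremum over $\gamma$. The only difference is that you also sketch a proof of Donsker--Varadhan itself via change of measure, which the paper simply cites.
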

The proof, which is an application of Jensen's inequality and Donsker-Varadhan's variational representation of the KL divergence, is given in Appendix~\ref{app:proofs}, along with the proofs of all other results provided in this paper.

Note that we are allowed to optimize~$\gamma$ because, in this section, we only consider average bounds.
In contrast, when we derive high-probability bounds in Section~\ref{sec:high-probability-bounds}, we need to fix~$\gamma$. Optimizing~$\gamma$ over a set of candidate values in the high-probability scenario incurs a union bound cost~\cite[Sec. 1.2.2]{catoni-07a}.

To apply Lemma~\ref{lem:main-inequality-generic-xy}, we need to identify functions~$f_\gamma(\cdot,\cdot)$ for which the moment generating function in~\eqref{eq:xigamma-def} can be controlled.
A theme throughout this section is that we identify functions~$f_\gamma(\cdot,\cdot)$ for which there exist concentration results that imply that~$\xi_\gamma\leq 0$. This allows us to loosen the bound in~\eqref{eq:main-inequality} by discarding~$\xi_\gamma$.

Throughout this section, we assume that the loss is bounded, so that~$\ell(\cdot,\cdot)\in[0,1]$. Note that the reported results can be extended to more general losses by use of scaling.
In particular, assume that there is a function~$K:\mathcal F\rightarrow \reals_+$ such that, for all~$f\in \mathcal F$,~$\sup_z\ell(f,z)\leq K(f)$ (referred to as the hypothesis-dependent range condition in~\cite{haddouche-21a}).
Then, all of the results that we present for bounded losses also hold for the scaled loss~$\ell(f,z)/K(f)$.

\subsection{Extending ($f$)-CMI Bounds to e-CMI}\label{sec:deriving_bounds_subgauss}
We now apply Lemma~\ref{lem:main-inequality-generic-xy} to recover the average bounds of~\cite{harutyunyan-21a,hellstrom-21b}.
These works derive bounds using the information contained either in the hypothesis itself or the resulting predictions, i.e., the CMI or the~$f$-CMI.
We instead derive bounds in terms of the information captured by the matrix of losses, i.e., the e-CMI.
For parametric supervised learning algorithms, we can recover the original bounds via the data-processing inequality.
This is formalized in the following remark.

\begin{rem}\label{rem:data-processing}
Consider a parametric supervised learning setting, where~$\mathcal{Z}=\mathcal X\times\mathcal Y$ and ~$\learner(\trainingdatazS,\randomness)=W\in \mathcal F$ are the parameters of a function~$\phi_W:\mathcal X\rightarrow \mathcal Y$. 
Let~$\supersamplesmallx_\randomsubsetsmall$ denote the~$m\times 2$ matrix obtained by projecting each element of~$\supersamplesmall_\randomsubsetsmall$ onto~$\mathcal X$, i.e., the matrix of unlabeled examples.
Let~$\phi_W(\supersamplesmallx_\randomsubsetsmall)$ denote the matrix of predictions obtained by elementwise application of~$\phi_W$ to~$\supersamplesmallx_\randomsubsetsmall$. Then, for any fixed~$\supersamplesmall$ and~$\randomsubsetsmall$,
\begin{align}
    I^{\supersamplesmall,u}(\ell(W,\supersamplesmall_\randomsubsetsmall);\subsetchoice_\randomsubsetsmall) &\leq I^{\supersamplesmall,u}(\phi_W(\supersamplesmallx_\randomsubsetsmall);\subsetchoice_\randomsubsetsmall)\leq I^{\supersamplesmall,u}(W;\subsetchoice_\randomsubsetsmall).
\end{align}
\end{rem}

We now use Lemma~\ref{lem:main-inequality-generic-xy} to recover some of the results reported in~\cite{harutyunyan-21a}.
Let~$\Delta=\trainriskzSRpi-\testriskzSRpi$, where~$\subsetchoice'_i$ is an independent copy of~$\subsetchoice_i$.
For a fixed~$\supersamplesmall$, symmetry implies that~$\Ex{\subsetchoice'_i}{\Delta}=0$.
Furthermore, since~$\ell(\cdot,\cdot)\in[0,1]$, we have that~$\Delta\in[-1,1]$. 
Thus,~$\Delta$ is~$1$-sub-Gaussian.
By using properties of sub-Gaussian random variables, we can control~$\xi_\gamma$ in Lemma~\ref{lem:main-inequality-generic-xy} when choosing~$f_\gamma(\cdot,\cdot)$ as~$\gamma\Delta$ or~$\gamma\Delta^2$.
The resulting bounds are given in the following theorem.

\begin{thm}[Square-root bound and squared bound]\label{thm:subgauss-results}
Consider the CMI setting. Then,
\begin{align}
\abs{\gengapavgshort} \label{eq:absolute-max-bound-disintegrated}
&\leq \frac1n\sum_{i=1}^n\Ex{\supersample }{ \sqrt{2 I^{\supersample\!}(\ell(\learner(\trainingdata,\randomness),\supersample_i);\subsetchoice_i) } } \\\label{eq:absolute-max-bound-integrated}
&\leq \frac1n\sum_{i=1}^n\sqrt{2\lCMIi} .
\end{align}
Furthermore,
\begin{equation}\label{eq:squared-max-bound}
\Ex{\supersample,\randomness,\subsetchoice}{\!\lefto(\! \poprisk\!-\!\trainrisk \!\righto)^2} 
\leq \frac{8}{m}\!\left(\!\lCMIU\!+\!2 \right)\!.\!\!
\end{equation}
\end{thm}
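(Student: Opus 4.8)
The plan is to instantiate Lemma~\ref{lem:main-inequality-generic-xy} with the loss pair on a row (or on a random subsample of rows) in the role of $X$ and the corresponding membership bit(s) in the role of $Y$, taking $f_\gamma$ linear for the square-root bound and quadratic for the squared bound, and discarding $\xi_\gamma$ through the sub-Gaussian control announced in Section~\ref{sec:deriving_bounds_subgauss}.

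For~\eqref{eq:absolute-max-bound-disintegrated} I would fix a row index $i$, condition on $\supersample=\supersamplesmall$, and apply the lemma with $X=\ell(\learner(\trainingdata,\randomness),\supersample_i)$, $Y=\subsetchoice_i$, and $Y'=\subsetchoice'_i$ an independent copy, with $g_2$ returning the training-column loss of the pair and $g_1$ the test-column loss. With the linear choice $f_\gamma(a,b)=\gamma(b-a)$, the leftmost term of~\eqref{eq:main-inequality} is $\gamma$ times the conditional per-sample gap $t=\Ex{\subsetchoice,\randomness}{g_2-g_1}$, whereas $\xi_\gamma=\log\Ex{X,\subsetchoice'_i}{e^{\gamma\Delta}}$ with $\Delta=\trainriskzSRpi-\testriskzSRpi$. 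Since $\Delta$ is mean-zero and lies in $[-1,1]$, hence is $1$-sub-Gaussian (as already noted), I get $\xi_\gamma\le\gamma^2/2$, so the lemma gives $\gamma t-\gamma^2/2\le\lCMIidisZ$ for every $\gamma$. Optimizing over $\gamma$ yields $t^2/2\le\lCMIidisZ$, i.e.\ $\abs{t}\le\sqrt{2\lCMIidisZ}$. Averaging over $i$, taking $\Ex{\supersample}{\cdot}$, and using the triangle inequality---each test-column loss integrates to the population loss and each training-column loss to the training loss---gives~\eqref{eq:absolute-max-bound-disintegrated}; Jensen's inequality for the concave square root then gives~\eqref{eq:absolute-max-bound-integrated}.

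For~\eqref{eq:squared-max-bound} I would first split $(\poprisk-\trainrisk)^2\le 2(\poprisk-L^{\mathrm{test}})^2+2(L^{\mathrm{test}}-\trainrisk)^2$, where $L^{\mathrm{test}}=\frac1n\sum_{i=1}^n\ell(\learner(\trainingdata,\randomness),\supersample_{i,\bar\subsetchoice_i})$ is the supersample test loss. Conditioned on $\learner(\trainingdata,\randomness)$ and $\subsetchoice$, the quantity $L^{\mathrm{test}}$ is an average of $n$ i.i.d.\ $[0,1]$-valued losses with mean $\poprisk$, so its expected squared deviation is at most $1/(4n)$. For the second term, the subsampled gap over the rows indexed by $\randomsubset$ is, conditionally on $(\supersample,\subsetchoice,\randomness)$, an unbiased estimator of $L^{\mathrm{test}}-\trainrisk$, so conditional Jensen bounds $\Ex{}{(L^{\mathrm{test}}-\trainrisk)^2}$ by the expected square of the subsampled gap. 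I would then apply the lemma conditioned on $\supersample=\supersamplesmall$ and $\randomsubset=\randomsubsetsmall$, with $X$ the $m\times2$ loss matrix on the selected rows, $Y=\subsetchoice_{\randomsubsetsmall}$, and the jointly convex quadratic $f_\gamma(a,b)=\gamma(a-b)^2$; here the relevant object is the middle term $\Ex{}{f_\gamma(g_1,g_2)}=\gamma\,\Ex{}{(\text{subsampled gap})^2}$.

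The main obstacle is controlling $\xi_\gamma$ in the quadratic case. Under the fresh bits, the subsampled gap $W$ is an average of $m$ independent, mean-zero, $[-1,1]$-valued terms, hence sub-Gaussian with variance proxy $1/m$; the Gaussian-smoothing identity $e^{\gamma W^2}=\Ex{G}{e^{\sqrt{2\gamma}\,WG}}$ with $G\sim\mathcal N(0,1)$ then yields $\Ex{}{e^{\gamma W^2}}\le(1-2\gamma/m)^{-1/2}$ for $\gamma<m/2$, so $\gamma=m/4$ gives $\xi_\gamma\le\tfrac12\log 2$. Rearranging the lemma at this fixed $\gamma$ bounds the conditional $\Ex{}{(\text{subsampled gap})^2}$ by $\tfrac4m$ times the corresponding disintegrated information term plus $\tfrac12\log 2$; integrating over $\supersample$ and $\randomsubset$ turns the information term into $\lCMIU$, and folding the remaining additive slack (the $1/(4n)\le 1/(4m)$ term together with the $\log 2$ constants, using $m\le n$) into the crude $\tfrac8m(\lCMIU+2)$ completes the argument. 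I expect the squared sub-Gaussian moment bound and the final bookkeeping of constants to be the only delicate points; the remainder is a direct instantiation of Lemma~\ref{lem:main-inequality-generic-xy}.
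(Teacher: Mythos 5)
Your proof is correct and follows the paper's strategy: both parts instantiate Lemma~\ref{lem:main-inequality-generic-xy} with the (row-wise, resp.\ subsampled) loss matrix as $X$ and the membership bits as $Y$, take $f_\gamma$ linear (resp.\ quadratic), and exploit the mean-zero, $1/\sqrt{m}$-sub-Gaussian character of the gap under the fresh bits; your treatment of \eqref{eq:absolute-max-bound-disintegrated}--\eqref{eq:absolute-max-bound-integrated} is essentially identical to the paper's (which works with a general subset of size $m$ and then sets $m=1$). For \eqref{eq:squared-max-bound} the paper imports two results from~\cite{harutyunyan-21a}: the moment bound $\log\Ex{}{e^{\gamma\Delta^2}}\leq\log(1+8\gamma/m)$, which gives $\xi_{m/4}\leq\log 3$, and the decomposition of the population-minus-training square into twice the subsampled test-train square plus $1/(2m)$. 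You instead derive both ingredients from scratch: the Gaussian-smoothing identity yields the sharper $\Ex{}{e^{\gamma W^2}}\leq(1-2\gamma/m)^{-1/2}$, hence $\xi_{m/4}\leq\tfrac12\log 2$, and your variance-plus-conditional-Jensen argument reproduces the cited decomposition with $1/(2n)$ in place of $1/(2m)$. Both substitutions are valid and give slightly better constants, which are comfortably absorbed into the stated $\tfrac{8}{m}(\cdot+2)$, so the argument goes through as written.
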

In~\eqref{eq:absolute-max-bound-disintegrated}, the expectation over~$\supersample$ is taken outside of the square root, and the generalization bound is in terms of the disintegrated mutual information.
By Jensen's inequality, this is tighter than~\eqref{eq:absolute-max-bound-integrated}.
As shown in Appendix~\ref{app:proofs}, one can further generalize \eqref{eq:absolute-max-bound-disintegrated} to obtain an upper bound of the form
\begin{align}\label{eq:subset-sqrt-gen-bound}
\Ex{\supersample,\randomsubset }{ \sqrt{2 I^{\supersample,\randomsubset}(\ell(\learner(\trainingdata,\randomness),\supersample_\randomsubset);\subsetchoice_\randomsubset) } }.
\end{align}
However, since~\eqref{eq:subset-sqrt-gen-bound} increases with the size~$m$ of the random subset~$\randomsubset$~\cite[Prop.~1]{harutyunyan-21a}, we focus only on the case~$m=1$, as this leads to the tightest bound.
The same holds for \eqref{eq:absolute-max-bound-integrated}, as well as for all of the bounds that we present in the remainder of this section, where we also focus only on~$m=1$.
In contrast, the choice of~$m=1$ in the squared bound in~\eqref{eq:squared-max-bound} is suboptimal and actually yields a vacuous bound due to the~$2/m$ term.

It is possible to obtain a bound similar to~\eqref{eq:absolute-max-bound-disintegrated}, but where an expectation over~$R$ is taken outside the square root and the mutual information is also conditioned on~$R$.
We present this bound in the following theorem.
\begin{thm}[$R$-conditioned square-root bound]\label{thm:disintegrated-with-R-too}
Consider the CMI setting. Then,
\begin{align}
\abs{\gengapavgshort} \label{eq:R-disint-absolute-max-bound-disintegrated}
&\leq \frac1n\sum_{i=1}^n\Ex{\supersample,\randomness }{ \sqrt{2 I^{\supersample,\randomness}(\ell(\learner(\trainingdata,\randomness),\supersample_i);\subsetchoice_i) } } \\\label{eq:R-disint-absolute-max-bound-integrated}
&\leq \frac1n\sum_{i=1}^n\sqrt{2I(\ell(\learner(\trainingdata,\randomness),\supersample_i);\subsetchoice_i\vert \supersample,\randomness)}.
\end{align}
\end{thm}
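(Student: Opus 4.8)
The plan is to follow the proof of Theorem~\ref{thm:subgauss-results} almost verbatim, with the single modification of conditioning on the algorithmic randomness $\randomness$ in addition to the supersample $\supersample$. Fix a sample index $i$ and a realization $(\supersample,\randomness)=(\supersamplesmall,\randomnesssmall)$, and abbreviate the loss pair on row $i$ by $L_{i,j}=\ell(\learner(\trainingdata,\randomnesssmall),\supersamplesmall_{i,j})$ for $j\in\{0,1\}$, so that $L_i=(L_{i,0},L_{i,1})$ is the relevant e-CMI variable. I would instantiate Lemma~\ref{lem:main-inequality-generic-xy}, conditionally on $(\supersamplesmall,\randomnesssmall)$, with $X=L_i$, $Y=\subsetchoice_i$, and $Y'=\subsetchoice'_i$ an independent $\mathrm{Bern}(1/2)$ copy, taking $g_1(L_i,\subsetchoice_i)=L_{i,\subsetchoice_i}$ (the selected, i.e. training, loss on row $i$), $g_2(L_i,\subsetchoice_i)=L_{i,\bar\subsetchoice_i}$ (the complementary, i.e. test, loss), and $f_\gamma(a,b)=\gamma(a-b)$.

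With this choice, the argument of the moment-generating function in~\eqref{eq:xigamma-def} becomes exactly $\Delta=L_{i,\subsetchoice'_i}-L_{i,\bar\subsetchoice'_i}$, the quantity already introduced before Theorem~\ref{thm:subgauss-results}. The key point is that the sub-Gaussian control survives the extra conditioning: for every fixed $(\supersamplesmall,\randomnesssmall)$ and every fixed value of $L_i$, the variable $\Delta$ equals $\pm(L_{i,0}-L_{i,1})$ each with probability $1/2$ over $\subsetchoice'_i$, hence is zero-mean, bounded in $[-1,1]$, and therefore $1$-sub-Gaussian. Taking the remaining expectation over $L_i$ preserves $\Ex{X,Y'}{e^{\gamma\Delta}}\le e^{\gamma^2/2}$, so $\xi_\gamma\le\gamma^2/2$. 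Substituting into~\eqref{eq:main-inequality} and optimizing the freely signed parameter $\gamma$ gives $\sup_\gamma[\gamma G_i-\gamma^2/2]=G_i^2/2\le I^{\supersamplesmall,\randomnesssmall}(L_i;\subsetchoice_i)$, where $G_i=G_i(\supersamplesmall,\randomnesssmall)=\Ex{\subsetchoice}{L_{i,\subsetchoice_i}-L_{i,\bar\subsetchoice_i}\mid\supersamplesmall,\randomnesssmall}$ is the conditional per-sample gap. This yields the pointwise estimate $\abs{G_i(\supersamplesmall,\randomnesssmall)}\le\sqrt{2\,I^{\supersamplesmall,\randomnesssmall}(\ell(\learner(\trainingdata,\randomness),\supersample_i);\subsetchoice_i)}$ for each realization.

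To finish, I would use the standard symmetry fact of the CMI construction that the averaged test loss equals the population loss, so that $\abs{\gengapavgshort}=\abs{\tfrac1n\sum_{i=1}^n\Ex{\supersample,\randomness}{G_i(\supersample,\randomness)}}$. Applying the triangle inequality and then moving the absolute value inside the expectation over $(\supersample,\randomness)$ before invoking the pointwise estimate gives~\eqref{eq:R-disint-absolute-max-bound-disintegrated}. Finally,~\eqref{eq:R-disint-absolute-max-bound-integrated} follows from Jensen's inequality applied to the concave square root, together with the defining identity $\Ex{\supersample,\randomness}{I^{\supersample,\randomness}(L_i;\subsetchoice_i)}=I(\ell(\learner(\trainingdata,\randomness),\supersample_i);\subsetchoice_i\vert\supersample,\randomness)$. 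I do not anticipate a serious obstacle: relative to Theorem~\ref{thm:subgauss-results}, the only substantive check is that the sub-Gaussianity of $\Delta$, and hence the bound $\xi_\gamma\le\gamma^2/2$, still holds after conditioning on $\randomness$. It does, since fixing $\randomness$ merely fixes the hypothesis and thus the loss pair $L_i$, leaving intact the randomization over the independent copy $\subsetchoice'_i$ that is the source of the sub-Gaussianity.
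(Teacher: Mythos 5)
Your proposal is correct and follows essentially the same route as the paper's proof: condition on $(\supersample,\randomness)$, apply Lemma~\ref{lem:main-inequality-generic-xy} with $f_\gamma(a,b)=\gamma(a-b)$, use the $1$-sub-Gaussianity of the loss difference under the independent copy $\subsetchoice_i'$ to get $\xi_\gamma\leq\gamma^2/2$, optimize $\gamma$, and then average with Jensen's inequality. The only cosmetic difference is that the paper carries out the argument for a random subset of size $m$ and then sets $m=1$, whereas you work directly with a single index $i$.
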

A similar bound, but given in terms of mutual information rather than e-CMI, is reported in~\cite[Thm.~2.4]{negrea-19a}.
The randomness of the learning algorithm can reduce the mutual information between its output and the selection variable~$\subsetchoice_i$.
Specifically, one can show that
\begin{equation}
I^{\supersamplesmall}(\ell(\learner(\trainingdatazS,\randomness),\supersamplesmall_i);\subsetchoice_i) \leq   I^{\supersamplesmall}(\ell(\learner(\trainingdatazS,\randomness),\supersamplesmall_i);\subsetchoice_i\vert \randomness)
\end{equation}
which implies that the square-root bound in \eqref{eq:absolute-max-bound-integrated} is tighter than the~$R$-conditioned square-root bound in \eqref{eq:R-disint-absolute-max-bound-integrated}.
However, the ordering between the disintegrated square-root bound in \eqref{eq:absolute-max-bound-disintegrated} and the disintegrated~$R$-conditioned square-root bound in \eqref{eq:R-disint-absolute-max-bound-disintegrated} is unclear.

Next, we generalize the linear bounds of~\cite{hellstrom-21b}, which are samplewise extensions of the bounds in~\cite{steinke-20a}, to the e-CMI framework. 
We consider two scenarios.
For the first one, we assume that~$\gamma=(\gamma_1,\gamma_2)$ are positive constants that satisfy a certain constraint, and let~$f_\gamma(\trainriskavgshort,\popriskavgshort)=\gamma_1 n(\popriskavgshort-\gamma_2 \trainriskavgshort)$.
For the second one, we assume that~$\trainriskavgshort=0$, the so-called \textit{interpolating} setting, and let~$f_\gamma(\trainriskavgshort,\popriskavgshort)=n\log (2)\popriskavgshort$.
For both of these scenarios, it can be shown that~$\xi_\gamma\leq 0$, yielding the following. 

\begin{thm}[Linear bound and interpolation bound]\label{thm:steinke-results}
Consider the CMI setting.
Let~$\Gamma \subset \reals_+^2$ denote the set of parameters~$\gamma=(\gamma_1,\gamma_2)$ that satisfy~$\gamma_1(1-\gamma_2)+(e^{\gamma_1}-1-\gamma_1)(1+\gamma_2^2)\leq 0$. Then,
\begin{equation}\label{eq:steinke-lambdagamma}
\popriskavgshort \leq \min_{\gamma\in\Gamma} \gamma_2\trainriskavgshort + \sum_{i=1}^n\frac{\lCMIi}{\gamma_1n}.
\end{equation}
Furthermore, if~$\trainriskavgshort=0$,
\begin{equation}\label{eq:interpolating-steinke}
\popriskavgshort \leq\sum_{i=1}^n\frac{\lCMIi}{n\log(2)}.
\end{equation}
\end{thm}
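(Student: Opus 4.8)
The plan is to obtain both inequalities as samplewise ($m=1$) instances of Lemma~\ref{lem:main-inequality-generic-xy}, applied conditionally on the supersample. Fixing a row index $i$ and conditioning on $\supersample=\supersamplesmall$, I would write $\ell_b=\ell(\learner(\trainingdata,\randomness),\supersample_{i,b})$ for $b\in\{0,1\}$, take $X=\ell(\learner(\trainingdata,\randomness),\supersample_i)=(\ell_0,\ell_1)$ and $Y=\subsetchoice_i$, and set $g_1(X,Y)=\ell_{\subsetchoice_i}$ (the training-side loss) and $g_2(X,Y)=\ell_{\bar\subsetchoice_i}$ (the test-side loss); these are the two components of $X$ selected by $Y$ and its complement. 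With $Y'=\subsetchoice'_i$ an independent $\mathrm{Bern}(1/2)$ copy, the lemma gives $f_\gamma\lefto(\Ex{\subsetchoice,\randomness}{g_1},\Ex{\subsetchoice,\randomness}{g_2}\righto)\le\lCMIidisZ$ whenever $\xi_\gamma\le0$. Taking $\Ex{\supersample}{\cdot}$, using that $f_\gamma$ is affine on the relevant domain to pass the expectation through, and $\Ex{\supersample}{\lCMIidisZ}=\lCMIi$, then summing over $i$ and dividing by $n$, I identify $\frac1n\sum_i\Ex{\supersample,\subsetchoice,\randomness}{g_1}=\trainriskavgshort$ and $\frac1n\sum_i\Ex{\supersample,\subsetchoice,\randomness}{g_2}=\popriskavgshort$ (the latter because each held-out entry is a fresh draw from $\datadistro$, so the average test loss equals the average population loss). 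Everything then reduces to choosing $f_\gamma$ and verifying $\xi_\gamma\le0$.

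\textbf{Linear bound.} For~\eqref{eq:steinke-lambdagamma} I would use the jointly convex (indeed affine) choice $f_\gamma(a,b)=\gamma_1(b-\gamma_2 a)$. Since $\subsetchoice'_i$ is uniform and independent of $X$, the expectation defining $\xi_\gamma$ symmetrizes, and it suffices to show, for every loss pair $(\ell_0,\ell_1)\in[0,1]^2$, that $\tfrac12 e^{\gamma_1(\ell_1-\gamma_2\ell_0)}+\tfrac12 e^{\gamma_1(\ell_0-\gamma_2\ell_1)}\le1$. I would bound each exponential via $e^{x}\le 1+x+(e^{\gamma_1}-1-\gamma_1)x^2/\gamma_1^2$, valid for $x\le\gamma_1$ (which holds, as $\ell_1-\gamma_2\ell_0\le1$ and $\ell_0-\gamma_2\ell_1\le1$), then discard the cross term $-4\gamma_2\ell_0\ell_1\le0$ and use $\ell_0^2\le\ell_0$, $\ell_1^2\le\ell_1$. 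The resulting increment over $1$ is $\tfrac{\ell_0+\ell_1}{2}\bigl[\gamma_1(1-\gamma_2)+(e^{\gamma_1}-1-\gamma_1)(1+\gamma_2^2)\bigr]$, which is nonpositive exactly when $\gamma\in\Gamma$; hence $\xi_\gamma\le0$ for such $\gamma$, and the aggregation above yields $\gamma_1(\popriskavgshort-\gamma_2\trainriskavgshort)\le\frac1n\sum_i\lCMIi$, i.e. $\popriskavgshort\le\gamma_2\trainriskavgshort+\sum_i\lCMIi/(\gamma_1 n)$, over which I minimize $\gamma\in\Gamma$.

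\textbf{Interpolation bound (and the main obstacle).} The bound~\eqref{eq:interpolating-steinke} requires a separate argument: it is \emph{not} the $\trainriskavgshort=0$ specialization of~\eqref{eq:steinke-lambdagamma}, since maximizing $\gamma_1$ over $\Gamma$ yields a coefficient $1/\gamma_1$ strictly larger than $1/\log2$. Instead I would exploit that $\trainriskavgshort=0$ forces $g_1=\ell_{\subsetchoice_i}=0$ almost surely, so a membership guess $\subsetchoice'_i\neq\subsetchoice_i$ is detectable: it would assign the training role to the genuinely held-out entry, whose loss equals the test loss $L$ and is nonzero, contradicting interpolation. This is precisely the feature that makes $\xi_\gamma\le0$ attainable with target left-hand side $\log(2)\popriskavgshort$, the value of the chosen function at $(\mathrm{train},\mathrm{test})=(0,t)$. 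The step I expect to be the main obstacle is establishing this $\xi_\gamma\le0$: a proper jointly convex $f_\gamma$ on $[0,1]^2$ cannot simultaneously satisfy $f_\gamma(0,t)=\log(2)t$ and drive $e^{f_\gamma}\to0$ at the training-side value $1$ (that would need $f_\gamma(1,0)=-\infty$, impossible for a convex function finite at $0$). I would therefore carry out this case not through Lemma~\ref{lem:main-inequality-generic-xy} with a generic convex function, but directly via the underlying Donsker--Varadhan step, using the consistency-aware change of measure $h(X,\subsetchoice_i)=\log(2)\ell_{\bar\subsetchoice_i}$ on interpolation-consistent configurations ($\ell_{\subsetchoice_i}=0$) and $-\infty$ otherwise. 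Verifying the normalization $\Ex{\subsetchoice'_i}{e^{h(X,\subsetchoice'_i)}}\le1$ then reduces to $2^{L-1}\le1$ for $L\in[0,1]$: the consistent guess contributes $2^{L-1}$, the inconsistent guess contributes $0$ (and both contribute $\tfrac12$ when $L=0$). This gives $\log(2)\Ex{\subsetchoice,\randomness}{g_2}\le\lCMIidisZ$ per sample, and the same expectation over $\supersample$, summation over $i$, and division by $n$ deliver~\eqref{eq:interpolating-steinke}.
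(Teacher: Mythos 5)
Your proposal is correct and follows essentially the same route as the paper: a samplewise ($m=1$) application of Lemma~\ref{lem:main-inequality-generic-xy} conditioned on the supersample, with the linear function $\gamma_1(\mathrm{test}-\gamma_2\,\mathrm{train})$ and the per-row concentration $\tfrac12 e^{\gamma_1(\ell_1-\gamma_2\ell_0)}+\tfrac12 e^{\gamma_1(\ell_0-\gamma_2\ell_1)}\leq 1$ giving $\xi_\gamma\leq 0$, followed by averaging over $\supersample$ and $i$. The only (harmless) differences are that you prove this concentration inequality from scratch via the Bennett-type bound $e^{x}\leq 1+x+(e^{\gamma_1}-1-\gamma_1)x^2/\gamma_1^2$ (the paper imports it from prior work), and that you handle the interpolation case with a degenerate Donsker--Varadhan change of measure taking the value $-\infty$ on inconsistent guesses, which is exactly the pointwise limit of the paper's finite family $f_\gamma=\log 2\,(\mathrm{test}-\gamma\,\mathrm{train})$ as $\gamma\to\infty$.
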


The interpolation bound in~\eqref{eq:interpolating-steinke} improves on the linear bound in~\eqref{eq:steinke-lambdagamma} when~$\trainriskavgshort = 0$, as the constraint implies that~$\gamma_1^2-4(e^{\gamma_1}-1)(e^{\gamma_1}-1-\gamma_1)\geq 0$. This means that~$\gamma_1< 0.37$, whereas~$\log 2> 0.69$.

\subsection{Binary KL Bound with Samplewise e-CMI}\label{sec:kl-bound}
We now derive bounds in terms of the binary KL divergence between the training loss and the test loss.
To this end, similar to~\cite{catoni-07a,mcallester-13a}, we define
\begin{equation}
d_\gamma(q\,||\,p) = \gamma q - \log (1-p+pe^\gamma).
\end{equation}
An important property of this function is that
\begin{align}\label{eq:dgamma-sup}
\sup_\gamma d_\gamma(q\,||\,p) &=  d(q\,||\,p).
\end{align}
Note that both~$\binrelentg{\cdot}{\cdot}$ and~$\binrelent{\cdot}{\cdot}$ are jointly convex in their arguments.
For our next result, we need the following lemma.
\begin{lem}\label{lem:mcallester-concentration}
For~$i=1,\dots,n$, let~$X_i\distas P_{X_i}$, $\Ex{}{X_i}=\mu_i$, $\hat \mu = \frac1n \sum_{i=1}^n X_i$, and~$\bar \mu=\frac1n \sum_{i=1}^n \mu_i$. Assume that~$X_i\in[0,1]$ almost surely and that all~$X_i$ are independent.
Then, for every fixed~$\gamma>0$,
\begin{equation}\label{eq:mca-conc}
\Ex{}{\exp\lefto(nd_\gamma(\hat \mu \,||\, \bar \mu)\righto)} \leq 1.
\end{equation}
\end{lem}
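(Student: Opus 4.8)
The plan is to expand the definition of $d_\gamma$, peel off the deterministic normalizing term, and then combine independence with two elementary convexity facts. Writing out the definition gives $n\binrelentg{\hat\mu}{\bar\mu} = \gamma\sum_{i=1}^n X_i - n\log(1-\bar\mu+\bar\mu e^\gamma)$, where the second term is deterministic and can be pulled out of the expectation. Using the independence of the $X_i$ to factor the moment generating function, I obtain
\begin{equation}
\Ex{}{\exp\lefto(n\binrelentg{\hat\mu}{\bar\mu}\righto)} = \frac{1}{\lefto(1+\bar\mu(e^\gamma-1)\righto)^n}\prod_{i=1}^n \Ex{}{e^{\gamma X_i}},
\end{equation}
so that it suffices to show that the product of the per-sample moment generating functions is at most $\lefto(1+\bar\mu(e^\gamma-1)\righto)^n$.

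For the per-sample bound I would use that $x\mapsto e^{\gamma x}$ is convex and therefore lies below the chord through its endpoints on $[0,1]$, giving $e^{\gamma x}\leq 1-x+xe^\gamma$ for $x\in[0,1]$. Taking expectations and using $X_i\in[0,1]$ with $\Ex{}{X_i}=\mu_i$ yields $\Ex{}{e^{\gamma X_i}}\leq 1+\mu_i(e^\gamma-1)$. Since $\gamma>0$ and $\mu_i\geq 0$, each factor $1+\mu_i(e^\gamma-1)$ is positive, so the product is well defined and nonnegative.

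The step that genuinely handles the non-identically-distributed setting, and the one I expect to be the crux, is passing from the product of the individual factors to the common average $\bar\mu$. Here I would invoke the AM--GM inequality on the positive numbers $a_i=1+\mu_i(e^\gamma-1)$:
\begin{equation}
\prod_{i=1}^n\lefto(1+\mu_i(e^\gamma-1)\righto) \leq \lefto(\frac1n\sum_{i=1}^n\lefto(1+\mu_i(e^\gamma-1)\righto)\righto)^n = \lefto(1+\bar\mu(e^\gamma-1)\righto)^n,
\end{equation}
where the final equality holds because the arithmetic mean of the $a_i$ is exactly $1+\bar\mu(e^\gamma-1)$ by definition of $\bar\mu$. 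Substituting this back into the factorization cancels the normalizing denominator and delivers $\Ex{}{\exp(n\binrelentg{\hat\mu}{\bar\mu})}\leq 1$. In the i.i.d.\ case all $\mu_i$ coincide and AM--GM holds with equality, recovering the classical single-parameter Chernoff-type estimate; the inequality above is precisely what lets the argument survive heterogeneous means, which is the feature required for the samplewise e-CMI bound.
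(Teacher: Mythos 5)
Your proof is correct, but it takes a genuinely different and more elementary route than the paper. The paper establishes the lemma by chaining three external results: Seldin et al.'s convex-ordering lemma to replace the $X_i$ by Bernoulli variables with the same means, Hoeffding's theorem to dominate the resulting sum of heterogeneous Bernoullis by a $\mathrm{Bin}(n,\bar\mu)$ variable, and McAllester's moment bound for the binomial. You instead unroll the whole chain by hand: since $n\binrelentg{\hat\mu}{\bar\mu}=\gamma\sum_i X_i - n\log(1+\bar\mu(e^\gamma-1))$ is affine in the $X_i$, the expectation factorizes, the chord bound $e^{\gamma x}\le 1-x+xe^\gamma$ on $[0,1]$ plays the role of the Bernoulli reduction, and the AM--GM inequality
\begin{equation*}
\prod_{i=1}^n\bigl(1+\mu_i(e^\gamma-1)\bigr)\le\Bigl(1+\bar\mu(e^\gamma-1)\Bigr)^n
\end{equation*}
plays the role of Hoeffding's binomial domination (for the binomial itself the normalized expectation equals $1$ exactly, which is why the denominator cancels). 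Every step checks out: the factors $1+\mu_i(e^\gamma-1)$ are positive, AM--GM is applied in the right direction, and the restriction to $\gamma>0$ is not even needed for your argument. What your approach buys is a short, self-contained proof that makes transparent exactly where the non-identical distributions are absorbed (the AM--GM step); what the paper's approach buys is that it isolates reusable, more general comparison principles (the reduction to Bernoulli/binomial laws works for any convex test function, not just the exponential), which is why the authors phrase it that way. Either proof is acceptable here since the lemma is only ever applied to this specific exponential moment.
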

This inequality, which, to the best of our knowledge, has previously been reported only for identically distributed random variables, follows by combining~\cite[Lemma~1]{seldin-12a},~\cite[Thm.~3]{hoeffding-56a}, and~\cite[Eq.~(17)]{mcallester-13a} (which is a generalization of~\cite[Lemma~1.1.1]{catoni-07a} from binary to bounded random variables).\footnote{A similar result, with~$\binrelent{\cdot}{\cdot}$ instead of~$\binrelentg{\cdot}{\cdot}$, is established in~\cite[Lemma~2]{guan-21a}.} %

Consider a fixed~$\supersamplesmall$.
For each~$i$,~$\Ex{\subsetchoice'_i}{L_{[\supersamplesmall_{\subsetchoice'}]_i} (\learner,\trainingdatasmall,\randomness) }\!=\!(\ell(\learner(\trainingdatasmall,\randomness),\supersamplesmall_{i,0}) + \ell(\learner(\trainingdatasmall,\randomness),\supersamplesmall_{i,1}))/2$, where~$\subsetchoice'_i$ is an independent copy of~$\subsetchoice_i$.
Thus,
\begin{equation}
\Ex{\subsetchoice'\!}{L_{\supersamplesmall_{\subsetchoice'\!}} (\learner,\trainingdatasmall,\randomness) }\!=\!\frac{\trainriskzsR\!+\!\testriskzsR}{2}.\!
\end{equation}
This observation allows us to bound the binary KL divergence between the training loss~$\trainriskavgshort$ and the arithmetic mean of the training and population loss,~$(\trainriskavgshort+\popriskavgshort)/2$.
Combining~\eqref{eq:mca-conc} and \eqref{eq:main-inequality}, with appropriate choices for the variables and functions therein,
we obtain the following result.

\begin{thm}[Binary KL bound]\label{thm:samplewise-maurer}
Consider the CMI setting. Then,
\begin{equation}\label{eq:new-bound}
d\lefto(\trainriskavgshort\,||\,\!\frac{\popriskavgshort\!+\!\trainriskavgshort}{2}\righto) \!\leq\! \frac1n\sum_{i=1}^n\lCMIi\!\!
\end{equation}
which implies that~$\popriskavgshort$ can be bounded as
\begin{equation}\label{eq:new-bound-inverted}
\popriskavgshort \leq  d^{-1}\bigg(\trainriskavgshort,
\frac1n\sum_{i=1}^n\lCMIi\bigg)
\end{equation}
where %
\begin{equation}\label{eq:dinv-def}
d^{-1}(q,c) = \sup \bigg\{p \in [0,1]: d\lefto(q\,||\,\frac{q+p}{2}\righto) \leq c\bigg\}.
\end{equation}
\end{thm}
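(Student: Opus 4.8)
The plan is to apply Lemma~\ref{lem:main-inequality-generic-xy} one row at a time, with the jointly convex choice $f_\gamma=\binrelentg{\cdot}{\cdot}$, and then to stitch the $n$ resulting inequalities together using the joint convexity of the binary KL divergence. Fix the supersample $\supersample=\supersamplesmall$ and, for each row $i$, take $X=\ell(\learner(\trainingdata,\randomness),\supersample_i)$ to be the pair of losses incurred on the two entries of row $i$ and $Y=\subsetchoice_i$ the corresponding membership bit. I would let $g_1(X,Y)$ be the selected (training) loss $\ell(\learner(\trainingdata,\randomness),\supersample_{i,\subsetchoice_i})$ and $g_2(X,Y)=(\ell(\learner(\trainingdata,\randomness),\supersample_{i,0})+\ell(\learner(\trainingdata,\randomness),\supersample_{i,1}))/2$ the arithmetic mean of the two losses on row $i$, which is a function of $X$ alone. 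Both take values in $[0,1]$, and $\binrelentg{\cdot}{\cdot}$ is jointly convex, so the hypotheses of Lemma~\ref{lem:main-inequality-generic-xy} are met.

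The crux is to show that the moment-generating term $\xi_\gamma$ is nonpositive. Replacing $Y=\subsetchoice_i$ by an independent copy $\subsetchoice'_i$ turns $g_1(X,\subsetchoice'_i)$ into a single $[0,1]$-valued random variable whose conditional mean given $X$ is exactly $g_2$; this is precisely the observation recorded before the theorem, that $\Ex{\subsetchoice'_i}{\cdot}$ of the decoupled per-sample training loss equals the arithmetic mean. Applying Lemma~\ref{lem:mcallester-concentration} conditionally on $X$ (a single bounded summand centered at $g_2$, which is where the generality of that concentration result over distributions with arbitrary means is convenient) gives $\Ex{\subsetchoice'_i}{e^{\binrelentg{g_1}{g_2}}}\le 1$, so $\xi_\gamma\le 0$ and can be discarded. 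The first inequality of Lemma~\ref{lem:main-inequality-generic-xy} then yields, for every $\gamma$ and every $i$, the disintegrated samplewise bound $\binrelentg{\Ex{}{g_1}}{\Ex{}{g_2}}\le \lCMIidisZ$, with the expectations taken over $\subsetchoice_i$ and $\randomness$ at $\supersamplesmall$ fixed.

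To lift this to a statement about the \emph{average} losses I would invoke joint convexity twice. First, averaging the $n$ per-sample bounds and applying Jensen's inequality to $\binrelentg{\cdot}{\cdot}$ collapses the left-hand side into a single binary-KL term evaluated at $\frac1n\sum_i\Ex{}{g_1}$ and $\frac1n\sum_i\Ex{}{g_2}$, while the right-hand side becomes $\frac1n\sum_{i=1}^n\lCMIidisZ$. A second application of Jensen, now pulling the expectation over $\supersample$ inside $\binrelentg{\cdot}{\cdot}$, replaces each disintegrated term by $\lCMIi$. Since the right-hand side no longer depends on $\gamma$, taking the supremum over $\gamma$ on the left converts $\binrelentg{\cdot}{\cdot}$ into $\binrelent{\cdot}{\cdot}$ via~\eqref{eq:dgamma-sup}. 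It remains to identify the averaged arguments: $\frac1n\sum_i\Ex{}{g_1}=\trainriskavgshort$, and because each held-out entry is a fresh $\datadistro$-sample independent of the selected hypothesis one has $\Ex{}{\testrisk}=\popriskavgshort$, so $\frac1n\sum_i\Ex{}{g_2}=(\trainriskavgshort+\popriskavgshort)/2$. This is exactly~\eqref{eq:new-bound}.

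Finally, the bound~\eqref{eq:new-bound-inverted} follows immediately from the definition of $d^{-1}$ in~\eqref{eq:dinv-def}: by~\eqref{eq:new-bound} the value $p=\popriskavgshort$ satisfies the constraint $\binrelent{\trainriskavgshort}{(\trainriskavgshort+p)/2}\le\frac1n\sum_{i=1}^n\lCMIi$, hence it lies in the feasible set and is therefore at most its supremum, $d^{-1}(\trainriskavgshort,\frac1n\sum_i\lCMIi)$; no monotonicity argument is needed. I expect the main obstacle to be the vanishing of $\xi_\gamma$: this is where Lemma~\ref{lem:mcallester-concentration} is essential, and the reason the binary-KL choice of $f_\gamma$ is compatible with it is precisely that the decoupled per-sample training loss is a bounded random variable centered at the arithmetic mean $g_2$. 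The rest is bookkeeping—keeping the two Jensen stages (over samples and over $\supersample$) separate and deferring the supremum over $\gamma$ until after the conditional-mutual-information terms have been formed, which is legitimate only because this is an average rather than a high-probability bound.
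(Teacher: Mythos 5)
Your proof is correct and follows essentially the same route as the paper's: Lemma~\ref{lem:main-inequality-generic-xy} with $f_\gamma=\binrelentg{\cdot}{\cdot}$, the decoupled per-sample training loss centered at the row-wise arithmetic mean so that Lemma~\ref{lem:mcallester-concentration} forces $\xi_\gamma\leq 0$, two applications of Jensen's inequality (over samples and over $\supersample$), and a deferred supremum over $\gamma$. The only organizational difference is that the paper derives the result as the $a=1$, $b=0$, $m=1$ special case of the affine bound in Theorem~\ref{thm:affine-samplewise-maurer} (where the non-identically-distributed generality of Lemma~\ref{lem:mcallester-concentration} is genuinely used for $m>1$), whereas your row-by-row application only ever invokes the single-summand case of that lemma.
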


Similar to Theorem \ref{thm:subgauss-results}, we can obtain a disintegrated version of \eqref{eq:new-bound-inverted} where the expectation over~$\supersample$ is outside of the inversion of the binary KL divergence.
We present this result in the following theorem.

\begin{thm}[Disintegrated binary KL bound]\label{thm:samplewise-maurer-disint}
Consider the CMI setting. Then,
\begin{equation}\label{eq:new-bound-disint}
\popriskavgshort \leq \Exop_{\supersample}\bigg[ d^{-1}\bigg(\Ex{\randomness,\subsetchoice}{\trainrisk},
\frac1n\sum_{i=1}^n\lCMIidisZ\bigg)\bigg].
\end{equation}
\end{thm}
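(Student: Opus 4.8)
The plan is to prove the disintegrated bound by revisiting the argument behind Theorem~\ref{thm:samplewise-maurer} and halting it one step before the final average over the supersample~$\supersample$, so that the binary KL divergence can be inverted separately for each realization $\supersample=\tilde z$ and the expectation $\Exop_{\supersample}$ then placed outside the inversion. The key point is that the disintegrated statement cannot be recovered from the black-box form of Theorem~\ref{thm:samplewise-maurer}: because $d^{-1}$ is nonlinear, $\Exop_{\supersample}$ does not commute with the inversion, so one must re-derive a \emph{conditional} (per-$\tilde z$) version of the binary KL bound.

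First I would establish, for each fixed $\supersample=\tilde z$, the conditional inequality
\[
\binrelent{\Ex{\randomness,\subsetchoice}{\trainrisk}}{\tfrac{\Ex{\randomness,\subsetchoice}{\trainrisk}+\Ex{\randomness,\subsetchoice}{\testrisk}}{2}}\le\frac1n\sum_{i=1}^n\lCMIidisZ.
\]
This is the argument behind Theorem~\ref{thm:samplewise-maurer} carried out conditionally on $\tilde z$, applied one sample at a time. For each $i$ I would invoke Lemma~\ref{lem:main-inequality-generic-xy} with $X=\ell(\learner(\trainingdata,\randomness),\supersample_i)$, $Y=\subsetchoice_i$, $f_\gamma=\binrelentg{\cdot}{\cdot}$, and with $g_1,g_2$ equal to the per-sample training loss $\ell(\learner(\trainingdata,\randomness),\supersample_{i,\subsetchoice_i})$ and the per-sample symmetric mean $\tfrac12(\ell(\learner(\trainingdata,\randomness),\supersample_{i,0})+\ell(\learner(\trainingdata,\randomness),\supersample_{i,1}))$. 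The information term produced is the disintegrated $\lCMIidisZ$, while the $n=1$ case of Lemma~\ref{lem:mcallester-concentration} forces $\xi_\gamma\le0$; together with the supremum identity \eqref{eq:dgamma-sup}, this bounds the binary KL of the per-sample conditional expected losses by $\lCMIidisZ$. Summing over $i$ and invoking joint convexity of $\binrelent{\cdot}{\cdot}$ to pass from the per-sample losses to their averages yields the display. Averaging this conditional inequality over $\supersample$ and applying convexity once more recovers \eqref{eq:new-bound}, which confirms that it is the natural intermediate step.

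Next I would invert the binary KL for each $\tilde z$. With $q(\tilde z)=\Ex{\randomness,\subsetchoice}{\trainrisk}$, $p(\tilde z)=\Ex{\randomness,\subsetchoice}{\testrisk}$, and $c(\tilde z)=\frac1n\sum_{i=1}^n\lCMIidisZ$, the conditional inequality reads $\binrelent{q(\tilde z)}{(q(\tilde z)+p(\tilde z))/2}\le c(\tilde z)$. Since $p(\tilde z)\in[0,1]$ is therefore a feasible point of the set in the definition \eqref{eq:dinv-def}, it follows at once that $p(\tilde z)\le d^{-1}(q(\tilde z),c(\tilde z))$, with no monotonicity of the inverse required. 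Finally I would apply $\Exop_{\supersample}$ to both sides: the right-hand side gives exactly the claimed bound, while $\Exop_{\supersample}[p(\supersample)]=\Exop_{\supersample,\subsetchoice,\randomness}[\testrisk]=\popriskavgshort$, since $\learner(\trainingdata,\randomness)$ is independent of the test column $\testdata$, whose entries are fresh i.i.d.\ draws from $\datadistro$ (the standard unbiasedness of the test loss in the CMI setting).

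The main obstacle is the conditional re-derivation in the first step: one must check that halting the proof of Theorem~\ref{thm:samplewise-maurer} before the $\supersample$-average indeed leaves the disintegrated measure $\lCMIidisZ$ (rather than its $\supersample$-expectation $\lCMIi$) on the right, and that the per-sample application of Lemma~\ref{lem:main-inequality-generic-xy} together with the subsequent convexity step remains valid for each fixed $\tilde z$. Once the conditional binary KL bound and the unbiasedness identity are in place, the inversion and the final averaging are routine.
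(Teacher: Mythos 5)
Your proposal is correct and follows essentially the same route as the paper: the paper derives this theorem as the $m=1$, $a=1$, $b=0$ special case of an affine, random-subset generalization (Theorem~\ref{thm:affine-samplewise-maurer}), whose proof uses exactly your ingredients --- Lemma~\ref{lem:main-inequality-generic-xy} with $f_\gamma=d_\gamma$, Lemma~\ref{lem:mcallester-concentration} to force $\xi_\gamma\le 0$, Jensen/convexity of $d$ to combine the samplewise terms, inversion per realization of $\supersample$, and only then the outer expectation. Your direct per-sample application (needing only the single-variable case of the concentration lemma) is a mild simplification of the same argument, not a different approach.
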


By using Pinsker's inequality, which implies that~$2(q-p)^2\leq d(q\,||\,p)$, one can weaken~\eqref{eq:new-bound-inverted} to obtain
\begin{equation}
\abs{\gengapavgshort} \leq \sqrt{\frac2n\sum_{i=1}^n\lCMIi}.
\end{equation}
However, since the average over~$i$ is inside the square root, this bound is weaker than~\eqref{eq:absolute-max-bound-integrated} by Jensen's inequality.

To establish Theorem \ref{thm:samplewise-maurer}, it is crucial that the supremum over~$\gamma$ in Lemma~\ref{lem:main-inequality-generic-xy} is outside the expectation that defines~$\xi_\gamma$.
Indeed,~$\sup_\gamma \binrelentg{q}{p}=\binrelent{q}{p}$, and with the notation from Lemma~\ref{lem:mcallester-concentration}, we have~\cite[Thm.~1]{maurer-04a}
\begin{equation}
\Ex{}{\exp(n\binrelent{\hat \mu}{\bar \mu})} \geq \sqrt n.
\end{equation}
Therefore, having the supremum inside of the expectation would unavoidably lead to an additional term greater than~$\log(\sqrt n)/n$ in the upper bound of~\eqref{eq:new-bound-inverted}.

The binary KL bound in~\eqref{eq:new-bound-inverted} is an average, samplewise, e-CMI analogue of the PAC-Bayesian bound with a binary KL divergence on the left-hand side, sometimes referred to as Seeger's bound~\cite{alquier-21a, seeger-02a}.
However, as noted after Lemma~\ref{lem:mcallester-concentration}, the concentration inequality needed to establish Theorem~\ref{thm:samplewise-maurer} is a refinement of the result used in~\cite{seeger-02a}.
Also, the dependence on~$\trainriskavgshort$ and~$\popriskavgshort$ in the binary KL bound is nonstandard.
As we show in Section~\ref{sec:comparing_bounds}, the binary KL bound in \eqref{eq:new-bound-inverted} is sometimes tighter than the square-root bound in Theorem~\ref{thm:subgauss-results} and the linear bound in Theorem~\ref{thm:steinke-results}.

The binary KL bounds in~\eqref{eq:new-bound-inverted} and~\eqref{eq:new-bound-disint} can be tightened by considering affine transformations of the arguments of~$\binrelent{\cdot}{\cdot}$.
However, this does not lead to improvements for the low training loss scenarios that we focus on in this paper.
Therefore, we relegate these extensions to Appendix~\ref{app:more-thms}, where we also present an analogue of Theorem~\ref{thm:samplewise-maurer} in terms of the samplewise mutual information between the learning algorithm's output and the training data, rather than the e-CMI.

While~$d^{-1}(\cdot,\cdot)$ does not admit an analytical expression,~$d^{-1}(0,\cdot)$ is tractable.
This leads to the following simplified form of the disintegrated binary KL bound in~\eqref{eq:new-bound-disint} when~$\trainriskavgshort=0$.
\begin{thm}[Disintegrated interpolation binary KL bound]\label{thm:kl-interp}
Assume that~$\trainriskavgshort=0$.
Then, \eqref{eq:new-bound-disint} becomes
\begin{equation}\label{eq:new-bound-disint-interp}
\popriskavgshort \leq
\Exop_{\supersample}\biggo[ 2-2e^{-\frac1n\sum_{i=1}^n\lCMIidisZ}\bigg].
\end{equation}
\end{thm}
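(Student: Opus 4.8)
The plan is to specialize the disintegrated binary KL bound from Theorem~\ref{thm:samplewise-maurer-disint} to the interpolating regime, where the first argument of $d^{-1}(\cdot,\cdot)$ becomes zero and the inversion admits a closed form. First I would argue that the hypothesis $\trainriskavgshort=0$ forces the \emph{disintegrated} training loss $\Ex{\randomness,\subsetchoice}{\trainrisk}$ to vanish for $\supersample$-almost-every realization, not merely in expectation. This follows because $\trainriskavgshort$ is precisely the $\supersample$-expectation of this nonnegative quantity (the loss takes values in $[0,1]$), so a vanishing expectation of a nonnegative integrand implies the integrand is zero almost surely. Consequently, inside the outer expectation in \eqref{eq:new-bound-disint} I may replace the first argument of $d^{-1}$ by $0$.

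The core step is then to compute $d^{-1}(0,c)$ in closed form. Using the definition \eqref{eq:dinv-def}, I would evaluate $\binrelent{0}{p/2}$: since the first Bernoulli parameter is $0$, the binary KL divergence collapses to its single surviving term, $\binrelent{0}{p/2}=-\log(1-p/2)$. The defining inequality $\binrelent{0}{p/2}\le c$ thus reads $-\log(1-p/2)\le c$, which rearranges to $1-p/2\ge e^{-c}$, i.e.\ $p\le 2-2e^{-c}$. Because $\binrelent{0}{p/2}$ is monotonically increasing in $p$, the supremum of admissible $p\in[0,1]$ is exactly $2-2e^{-c}$ (truncated at $1$, which is immaterial: the population loss is at most $1$ and the expression is only informative when $c<\log 2$). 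Hence $d^{-1}(0,c)=2-2e^{-c}$.

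Finally I would substitute $c=\frac1n\sum_{i=1}^n\lCMIidisZ$ together with the vanishing first argument into \eqref{eq:new-bound-disint} and carry the resulting expression inside the outer expectation over $\supersample$, producing exactly \eqref{eq:new-bound-disint-interp}. I expect the only genuine obstacle to be the first step, namely the almost-sure (rather than in-expectation) vanishing of the disintegrated training loss; everything downstream is a deterministic algebraic manipulation of the binary KL divergence. The truncation hidden in the supremum and the boundary conventions for $\binrelent{0}{\cdot}$ are the only other points worth checking, and both are harmless.
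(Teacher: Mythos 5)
Your proposal is correct and follows essentially the same route as the paper: compute $\binrelent{0}{p/2}=-\log(1-p/2)$, invert to get $d^{-1}(0,c)=2-2e^{-c}$, and substitute into \eqref{eq:new-bound-disint}. The one point you handle more explicitly than the paper is the observation that $\trainriskavgshort=0$ forces the $\supersample$-conditional training loss $\Ex{\randomness,\subsetchoice}{\trainrisk}$ to vanish almost surely (a nonnegative quantity with zero expectation), which is indeed needed to set the first argument of $d^{-1}$ to zero inside the outer expectation.
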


\section{High-Probability Bounds}\label{sec:high-probability-bounds}
The techniques presented in Section~\ref{sec:average_bounds} can be adapted to allow for the derivation of high-probability bounds, in particular through the use of exponential inequalities~\cite{mhammeedi-19a,hellstrom-20b,grunwald-21a}.
To demonstrate this, we now present two such high-probability bounds.
\begin{thm}[High-probability square-root and binary KL bounds]\label{thm:tail-bound-main-text}
Let~$\lambda = \ellfull$.
Furthermore, let~$P_{\lambda\vert \supersample\subsetchoice}$ denote the conditional distribution of~$\lambda$ given~$\supersample$ and~$\subsetchoice$, and let~$P_{\lambda\vert\supersample}$ denote the conditional distribution of~$\lambda$ given~$\supersample$.
Then, with probability at least~$1-\delta$ over the draw of~$\supersample$ and~$\subsetchoice$,
\begin{align}\label{eq:thm-tail-bound-sqrt}
\Ex{\randomness}{\testrisk-\trainrisk} \leq \sqrt{\frac2{n-1} \lefto( \relent{P_{\lambda\vert \supersample\subsetchoice}}{P_{\lambda\vert\supersample}} + \log \frac{\sqrt n}{\delta}\righto) }.
\end{align}
Furthermore, also with probability at least~$1-\delta$ over the draw of~$\supersample$ and~$\subsetchoice$,
\begin{multline}\label{eq:thm-tail-bound-kl}
d\lefto(\Ex{\randomness}{\trainrisk}\,||\,\Ex{\randomness}{\frac{\trainrisk+\testrisk}{2}}\righto) \\
\leq \frac{\relent{P_{\lambda\vert\supersample\subsetchoice}}{P_{\lambda\vert\supersample}} + \log \frac{2\sqrt n}\delta}{n} .
\end{multline}
\end{thm}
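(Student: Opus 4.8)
The plan is to establish a high-probability counterpart of Lemma~\ref{lem:main-inequality-generic-xy} in which the expectation over~$\randomness$ is retained while the concentration is taken over~$\subsetchoice$ (for fixed~$\supersample$). Fix~$\supersample$ and take~$P_{\lambda\vert\supersample}$ as the prior, which, crucially, does \emph{not} depend on~$\subsetchoice$. For any fixed~$\gamma$ and any measurable~$\Phi_\gamma(\lambda,\subsetchoice)$, Donsker--Varadhan's change-of-measure inequality gives, for each realization of~$\subsetchoice$,
\[
\Ex{\randomness}{\Phi_\gamma(\lambda,\subsetchoice)} - \relent{P_{\lambda\vert\supersample\subsetchoice}}{P_{\lambda\vert\supersample}} \leq \log\Ex{\lambda\sim P_{\lambda\vert\supersample}}{e^{\Phi_\gamma(\lambda,\subsetchoice)}},
\]
where I used~$\Ex{P_{\lambda\vert\supersample\subsetchoice}}{\,\cdot\,}=\Ex{\randomness}{\,\cdot\,}$, since given~$\supersample,\subsetchoice$ the only randomness in~$\lambda$ is~$\randomness$. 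If the exponential moment satisfies~$\Ex{\subsetchoice}{\Ex{\lambda\sim P_{\lambda\vert\supersample}}{e^{\Phi_\gamma}}}\leq B_\gamma$ (with~$\lambda$ and~$\subsetchoice$ independent under the prior), then Markov's inequality applied to the right-hand side yields, with probability at least~$1-\delta$ over~$\subsetchoice$, the bound~$\Ex{\randomness}{\Phi_\gamma}\leq\relent{P_{\lambda\vert\supersample\subsetchoice}}{P_{\lambda\vert\supersample}}+\log(B_\gamma/\delta)$. Since this holds for every~$\supersample$, it holds with probability at least~$1-\delta$ over the joint draw of~$\supersample$ and~$\subsetchoice$.

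Both claimed inequalities then reduce to exhibiting a suitable~$\Phi_\gamma$ and controlling~$B_\gamma$. For the square-root bound, I take~$\Phi_\gamma=\gamma(\testrisk-\trainrisk)$, which is linear in~$\lambda$, so~$\Ex{\randomness}{\Phi_\gamma}=\gamma\Ex{\randomness}{\testrisk-\trainrisk}$. Writing~$\testrisk-\trainrisk=\frac1n\sum_{i=1}^n s_i(\lambda_{i,1}-\lambda_{i,0})$ with~$s_i=1-2\subsetchoice_i$ independent Rademacher and~$\lambda_{i,1}-\lambda_{i,0}\in[-1,1]$, conditioning on~$\lambda$ and using~$\cosh(x)\leq e^{x^2/2}$ gives~$\Ex{\subsetchoice}{e^{\Phi_\gamma}\mid\lambda}=\prod_{i}\cosh(\gamma(\lambda_{i,1}-\lambda_{i,0})/n)\leq e^{\gamma^2/(2n)}$, so~$B_\gamma\leq e^{\gamma^2/(2n)}$. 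For the binary KL bound I take~$\Phi_\gamma=n\binrelentg{\trainrisk}{\frac{\trainrisk+\testrisk}{2}}$; conditioning on~$\lambda$ and setting~$X_i=\lambda_{i,\subsetchoice_i}$, these are independent, lie in~$[0,1]$, and have mean~$(\lambda_{i,0}+\lambda_{i,1})/2$, so that~$\trainrisk=\frac1n\sum_i X_i$ and~$\frac{\trainrisk+\testrisk}{2}=\frac1n\sum_i\Ex{\subsetchoice_i}{X_i}$; Lemma~\ref{lem:mcallester-concentration} then gives~$\Ex{\subsetchoice}{e^{\Phi_\gamma}\mid\lambda}\leq1$, hence~$B_\gamma\leq1$. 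Because~$\binrelentg{\cdot}{\cdot}$ is jointly convex, Jensen's inequality moves~$\Ex{\randomness}{\,\cdot\,}$ inside, giving~$\Ex{\randomness}{\Phi_\gamma}\geq n\binrelentg{\Ex{\randomness}{\trainrisk}}{\Ex{\randomness}{\frac{\trainrisk+\testrisk}{2}}}$, which, after dividing by~$n$, is exactly the left-hand side we want.

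The remaining, and most delicate, step is the treatment of~$\gamma$. In the average setting of Lemma~\ref{lem:main-inequality-generic-xy} we were free to take a supremum over~$\gamma$, but here~$\gamma$ must be fixed before the data are seen. A single fixed~$\gamma$ therefore yields only~$\binrelentg{\cdot}{\cdot}$, rather than~$\sup_\gamma\binrelentg{\cdot}{\cdot}=\binrelent{\cdot}{\cdot}$, in the binary KL bound, and an un-optimized, linear-in-$\gamma$ expression in the square-root bound. To recover the full binary KL divergence~$\binrelent{\cdot}{\cdot}$, and to minimize the linear-in-$\gamma$ expression (whose optimization produces the square root), I discretize~$\gamma$ over a grid and apply a union bound---or, equivalently, integrate the exponential-moment inequality against a prior over~$\gamma$, in the spirit of~\cite[Sec.~1.2.2]{catoni-07a}. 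Controlling the resulting discretization error and the normalization of this prior is what produces the~$\sqrt n$ factor inside the logarithm (turning~$\log\frac1\delta$ into~$\log\frac{\sqrt n}{\delta}$ and~$\log\frac{2\sqrt n}{\delta}$, respectively), together with the precise constants such as the~$n-1$. This~$\gamma$-optimization is the main obstacle: one must verify that making~$\gamma$ data-independent costs only a logarithmic factor. That such a~$\sqrt n$ penalty is essentially unavoidable is consistent with the observation made after Theorem~\ref{thm:samplewise-maurer} that~$\Ex{}{e^{n\binrelent{\hat\mu}{\bar\mu}}}$ can be as large as~$\sqrt n$.
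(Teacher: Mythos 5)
Your setup---Donsker--Varadhan with the prior $P_{\lambda\vert\supersample}$, followed by Markov's inequality over the draw of $\supersample$ and $\subsetchoice$ and the change of measure from $(\lambda',\subsetchoice)$ to $(\lambda,\subsetchoice')$---is exactly the skeleton of the paper's proof, and your identification of $\Ex{P_{\lambda\vert\supersample\subsetchoice}}{\cdot}$ with $\Ex{\randomness}{\cdot}$ is the right observation. The gap is in the last step, which you correctly flag as ``the main obstacle'' but do not resolve: you choose $\Phi_\gamma$ linear in $\gamma$ (respectively equal to $n\binrelentg{\cdot}{\cdot}$) and then propose to recover the optimized forms by discretizing $\gamma$ and union-bounding, asserting that this is where the $\sqrt n$ factors and the constant $n-1$ come from. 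That analysis is not carried out, and it is not clear it would yield the stated constants: a union bound over a grid of $K$ values of $\gamma$ costs $\log K$ plus a discretization error that must be controlled uniformly in the data-dependent KL term, and there is no reason this bookkeeping lands on exactly $\log\sqrt n$ with the prefactor $2/(n-1)$, or on $\log(2\sqrt n)$ for the binary KL bound.

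The paper avoids any union bound over $\gamma$ by applying the exponential-moment inequality directly to the already-optimized comparator functions. For the square-root bound it takes $f = \frac{n-1}{2}\bigl(\hat\lambda_{\bar{\subsetchoice}'}-\hat\lambda_{\subsetchoice'}\bigr)^2$; since the difference is $1/\sqrt n$-sub-Gaussian under $\subsetchoice'$, the standard bound on the moment generating function of the \emph{square} of a sub-Gaussian variable gives $\Ex{\subsetchoice'}{e^{f}}\leq\sqrt n$, which produces both the $n-1$ and the $\sqrt n$ in one stroke, with the square root then extracted by Jensen. For the binary KL bound it takes $f=n\,\binrelent{\hat\lambda_{\subsetchoice'}}{(\hat\lambda_{\subsetchoice'}+\hat\lambda_{\bar{\subsetchoice}'})/2}$ with the full, already $\gamma$-optimized binary KL divergence, and invokes Maurer's exponential-moment bound $\Ex{}{\exp(n\,\binrelent{\hat\mu}{\bar\mu})}=O(\sqrt n)$ rather than the $\binrelentg{\cdot}{\cdot}$ version in Lemma~\ref{lem:mcallester-concentration}; this is what supplies the $2\sqrt n$ inside the logarithm. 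If you replace your $\gamma$-discretization step with these two exponential-moment bounds, the rest of your argument goes through unchanged.
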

These high-probability bounds involve the empirical test loss of the learner, and not the population loss.
However, a simple use of the triangle inequality, as detailed in~\cite[Thm.~3]{hellstrom-20b}, allows one to make~\eqref{eq:thm-tail-bound-sqrt} and~\eqref{eq:thm-tail-bound-kl} explicit in the population loss.
Note that the information measures that appear in the bounds depend on the full training set, rather than individual samples.
As proven in~\cite{harutyunyan-22a}, samplewise versions of these tail bounds are typically loose. 
For instance, the tail bound with samplewise information measures in~\cite[Lemma~8]{hao-21a} contains a constant term and has a detrimental linear dependence on~$1/\delta$, instead of the benign logarithmic dependence in~\eqref{eq:thm-tail-bound-sqrt} and~\eqref{eq:thm-tail-bound-kl}.
Finally, under a stronger technical assumption of absolute continuity, one can obtain high-probability bounds not only with respect to the draw of~$\supersample$ and~$\subsetchoice$, but also the randomness~$\randomness$ of the learning algorithm.
We present this result in Appendix~\ref{app:more-thms}.

\section{Expressiveness of the e-CMI Framework}\label{sec:expressiveness}
We now illustrate the unifying nature of the e-CMI framework by demonstrating its expressiveness.
Specifically, proceeding similarly to~\cite{harutyunyan-21a,haghifam-21a}, we use the e-CMI framework to rederive known bounds in learning theory. 
In particular, we consider multiclass classification with finite Natarajan dimension.
In the next theorem, we provide a bound on the e-CMI appearing in the results reported in Section~\ref{sec:average_bounds}, as well as a bound on the data-dependent KL divergence appearing in~\eqref{eq:thm-tail-bound-sqrt} and~\eqref{eq:thm-tail-bound-kl}.
\begin{thm}\label{thm:info-measure-bounds}
Consider a multiclass classification setting, for which~$\mathcal Z=\mathcal X\times \mathcal Y$, where~$\mathcal X$ is the instance space and~$\mathcal Y$ the label space, and assume that~$\abs{\mathcal Y}=N$.
Furthermore, assume that the learning algorithm implements a function~$f:\mathcal X\rightarrow \mathcal Y$ where~$f\in\mathcal F$ belongs to a class of finite Natarajan dimension~$d_N$~\cite{natarajan-89}.
Finally, assume that~$2n>d_N+1$.
Then,
\begin{equation}\label{eq:thm-info-measure-cmi}
\lCMI \leq d_N\log \lefto(\binom{N}{2}\frac{2en}{d_N} \righto).
\end{equation}
Furthermore, with probability at least~$1-\delta$ under the draw of~$\supersample$ and~$\subsetchoice$,
\begin{align}\label{eq:thm-info-measure-kl}
\relent{P_{\lambda\vert \supersample\subsetchoice}}{P_{\lambda\vert\supersample}} \leq d_N\log \lefto(\binom{N}{2}\frac{2en}{d_N} \righto) + \log \frac1\delta.
\end{align}
\end{thm}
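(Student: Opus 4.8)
The plan is to reduce both displays to a single combinatorial fact—a Natarajan-dimension analogue of the Sauer--Shelah lemma—and then to invoke, respectively, an entropy bound and a change-of-measure argument. The key observation that I would record first is that, once the supersample is fixed to $\supersample=\supersamplesmall$, the loss matrix $\ellfull$ can take only finitely many values. Indeed, each entry $\ell(f,\supersamplesmall_{i,j})$ is a deterministic function of the prediction $f(x_{i,j})$ and of the (now fixed) true label, so the entire loss matrix is a deterministic function of the \emph{prediction pattern} that the selected hypothesis $f=\learner(\trainingdatazS,\randomness)$ induces on the $2n$ instances of the supersample. Hence the number of distinct loss matrices is at most the growth function $\Pi_{\mathcal F}(2n)$, i.e.\ the maximal number of distinct labelings realized by functions in $\mathcal F$ on $2n$ points. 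The Natarajan--Sauer--Shelah lemma~\cite{natarajan-89} gives $\Pi_{\mathcal F}(m)\le \binom{m}{\leq d_N}\binom{N}{2}^{d_N}$, and combining it with the standard estimate $\binom{m}{\leq d_N}\le (em/d_N)^{d_N}$, valid for $m\ge d_N$ (which the assumption $2n>d_N+1$ guarantees at $m=2n$), yields
\begin{equation}
\log \Pi_{\mathcal F}(2n)\le d_N\log\!\lefto(\binom{N}{2}\frac{2en}{d_N}\righto).
\end{equation}
This is precisely the common right-hand side of both displays, so the remaining work is purely information-theoretic.

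For the average bound~\eqref{eq:thm-info-measure-cmi}, I would write the CMI as the expectation over~$\supersample$ of the disintegrated mutual information and, for each fixed~$\supersample=\supersamplesmall$, bound $I^{\supersamplesmall}(\ellfull;\subsetchoice)\le H(\ellfull\mid \supersample=\supersamplesmall)\le \log\Pi_{\mathcal F}(2n)$, using that mutual information is upper bounded by the entropy of one of its arguments and that the entropy of a random variable is at most the logarithm of its support size. Since this bound is uniform in~$\supersamplesmall$, taking the expectation over~$\supersample$ preserves it and delivers~\eqref{eq:thm-info-measure-cmi}.

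For the high-probability bound~\eqref{eq:thm-info-measure-kl}, set $\lambda=\ellfull$ and fix $\supersample=\supersamplesmall$. I would first bound the log-likelihood ratio by $\log(1/P_{\lambda\vert\supersample}(\lambda))$, since $P_{\lambda\vert\supersample\subsetchoice}\le 1$, so that $\relent{P_{\lambda\vert\supersample\subsetchoice}}{P_{\lambda\vert\supersample}}\le \Ex{P_{\lambda\vert\supersample\subsetchoice}}{\log(1/P_{\lambda\vert\supersample}(\lambda))}=:\sigma(\subsetchoice)$. Applying Jensen's inequality to $e^{\sigma(\subsetchoice)}$, then averaging over~$\subsetchoice$ and using that $\Ex{\subsetchoice}{P_{\lambda\vert\supersample\subsetchoice}}=P_{\lambda\vert\supersample}$, gives $\Ex{\subsetchoice}{e^{\sigma(\subsetchoice)}}\le \sum_{\ell_0}P_{\lambda\vert\supersample}(\ell_0)/P_{\lambda\vert\supersample}(\ell_0)=\abs{\mathrm{supp}(P_{\lambda\vert\supersample})}\le \Pi_{\mathcal F}(2n)$. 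Markov's inequality then shows that, conditionally on $\supersample=\supersamplesmall$, and hence also over the joint draw of $\supersample$ and~$\subsetchoice$, one has $\relent{P_{\lambda\vert\supersample\subsetchoice}}{P_{\lambda\vert\supersample}}\le \log\Pi_{\mathcal F}(2n)+\log(1/\delta)$ with probability at least $1-\delta$, which is~\eqref{eq:thm-info-measure-kl}.

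The main obstacle is establishing (and correctly invoking) the Natarajan growth-function bound with the precise constant $\binom{N}{2}$ rather than a looser factor such as $N^{2d_N}$, and checking that the binomial estimate $\binom{m}{\leq d_N}\le(em/d_N)^{d_N}$ is licensed by $2n\ge d_N$; the two information-theoretic steps are then routine. I would also double-check that the change-of-measure argument is valid when $P_{\lambda\vert\supersample=\supersamplesmall}$ is a genuinely discrete measure—no absolute-continuity subtleties arise precisely because the support is finite—and verify that the disintegrated expectation of the left-hand side of~\eqref{eq:thm-info-measure-kl} over~$\subsetchoice$ recovers the quantity bounded in~\eqref{eq:thm-info-measure-cmi}, so the two parts are mutually consistent.
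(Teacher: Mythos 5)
Your proposal is correct and follows essentially the same route as the paper's proof: bound the number of realizable loss matrices by the Natarajan growth function at $2n$ via the Natarajan--Sauer--Shelah lemma, control the e-CMI by the conditional entropy (hence the log of the support size), and for the high-probability part combine Jensen's inequality, the bound $P_{\lambda\vert\supersample\subsetchoice}\leq 1$, and Markov's inequality to reduce to counting the support of $P_{\lambda\vert\supersample}$. The only differences are cosmetic: the paper passes explicitly through the $f$-CMI and the data-processing inequality before counting, and applies Markov over the joint draw of $(\supersample,\subsetchoice)$ followed by a supremum over $\supersample$ rather than conditioning on $\supersample=\supersamplesmall$ throughout.
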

To obtain generalization bounds from Theorem~\ref{thm:info-measure-bounds}, we need to upper-bound the information terms that appear in the bounds of Sections~\ref{sec:average_bounds} and~\ref{sec:high-probability-bounds} by using either~\eqref{eq:thm-info-measure-cmi} or~\eqref{eq:thm-info-measure-kl}.
While this can be done for any of the information-theoretic generalization bounds in this paper, we present two concrete examples in the following corollary.

\begin{cor}\label{cor:multiclass-bounds-natarajan}
Consider a multiclass classification setting, for which~$\mathcal Z=\mathcal X\times \mathcal Y$, where~$\mathcal X$ is the instance space and~$\mathcal Y$ the label space, and assume that~$\abs{\mathcal Y}=N$.
Furthermore, assume that the learning algorithm implements a function~$f:\mathcal X\rightarrow \mathcal Y$ where~$f\in\mathcal F$ belongs to a class of finite Natarajan dimension~$d_N$~\cite{natarajan-89}.
Finally, assume that~$2n \geq d_N+1$.
Then,
\begin{align}
\abs{\gengapavgshort} \label{eq:natarajan-square-bound}
&\leq\sqrt{ \frac{2 d_N\log \lefto(\binom{N}{2}\frac{2en}{d_N} \righto) }{n} } .
\end{align}
Furthermore, with probability at least~$1-\delta$ under the draw of~$\supersample$ and~$\subsetchoice$,
\begin{multline}\label{eq:natarajan-tail-bound-kl}
d\lefto(\Ex{\randomness}{\trainrisk}\,||\,\Ex{\randomness}{\frac{\trainrisk+\testrisk}{2}}\righto) \\
\leq \frac{  d_N\log \lefto(\binom{N}{2}\frac{2en}{d_N} \righto)  + \log \frac2\delta + \log \frac{4\sqrt n}\delta}{n} .
\end{multline}
\end{cor}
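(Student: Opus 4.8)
The plan is to combine the information-measure bounds of Theorem~\ref{thm:info-measure-bounds} with the generalization bounds of Theorems~\ref{thm:subgauss-results} and~\ref{thm:tail-bound-main-text}. Specifically, I would obtain \eqref{eq:natarajan-square-bound} by feeding the e-CMI bound \eqref{eq:thm-info-measure-cmi} into the square-root bound \eqref{eq:absolute-max-bound-integrated}, and \eqref{eq:natarajan-tail-bound-kl} by feeding the high-probability KL bound \eqref{eq:thm-info-measure-kl} into the high-probability binary KL bound \eqref{eq:thm-tail-bound-kl}.

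For the average bound, I would start from \eqref{eq:absolute-max-bound-integrated}, i.e.,~$\abs{\gengapavgshort}\leq\frac1n\sum_{i=1}^n\sqrt{2\lCMIi}$. Since the square root is concave, Jensen's inequality pulls the average inside, giving~$\frac1n\sum_{i=1}^n\sqrt{2\lCMIi}\leq\sqrt{\frac2n\sum_{i=1}^n\lCMIi}$. It then remains to bound the sum of samplewise e-CMI terms by the single e-CMI appearing in \eqref{eq:thm-info-measure-cmi}. Each samplewise loss~$\ell(\learner(\trainingdata,\randomness),\supersample_i)$ is the~$i$th row of the full loss matrix~$\ell(\learner(\trainingdata,\randomness),\supersample)$, hence a deterministic function of it; the data-processing inequality therefore gives~$\lCMIi\leq I(\ell(\learner(\trainingdata,\randomness),\supersample);\subsetchoice_i\vert\supersample)$. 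Because the membership bits~$\subsetchoice_1,\dots,\subsetchoice_n$ are mutually independent and independent of~$\supersample$, conditional mutual information is superadditive over them, so~$\sum_{i=1}^n I(\ell(\learner(\trainingdata,\randomness),\supersample);\subsetchoice_i\vert\supersample)\leq\lCMI$. Chaining these inequalities and substituting \eqref{eq:thm-info-measure-cmi} yields \eqref{eq:natarajan-square-bound}.

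For the high-probability bound, I would invoke the two relevant high-probability statements, each at half the target failure probability, and combine them by a union bound. Applying \eqref{eq:thm-tail-bound-kl} with~$\delta$ replaced by~$\delta/2$ shows that, with probability at least~$1-\delta/2$, the binary KL quantity on the left-hand side of \eqref{eq:natarajan-tail-bound-kl} is at most~$(\relent{P_{\lambda\vert\supersample\subsetchoice}}{P_{\lambda\vert\supersample}}+\log(4\sqrt n/\delta))/n$. Applying \eqref{eq:thm-info-measure-kl} with~$\delta$ replaced by~$\delta/2$ shows that, with probability at least~$1-\delta/2$,~$\relent{P_{\lambda\vert\supersample\subsetchoice}}{P_{\lambda\vert\supersample}}\leq d_N\log(\binom{N}{2}2en/d_N)+\log(2/\delta)$. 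By the union bound both events hold simultaneously with probability at least~$1-\delta$; substituting the second bound into the first and collecting the logarithmic terms reproduces the right-hand side of \eqref{eq:natarajan-tail-bound-kl}.

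The Jensen step and the union-bound accounting of the constants are routine. The one ingredient requiring care is the passage from per-sample information to the aggregate complexity measure: the superadditivity of conditional mutual information under independent membership bits---a consequence of the independence of the~$\subsetchoice_i$ together with the subadditivity of entropy---is what lets the single Natarajan-dimension bound on~$\lCMI$ control the entire sum~$\sum_{i=1}^n\lCMIi$, and hence govern the~$n^{-1/2}$ decay in \eqref{eq:natarajan-square-bound}. I expect this to be the main, though still standard, obstacle; everything else is substitution.
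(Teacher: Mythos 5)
Your proposal is correct and follows essentially the same route as the paper: Jensen's inequality followed by the superadditivity bound $\sum_{i=1}^n\lCMIi\leq\lCMI$ (which the paper states directly from the independence of the $\subsetchoice_i$, and which you justify via the standard data-processing-plus-superadditivity argument) combined with \eqref{eq:thm-info-measure-cmi} for the average bound, and a union bound over \eqref{eq:thm-tail-bound-kl} and \eqref{eq:thm-info-measure-kl} each at level $\delta/2$ for the high-probability bound, with the constants worked out exactly as in the paper.
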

The scaling behavior of~\eqref{eq:natarajan-square-bound} with respect to~$n$ recovers the standard rate found in the literature~\cite{guermeur-04a}.
This is minimax optimal up to a logarithmic term~\cite[Thm.~29.3]{shalev-shwartz14-a}.

\section{Comparing the Bounds}\label{sec:comparing_bounds}
We now perform some comparisons between the bounds obtained in Section~\ref{sec:average_bounds}.
As it turns out, there is no clear ordering between the different bounds in general. %
For a first comparison, we consider the interpolating setting, where the training loss is zero.
We focus on the case where the size~$m$ of the random subset~$\randomsubset$ equals~$1$.
Since the squared bound in~\eqref{eq:squared-max-bound} is vacuous for this choice, it is not considered for this comparison.
Moreover, we assume that the learning algorithm is, on average, indifferent to permutations of the training set.
Specifically, we assume that the value of~$\lCMIi$ equals a constant~$B$ that does not depend on the index~$i$.
For a more straightforward comparison, we also exclude the disintegrated bounds in~\eqref{eq:absolute-max-bound-disintegrated} and \eqref{eq:new-bound-disint}, and postpone their evaluation to Section~\ref{sec:numerical_results}.
Thus, in this section, we compare the square-root bound in~\eqref{eq:absolute-max-bound-integrated}, the linear bound in~\eqref{eq:steinke-lambdagamma}, the interpolation bound in~\eqref{eq:interpolating-steinke}, and the binary KL bound in~\eqref{eq:new-bound-inverted}.
In the following proposition, we establish an ordering between these bounds.
The result follows by straightforward arithmetic.

\begin{propo}\label{propo:ordering_interpolating}
Assume that~$\lCMIi=B$ for all~$i$ and that~$\trainriskavgshort =0$.
Let~$\gamma_{1,\textnormal{opt}}$ be the largest~$\gamma_1$ such that~$\gamma_1^2-4(e^{\gamma_1}-1)(e^{\gamma_1}-1-\gamma_1)\geq 0$, and assume that~$B<2\gamma_{1,\textnormal{opt}}^2\approx 0.27$.
Then, the upper bounds on~$\popriskavgshort$ are, in increasing order, the interpolation bound in~\eqref{eq:interpolating-steinke}, the binary KL bound in~\eqref{eq:new-bound-inverted}, the linear bound in~\eqref{eq:steinke-lambdagamma}, and the square-root bound in~\eqref{eq:absolute-max-bound-integrated}.
If~$B>2\gamma_{1,\textnormal{opt}}^2$, the ordering between the square-root bound in~\eqref{eq:steinke-lambdagamma} and the linear bound in~\eqref{eq:absolute-max-bound-integrated} is inverted.
\end{propo}

While Proposition~\ref{propo:ordering_interpolating} gives the ordering between the bounds for the interpolating scenario, the quantitative difference between them is not clear.
A numerical illustration is given in Figure~\ref{fig:five_bounds_comparison_interp}, under the same assumptions as in Proposition~\ref{propo:ordering_interpolating}.
\begin{figure}
    \centering
    \begin{subfigure}{0.40\textwidth}
        \includegraphics[width=\textwidth]{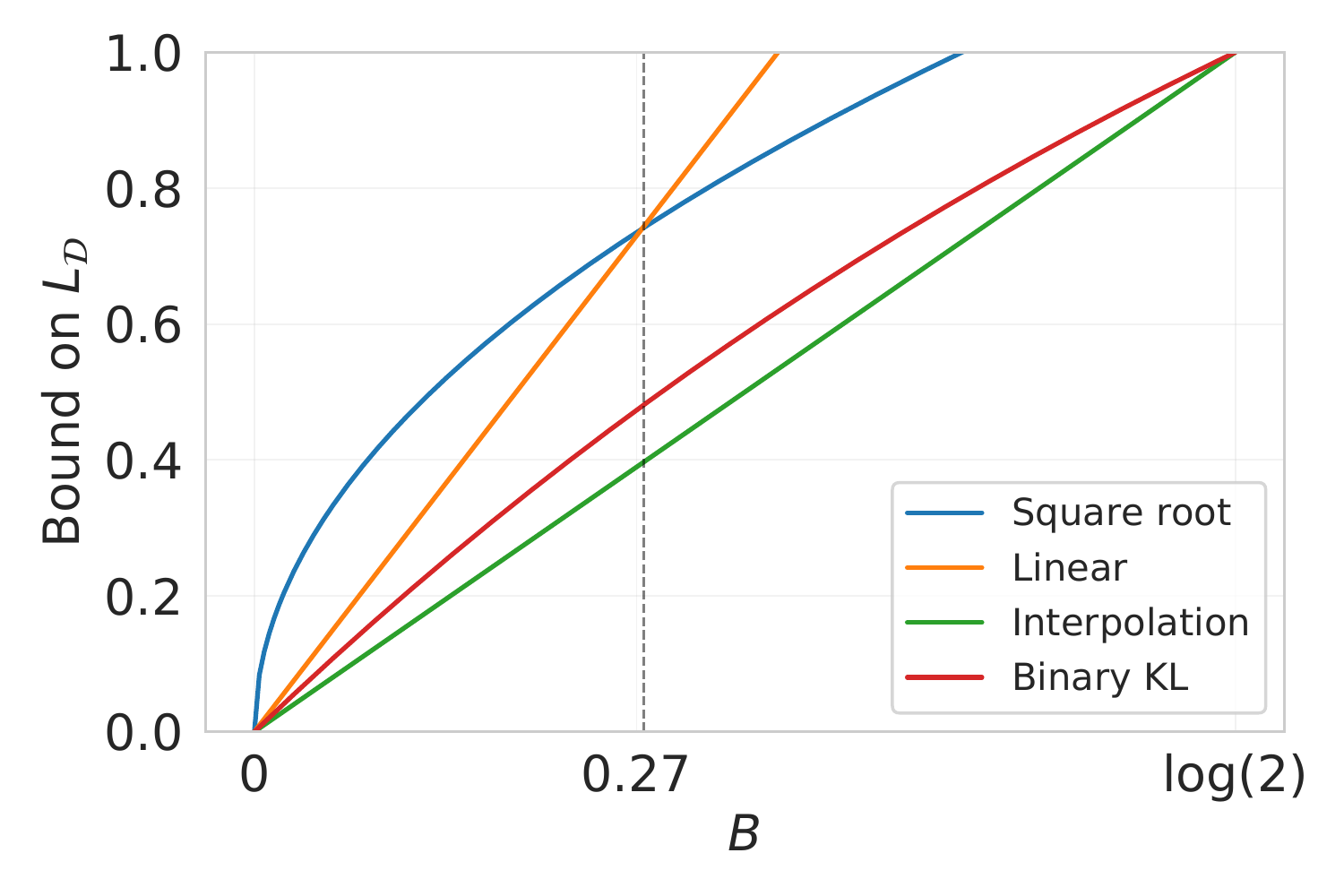}
    \caption{}
    \label{fig:five_bounds_comparison_interp}
    \end{subfigure}
    \begin{subfigure}{0.40\textwidth}
        \includegraphics[width=\textwidth,trim={2.5cm 0 0 1.0cm},clip]{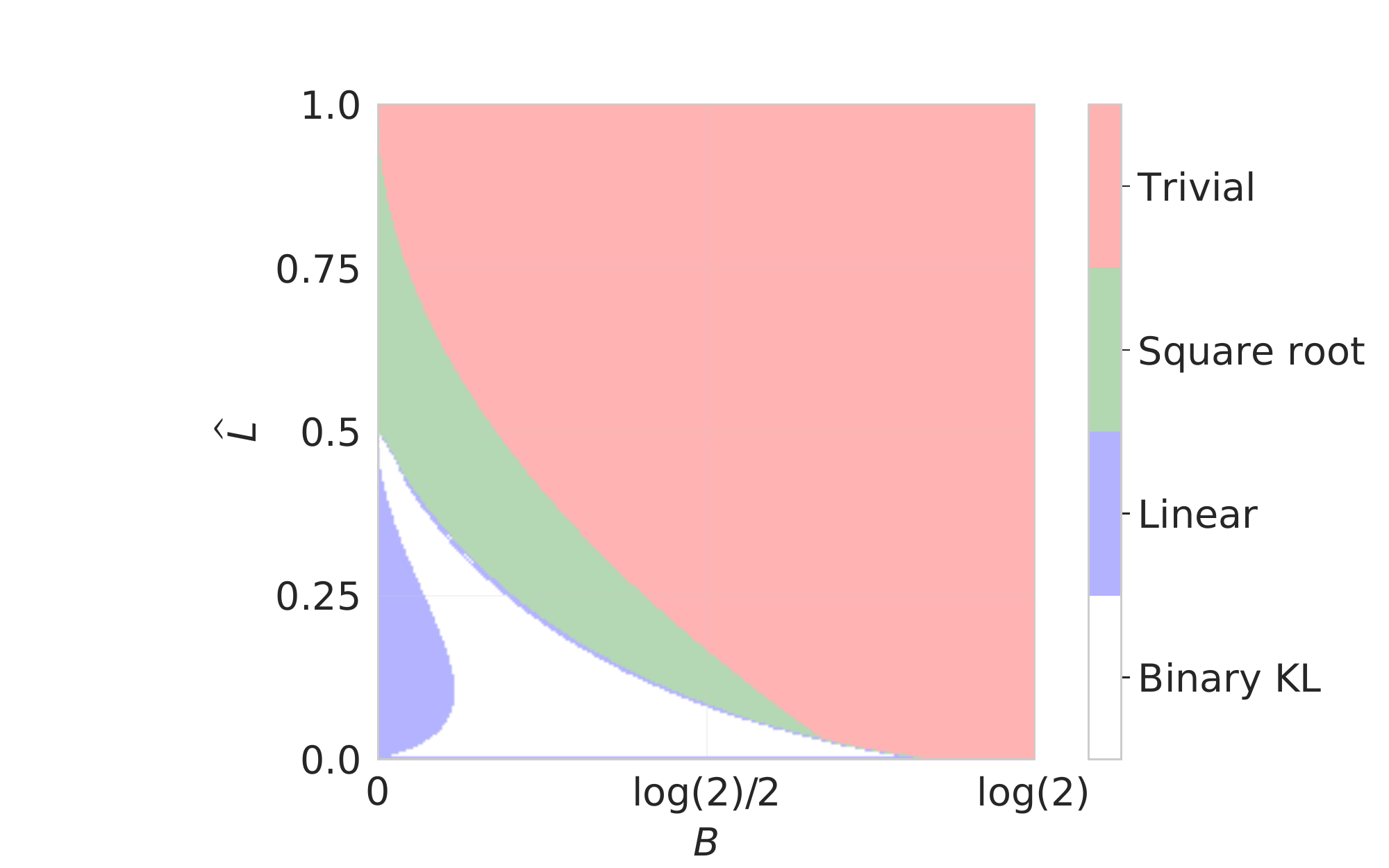}
    \caption{}
    \label{fig:three_bound_comparison}
    \end{subfigure}
    \caption{(a): A quantitative comparison between the square-root bound in~\eqref{eq:absolute-max-bound-integrated}, the linear bound in~\eqref{eq:steinke-lambdagamma}, the interpolation bound in~\eqref{eq:interpolating-steinke}, and the binary KL bound in~\eqref{eq:new-bound-inverted} under the assumptions of Proposition~\ref{propo:ordering_interpolating}. (b): A comparison between the square-root bound in~\eqref{eq:absolute-max-bound-integrated}, the linear bound in~\eqref{eq:steinke-lambdagamma} and the binary KL bound in~\eqref{eq:new-bound-inverted} under the assumptions of Proposition~\ref{propo:ordering_interpolating}, but with non-zero~$\trainriskavgshort$. Each point in the figure is color-coded according to which bound gives the tightest characterization of~$\popriskavgshort$ for the given parameters. For the region labeled \textit{Trivial}, no bound performs better than the trivial~$\popriskavgshort\leq 1$.}
\end{figure}
As established in Proposition~\ref{propo:ordering_interpolating}, the interpolation bound in \eqref{eq:interpolating-steinke} is superior across the range of values for~$B$, and the binary KL bound in~\eqref{eq:new-bound-inverted} is slightly looser.
However, while the constant~$\log 2$ in~\eqref{eq:interpolating-steinke} is sharp, the derivation is valid only for interpolating learning algorithms.
In contrast, the other bounds are only marginally affected when we allow for a small, non-zero training loss.
This added flexibility shows the usefulness of \eqref{eq:new-bound-inverted} as compared to \eqref{eq:interpolating-steinke}.

To obtain a more complete picture, %
we numerically evaluate the bounds for a range of values for~$\trainriskavgshort$ and~$B$.
This is illustrated in Figure~\ref{fig:three_bound_comparison}.
While the binary KL bound in~\eqref{eq:new-bound-inverted} is tightest for most small values of~$B$ and~$\trainriskavgshort$, a region of this space is dominated by the linear bound in~\eqref{eq:steinke-lambdagamma}.
When both~$B$ and~$\trainriskavgshort$ grow larger, the square-root bound in~\eqref{eq:absolute-max-bound-integrated} is the tightest.
Thus, the picture that emerges from these comparisons is that, when the training loss is zero, it is best to use~\eqref{eq:interpolating-steinke}, while we need~\eqref{eq:new-bound-inverted} for the case of a small, but non-zero, training loss. 
However, if the training loss is high, %
one needs to evaluate all three bounds and take the minimum.

\section{Numerical Results}\label{sec:numerical_results}
\begin{figure*}
\centering
\begin{subfigure}{.329\textwidth}
    \centering
    \includegraphics[width=\textwidth]{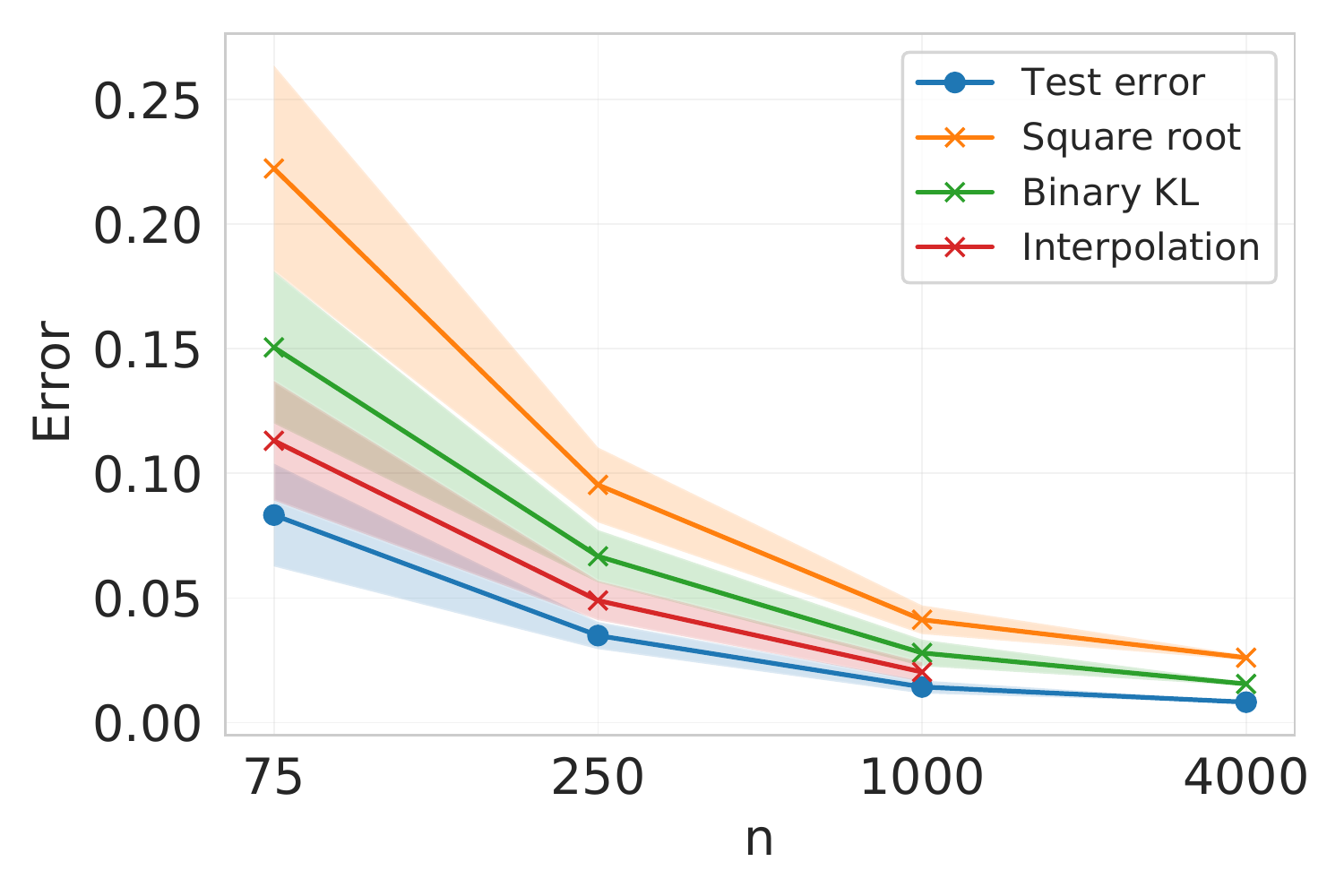}
    \caption{MNIST, Adam}
    \label{fig:mnist-sgd}
\end{subfigure}
\begin{subfigure}{.329\textwidth}
    \centering
    \includegraphics[width=\textwidth]{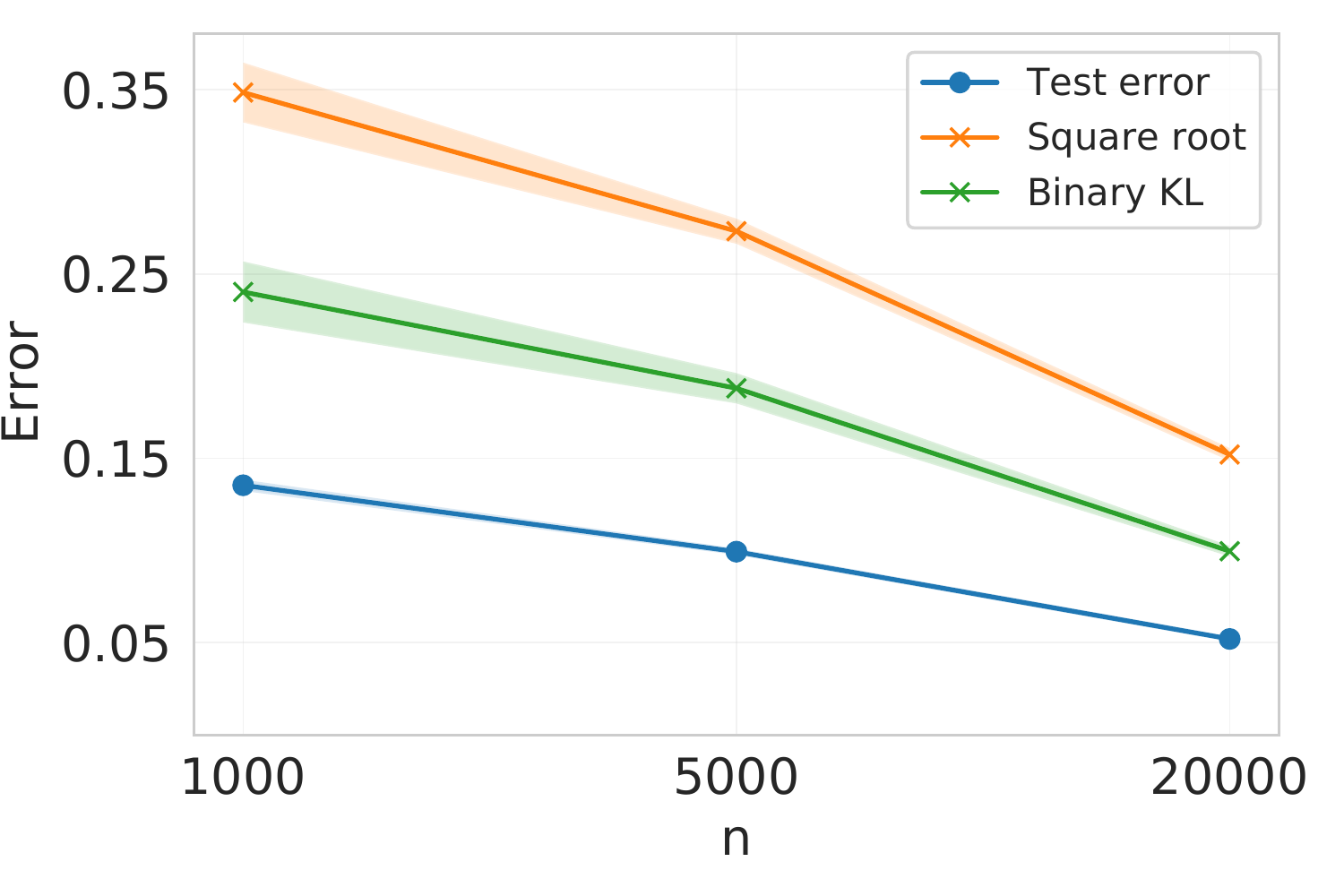}
    \caption{CIFAR10, SGD}
    \label{fig:cifar10-sgd}
\end{subfigure}
\begin{subfigure}{.329\textwidth}
    \centering
    \includegraphics[width=\textwidth]{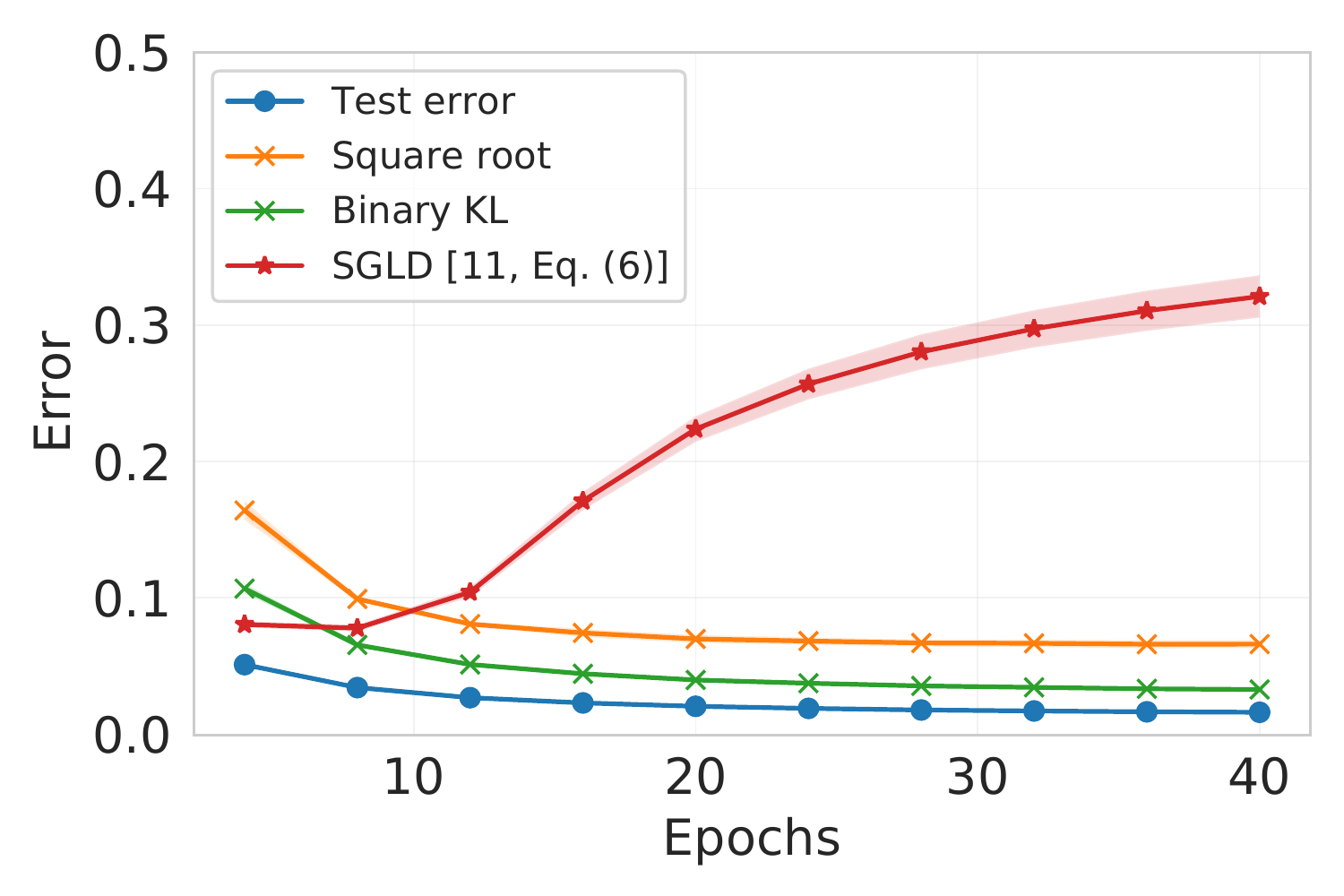}
    \caption{MNIST, SGLD}
    \label{fig:mnist-sgld}
\end{subfigure}
\caption{Numerical evaluation of the test error in three deep learning scenarios, along with the upper bounds provided by the square-root bound in~\eqref{eq:absolute-max-bound-disintegrated}, the binary KL bound in~\eqref{eq:new-bound-disint} and, when applicable, the interpolation bound in~\eqref{eq:interpolating-steinke} and the SGLD bound in~\cite[Eq.~6]{negrea-19a}.}
\label{fig:dnn-results}
\end{figure*}

We now consider three deep learning settings and evaluate the bounds derived in Section~\ref{sec:average_bounds}.
To the best of our knowledge, the tightest average generalization bounds available in the literature for typical deep learning scenarios, such as CNNs trained on MNIST~\cite{mnist} or CIFAR10~\cite{cifar10}, are found in~\cite{harutyunyan-21a}.
These numerical results are based on \cite[Eq.~22]{harutyunyan-21a}, which is a version of~\eqref{eq:absolute-max-bound-disintegrated} where the e-CMI is replaced by the f-CMI.
In order to perform a direct comparison, we consider the same experimental setups.
The code for our experiments is largely based on the code from~\cite{harutyunyan-21a}.\footnote{Available:~\url{https://github.com/hrayrhar/f-CMI}.}
Additional numerical results for a setting with randomized labels are given in Appendix~\ref{app:extra-numerical}.
A detailed description of the training methods, network architectures, and experimental setup is given in Appendix~\ref{app:experiments}.

The previous section indicated that the interpolation bound in~\eqref{eq:interpolating-steinke} and the binary KL bound in~\eqref{eq:new-bound-disint} are the superior bounds when the training loss is zero or low, respectively, which is common for deep learning.
Hence, we focus on these bounds, as well as on the square-root bound in~\eqref{eq:absolute-max-bound-disintegrated}.
When evaluating the bounds, we use the classification error as the loss function.
We consider two learning algorithms, namely stochastic gradient descent (SGD) with a fixed random seed and stochastic gradient Langevin dynamics (SGLD).
The first is a deterministic learning algorithm with high relevance for current practice.
For deterministic learning algorithms, generalization bounds based on the information captured by the weights of a neural network are vacuous, but as noted by~\cite{harutyunyan-21a}, bounds based on the information contained in the predictions (or the losses they induce) are typically nonvacuous.
The second is a randomized learning algorithm, which allows for comparison to weight-based information-theoretic generalization bounds.
For this setting, we also evaluate the SGLD bound of \cite[Eq.~6]{negrea-19a}.

First, we consider a CNN trained with Adam (a variant of SGD) on a binarized version of MNIST, where we only consider the digits~$4$ and~$9$.
The results are reported in Figure~\ref{fig:mnist-sgd}.
In binary classification, there is a one-to-one mapping between predictions and losses.
Hence, for this scenario, the information captured by the matrix of losses is the same as the information captured by the matrix of predictions.
For this scenario, the binary KL bound in~\eqref{eq:new-bound-disint} significantly improves on the square-root bound in~\eqref{eq:absolute-max-bound-disintegrated}.
For~$n=75$,~$250$, and~$1000$, all of the neural networks that we trained achieved zero training error, making the interpolation bound in~\eqref{eq:interpolating-steinke} a valid bound.
This bound results in the tightest characterization of the test error.
However, for~$n=4000$, the training error was non-zero for some neural networks.
As a consequence, the interpolation bound in~\eqref{eq:interpolating-steinke} was not applicable.

Next, in Figure~\ref{fig:cifar10-sgd}, we look at ResNet-50 pretrained on ImageNet and fine-tuned using SGD on CIFAR10.
In multi-class classification, the map from predictions to losses is no longer one-to-one.
Thus, this is a setting where the e-CMI is potentially lower than the f-CMI.
Again, the binary KL bound in~\eqref{eq:new-bound-disint} significantly improves on the square-root bound in~\eqref{eq:absolute-max-bound-disintegrated}.

Finally, as an example of a randomized learning algorithm, we consider a CNN trained with SGLD on the 
binarized MNIST data set.
This is shown in Figure~\ref{fig:mnist-sgld}.
As training progresses, both~\eqref{eq:new-bound-disint} and~\eqref{eq:absolute-max-bound-disintegrated} improve on the SGLD bound in~\cite[Eq.~6]{negrea-19a}.
However, the latter is lower for the initial epochs.
While \eqref{eq:new-bound-disint} reduces the magnitude of this discrepancy, which was also observed in \cite{harutyunyan-21a}, it does not fully close the gap.
This may be due to the fact that the bound in \cite[Eq.~6]{negrea-19a} has an expectation over~$\randomness$ outside of the square root, similar to \eqref{eq:R-disint-absolute-max-bound-disintegrated}, which may be beneficial during early epochs.
Another possibility is that the value that we compute for the mutual information may be an overestimate, especially for low values of the true mutual information.
This is an effect of the non-negativity of the mutual information.

\section{Discussion and Limitations}

In this paper, we presented several generalization bounds in terms of the evaluated CMI.
While the results hold for a generically formulated learning problem, they are derived under the assumption of a bounded loss function and independent and identically distributed~(\iid) training data.
While the boundedness assumption can be alleviated to some degree by using the same type of argument as in~\cite[Thm.~5.1]{steinke-20a}, it is unclear to what extent the \iid assumption can be relaxed.

Our experiments demonstrate that some of the derived bounds are numerically accurate for several deep learning scenarios.
As shown in Appendix~\ref{app:extra-numerical}, this remains true also when one considers randomized labels and varies the width, depth, and learning rate of the neural network.
These results indicate that this family of bounds is potentially powerful enough to guide the design of deep neural networks.
This is an intriguing avenue for future research.

\section*{Acknowledgements}
We thank Hrayr Harutyunyan for helpful comments regarding the numerical evaluation.
This work was partly supported by the Wallenberg AI, Autonomous Systems and Software Program (WASP) funded by the Knut and Alice Wallenberg Foundation and the Chalmers AI Research Center (CHAIR).

\newpage

\bibliography{reference}
\bibliographystyle{unsrt}

\newpage
\appendix

\section{Deferred Proofs}\label{app:proofs}
We now present the proofs of the results from the main paper.
In Section~\ref{sec:proof_of_average}, we present the proofs of the average generalization bounds from Section~\ref{sec:average_bounds}.
In Section~\ref{sec:proof_of_high_prob}, we prove the high-probability bounds from Section~\ref{sec:high-probability-bounds}.
Finally, in Section~\ref{sec:proof_of_info_bounds}, we prove the bounds on the information measures from Section~\ref{sec:expressiveness}.

\subsection{Proofs for Section~\ref{sec:average_bounds}}\label{sec:proof_of_average}

We start with Lemma~\ref{lem:main-inequality-generic-xy}.

\begin{proof}[Proof of Lemma \ref{lem:main-inequality-generic-xy}]
By Jensen's inequality,
\begin{equation}f_\gamma\lefto(\Ex{X,Y}{g_1(X,Y)}, \Ex{X,Y}{g_2(X,Y)}\righto)\leq \Ex{X,Y}{f_\gamma\lefto(g_1(X,Y), g_2(X,Y)\righto)}.\end{equation}
By Donsker-Varadhan's variational representation of the KL divergence~\cite{donsker-75a}, we have
\begin{equation}\Ex{X,Y}{f_\gamma\lefto(g_1(X,Y), g_2(X,Y)\righto)}\leq \relent{P_{XY}}{P_XP_Y}+\log\Ex{X,Y'}{e^{f_\gamma(g_1(X,Y'), g_2(X,Y')}}.\end{equation}
The result follows by noting that $\relent{P_{XY}}{P_XP_Y}=I(X;Y)$, by re-arranging, and by taking the supremum with respect to~$\gamma$.\end{proof}

We now present the proofs of the CMI results given in Section~\ref{sec:average_bounds}.
We begin with Remark~\ref{rem:data-processing}.

\begin{proof}[Proof of Remark~\ref{rem:data-processing}]
Since each of the random variables is obtained as a function of the previous one, we have the Markov chain
\begin{equation}
W\rightarrow \phi_W(\supersamplesmallx_\randomsubsetsmall)\rightarrow \ell(W,\supersamplesmall_\randomsubsetsmall).
\end{equation}
Therefore, the result follows by the data-processing inequality~\cite[Thm.~2.3.4]{polyanskiy-19a}.
\end{proof}

Next, we specialize the result of Lemma~\ref{lem:main-inequality-generic-xy} to the CMI setting in a way that allows for disintegrated CMI results and random subsets.

\begin{cor}\label{cor:inequality-for-cmi}
Consider the CMI setting, and fix~$\supersamplesmall$ and~$\randomsubsetsmall$.
As a shorthand, we let~$\lambda_\randomsubsetsmall=\ell(\learner(\trainingdatazS,\randomness),\supersamplesmall_\randomsubsetsmall)$ denote the matrix of losses incurred on each entry of~$\supersamplesmall_\randomsubsetsmall$.
Let~$\lambda_{\subsetchoice_\randomsubsetsmall}=(\lambda_{\randomsubsetsmall_1,\subsetchoice_{\randomsubsetsmall_1}},\dots,\lambda_{\randomsubsetsmall_m,\subsetchoice_{\randomsubsetsmall_m}})$.
Furthermore, let~$\lambdatrainrisk=\frac1m \sum_{i=1}^m \lambda_{u_i,S_{u_i}}$ denote the average loss on the entries of~$\lambda$ selected on the basis of~$\randomsubsetsmall$ and~$\subsetchoice$.
Let~$\subsetchoice'$ be a random variable with the same marginal distribution as~$\subsetchoice$ such that~$\lambda$ and~$\subsetchoice'$ are independent.
Then,
\begin{equation}\label{eq:result_of_lemma_for_cmi}
\sup_\gamma \Ex{\lambda_{\randomsubsetsmall},\subsetchoice_{\randomsubsetsmall}}{f_\gamma(\lambdatrainrisk,\lambdatestrisk )} - \log \Ex{\lambda_{\randomsubsetsmall},\subsetchoice'_{\randomsubsetsmall}}{e^{ f_\gamma(\lambdatrainriskp,\lambdatestriskp) }} \leq I^{\supersamplesmall,u}(\lambda_{\randomsubsetsmall};S_{\randomsubsetsmall}).
\end{equation}
\end{cor}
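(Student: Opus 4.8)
The plan is to obtain~\eqref{eq:result_of_lemma_for_cmi} as a direct instantiation of Lemma~\ref{lem:main-inequality-generic-xy}, carried out after conditioning on the fixed values~$\supersamplesmall$ and~$\randomsubsetsmall$ so that all expectations and the mutual information become their disintegrated counterparts. First I would set~$X=\lambda_\randomsubsetsmall$, the~$m\times 2$ loss matrix, whose randomness (for fixed~$\supersamplesmall,\randomsubsetsmall$) comes from~$\subsetchoice$ and~$\randomness$, and~$Y=\subsetchoice_\randomsubsetsmall$, the~$m$ membership bits selected by~$\randomsubsetsmall$. I would take the independent copy to be~$Y'=\subsetchoice'_\randomsubsetsmall$; since~$\subsetchoice'$ is an independent copy of~$\subsetchoice$ with~$\lambda$ independent of~$\subsetchoice'$, the pair~$(Y,Y')$ meets the lemma's requirement that~$Y'$ share the marginal of~$Y$ and be independent of~$X$.

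Next I would specify the two functions as column selections,~$g_1(x,y)=\frac1m\sum_{i=1}^m x_{i,y_i}$ and~$g_2(x,y)=\frac1m\sum_{i=1}^m x_{i,1-y_i}$, so that~$g_1(X,Y)=\lambdatrainrisk$,~$g_2(X,Y)=\lambdatestrisk$,~$g_1(X,Y')=\lambdatrainriskp$, and~$g_2(X,Y')=\lambdatestriskp$. The key conceptual point to make explicit here is that, although the loss matrix~$\lambda_\randomsubsetsmall$ is itself coupled to~$\subsetchoice_\randomsubsetsmall$ through the learning algorithm, the averaged training and test losses are nonetheless deterministic functions of the pair~$(X,Y)$ alone, since they merely read off entries of the already-computed matrix~$X$ according to the bits in~$Y$.

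Before concluding I would check the regularity hypotheses. Boundedness of the loss gives that every entry of~$X$ lies in~$[0,1]$, hence~$g_1,g_2\in[0,1]$ with finite first moments, and joint convexity makes~$f_\gamma$ continuous and therefore bounded on the compact square~$[0,1]^2$, so that both the expectation on the left of~\eqref{eq:result_of_lemma_for_cmi} and~$\xi_\gamma$ are finite. The one hypothesis I expect to require a word of justification---and thus the main, if mild, obstacle---is the absolute continuity of the joint law of~$(X,Y)$ with respect to that of~$(X,Y')$. This is where the discrete structure of the problem saves us:~$Y=\subsetchoice_\randomsubsetsmall$ is supported on the finite set~$\{0,1\}^m$, and under~$P_X P_{Y'}=P_X P_Y$ every atom carries mass~$2^{-m}>0$ independently of~$X$, so~$P_{XY}$ is absolutely continuous with respect to~$P_X P_Y$ and the disintegrated mutual information is well defined. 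With these checks in hand, Lemma~\ref{lem:main-inequality-generic-xy} yields the stated inequality, the right-hand side~$I(X;Y)$ being exactly~$I^{\supersamplesmall,u}(\lambda_\randomsubsetsmall;\subsetchoice_\randomsubsetsmall)$ by virtue of having conditioned on~$\supersamplesmall$ and~$\randomsubsetsmall$ throughout.
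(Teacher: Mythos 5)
Your proposal is correct and follows essentially the same route as the paper: instantiating Lemma~\ref{lem:main-inequality-generic-xy} with $X=\lambda_\randomsubsetsmall$, $Y=\subsetchoice_\randomsubsetsmall$, $Y'=\subsetchoice'_\randomsubsetsmall$, and $g_1,g_2$ given by the column selections yielding $\lambdatrainrisk$ and $\lambdatestrisk$, with absolute continuity guaranteed by the discreteness of $\subsetchoice_\randomsubsetsmall$. Your additional explicit verification of the finiteness hypotheses and the remark that the averaged losses are deterministic functions of the pair $(X,Y)$ are sound elaborations of details the paper leaves implicit.
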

\begin{proof}
The result follows immediately by applying Lemma~\ref{lem:main-inequality-generic-xy} with~$X=\lambda_{\randomsubsetsmall}$,~$Y=\subsetchoice_{\randomsubsetsmall}$, $g_1(\lambda_{\randomsubsetsmall},\subsetchoice_{\randomsubsetsmall})=\lambdatrainrisk$, and~$g_2(\lambda_{\randomsubsetsmall},\subsetchoice_{\randomsubsetsmall})=\lambdatestrisk$.
Since~$\subsetchoice_{\randomsubsetsmall}$ is a discrete random variable, the joint distribution of~$\lambda_{\randomsubsetsmall},\subsetchoice_{\randomsubsetsmall}$ is absolutely continuous with respect to the product of the marginal distributions, that is, the joint distribution of~$\lambda_{\randomsubsetsmall},\subsetchoice_{\randomsubsetsmall}'$.
\end{proof}

With this, we are ready to derive the results in Theorem~\ref{thm:subgauss-results}.

\begin{proof}[Proof of Theorem~\ref{thm:subgauss-results}]
Let~$f_\gamma(\lambdatrainriskp,\lambdatestriskp)=\gamma(\lambdatestriskp-\lambdatrainriskp)\equiv \gamma\Delta_{\subsetchoice'_u}$,
where we use the same notation as in Corollary~\ref{cor:inequality-for-cmi}.
Due to the symmetry in the definition of the training and test sets, we have that~$\Delta_{\subsetchoice'_u}=-\Delta_{\bar\subsetchoice'_u}$. This implies that~$\Ex{{\subsetchoice'_u}}{\Delta_{\subsetchoice'_u}}=0$.
Furthermore,~$\Delta_{\subsetchoice'_u}$ is the arithmetic average of~$m$ independent terms, each with a range of~$[-1,1]$.
Thus,~$\Delta_{\subsetchoice'_u}$ is a $1/\sqrt{m}$-sub-Gaussian random variable, from which it follows by definition that
\begin{equation}\label{eq:first-step-proof-max}
\log \Ex{{\subsetchoice'_u}}{e^{\gamma \Delta_{\subsetchoice'_u}}}\leq \frac{\gamma^2}{2m}.
\end{equation}
Inserting this into~\eqref{eq:result_of_lemma_for_cmi}, we find that
\begin{equation}
\sup_\gamma \gamma\lefto( \Ex{\lambda_{\randomsubsetsmall},\subsetchoice_{\randomsubsetsmall}}{\lambdatrainrisk\!-\!\lambdatestrisk} \righto) \!-\! \frac{\gamma^2}{2m} \!=\!\frac{m\lefto(\Ex{\lambda_{\randomsubsetsmall},\subsetchoice_{\randomsubsetsmall}}{\lambdatrainrisk \!-\! \lambdatestrisk}\righto)^2}{2}  \leq I^{\supersamplesmall,u}(\lambda_{\randomsubsetsmall};\subsetchoice_{\randomsubsetsmall}).
\end{equation}
Since this result is valid for any fixed~$\supersamplesmall$ and~$u$, it also holds when we average over them. Thus, by re-arranging and averaging, replacing shorthands by their long forms,
\begin{multline}
\Ex{\supersample,\randomsubset}{\abs{ \Ex{\subsetchoice,\randomness}{\trainriskU-\testriskU} }} \\
\leq \Ex{\supersample,\randomsubset }{ \sqrt{ \frac{2I^{\supersample,U}(\ell(\learner(\trainingdata,\randomness),\supersample_\randomsubset);\subsetchoice_\randomsubset)}{m} } }.
\end{multline}
By applying Jensen's inequality, the expectation and absolute value in the left-hand side can be swapped, and by a further use of Jensen's inequality, we can similarly swap the expectation and square root in the right-hand side.
Thus, we have shown that
\begin{align}
\abs{\gengapavg} &\leq \Ex{\supersample,\randomsubset }{ \sqrt{\frac{2 I^{\supersample,U}(\ell(\learner(\trainingdata,\randomness),\supersample_\randomsubset);\subsetchoice_\randomsubset)}{m} } } \\
&\leq \Ex{\randomsubset}{\sqrt{\frac{2\lCMIU }{m}}}.
\end{align}
The right-hand side is an increasing function in the size $m$ of the random subset~$\randomsubset$~\cite[Prop.~1]{harutyunyan-21a}. Hence, we set $m=1$.
This establishes the first part of Theorem~\ref{thm:subgauss-results}.
For the second part of the theorem, we instead consider~$f_\gamma(\lambdatrainriskp,\lambdatestriskp)=\gamma\Delta_{\subsetchoice'_u}^2$, with~$\gamma \in [0,m/4)$.
Since~$\Delta_{\subsetchoice'_u}$ is~$1/\sqrt{m}$-sub-Gaussian with zero mean, we have~\cite[Lemma~2]{harutyunyan-21a}
\begin{equation}
\log \Ex{\subsetchoice'_u}{e^{\gamma \Delta_{\subsetchoice'_u}^2}} \leq \log\lefto(1+\frac{8\gamma}{m}\righto).
\end{equation}
By using this in~\eqref{eq:result_of_lemma_for_cmi}, we find that
\begin{equation}\label{eq:before-gamma-sup-pf-of-thm-1}
\sup_\gamma \gamma\Ex{\lambda_{\randomsubsetsmall},\subsetchoice_{\randomsubsetsmall}}{\lefto( \lambdatrainrisk-\lambdatestrisk \righto)^2} - \log\lefto(1+\frac{8\gamma}{m}\righto) \leq I^{\supersamplesmall,u}(\lambda_\randomsubsetsmall;\subsetchoice_\randomsubsetsmall).
\end{equation}
For reasonable values of the involved quantities, the left-hand side is expected to grow as a function of~$\gamma$.
We therefore let~$\gamma\rightarrow m/4$.
Since the bound in~\eqref{eq:before-gamma-sup-pf-of-thm-1} holds for the supremum, it also holds for any other choice of~$\gamma$, including this one.
We now have
\begin{equation}
\Ex{\lambda_{\randomsubsetsmall},\subsetchoice_{\randomsubsetsmall}}{\lefto( \lambdatrainrisk-\lambdatestrisk \righto)^2} \leq \frac{4}{m}\left( I^{\supersamplesmall,u}(\lambda_\randomsubsetsmall;\subsetchoice_\randomsubsetsmall)+\log 3 \right).
\end{equation}
Averaging over~$\supersample$ and~$\randomsubset$, replacing shorthands by their long forms, we obtain
\begin{multline}
\Ex{\supersample,\randomness,\subsetchoice,\randomsubset}{\lefto( \trainriskU-\testriskU \righto)^2} \\
\leq \frac{4}{m}\left( \lCMIU+\log 3 \right).
\end{multline}
We now want to replace the test loss inside the square with the population loss, while still keeping the expectation outside the square. We can do this by using the results of~\cite[Eq.~(116)-Eq.~(123)]{harutyunyan-21a}, from which it follows that
\begin{align}
&\Ex{\supersample,\randomness,\subsetchoice}{\lefto( \poprisk\!-\!\trainrisk \righto)^2} \nonumber\\
&\!\leq\! 2\Ex{\supersample,\randomness,\subsetchoice,\randomsubset}{\lefto( \testriskU-\trainriskU \righto)^2} \!+\! \frac{1}{2m} \\
&\leq \frac{8}{m}\left( \lCMIU+\log 3 \right) + \frac{1}{2m}\\
&\leq \frac{8}{m}\left(\ \lCMIU+2 \right).
\end{align}
\end{proof}

\begin{proof}[Proof of Theorem~\ref{thm:disintegrated-with-R-too}]
Consider the CMI setting, with a fixed~$\supersamplesmall$,~$\randomsubsetsmall$ and~$\randomnesssmall$.
As a shorthand, we let~$\tau_\randomsubsetsmall=\ell(\learner(\trainingdatazS,\randomnesssmall),\supersamplesmall_\randomsubsetsmall)$ denote the matrix of losses incurred on each sample of~$\supersamplesmall_\randomsubsetsmall$.
Let~$\tau_{\subsetchoice_\randomsubsetsmall}=(\tau_{\randomsubsetsmall_1,\subsetchoice_{\randomsubsetsmall_1}} ,\dots,  \tau_{\randomsubsetsmall_m,\subsetchoice_{\randomsubsetsmall_m}}  )$.
Furthermore, let~$\tautrainrisk=\frac1m \sum_{i=1}^m \tau_{u_i,S_i}=\trainriskzSru$ denote the average loss on the samples of~$\tau$ indicated by~$\randomsubsetsmall$ and~$\subsetchoice$.
By following the same steps as in the proof of Theorem~\ref{cor:inequality-for-cmi}, we can show that
\begin{equation}\label{eq:result_of_lemma_for_cmi_tau}
\sup_\gamma \Ex{\tau_{\randomsubsetsmall},\subsetchoice_{\randomsubsetsmall}}{f_\gamma(\tautrainrisk,\tautestrisk )} - \log \Ex{\tau_{\randomsubsetsmall},\subsetchoice'_{\randomsubsetsmall}}{e^{ f_\gamma(\tautrainriskp,\tautestriskp) }} \leq I^{\supersamplesmall,u,r}(\tau_{\randomsubsetsmall};S_{\randomsubsetsmall}).
\end{equation}
Let~$f_\gamma(\tautrainriskp,\tautestriskp)=\gamma(\tautestriskp-\tautrainriskp)= \gamma\Xi_{\subsetchoice'_u}$.
By the same argument as in the proof of Theorem~\ref{thm:subgauss-results}, we conclude that~$\Xi_{\subsetchoice'_u}$ is~$1/\sqrt m$-sub-Gaussian. Thus,~\eqref{eq:first-step-proof-max} holds with~$\Xi_{\subsetchoice'_u}$ in place of~$\Delta_{\subsetchoice'_u}$.
Inserting this into~\eqref{eq:result_of_lemma_for_cmi_tau}, we find that
\begin{equation}
\sup_\gamma \gamma\lefto( \Ex{\tau_{\randomsubsetsmall},\subsetchoice_{\randomsubsetsmall}}{\tautrainrisk-\tautestrisk} \righto) - \frac{\gamma^2}{2m} =\frac{m\lefto(\Ex{\tau_{\randomsubsetsmall},\subsetchoice_{\randomsubsetsmall}}{\tautrainrisk-\tautestrisk}\righto)^2}{2}  \leq I^{\supersamplesmall,u,r}(\tau_{\randomsubsetsmall};\subsetchoice_{\randomsubsetsmall}).
\end{equation}
Finally, by re-arranging, replacing shorthands by their long forms, averaging over~$\supersample$, $\subsetchoice$, and~$\randomness$, and using Jensen's inequality to move the expectation inside the absolute value, we find that
\begin{align}
\abs{\gengapavg} &\leq \Ex{\supersample,\randomsubset,\randomness }{ \sqrt{\frac{2 I^{\supersample,U,\randomness}(\ell(\learner(\trainingdata,\randomness),\supersample_\randomsubset);\subsetchoice_\randomsubset)}{m} } }.
\end{align}
The desired bound in~\eqref{eq:R-disint-absolute-max-bound-disintegrated} follows by setting~$m=1$, and~\eqref{eq:R-disint-absolute-max-bound-integrated} follows by Jensen's inequality.
\end{proof}

\begin{proof}[Proof of Theorem~\ref{thm:steinke-results}]
We need to use the following two concentration results that are implicitly used in~\cite{steinke-20a} and more explicitly stated in~\cite{hellstrom-21b}.
\begin{lem}\label{lem:steinke-conc}
Consider a binary random variable~$X$ satisfying~$P(X=a)=P(X=b)=1/2$ where~$a,b\in [0,1]$. Let~$\bar X = a+b-X$ be the complement of~$X$ in~$\{a,b\}$.
Assume that~$\gamma_1$ and~$\gamma_2$ satisfy $\gamma_1(1-\gamma_2)+(e^{\gamma_1}-1-\gamma_1)(1+\gamma_2^2)\leq 0$.
Then,
\begin{equation}\label{eq:steinke-lambdagamma-concentration}
\Ex{X}{e^{\gamma_1\left(X-\gamma_2\bar X\right)}} \leq 1.
\end{equation}

Furthermore, assume that~$a=0$.
Then,
\begin{equation}\label{eq:steinke-interp-concentration}
\lim_{\gamma\rightarrow \infty}\Ex{X}{e^{\log 2(\bar X-\gamma X)}} \leq 1.
\end{equation}
\end{lem}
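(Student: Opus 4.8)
The plan is to prove each inequality by writing the two-point expectation explicitly and then, for \eqref{eq:steinke-lambdagamma-concentration}, reducing to the extreme points of the box $[0,1]^2$ by convexity. Since $X$ equals $a$ or $b$ with probability $1/2$ each and $\bar X=a+b-X$, the exponent $X-\gamma_2\bar X$ takes the values $a-\gamma_2 b$ and $b-\gamma_2 a$, so
\begin{equation}
\Ex{X}{e^{\gamma_1(X-\gamma_2\bar X)}}=\tfrac12 e^{\gamma_1(a-\gamma_2 b)}+\tfrac12 e^{\gamma_1(b-\gamma_2 a)}=:F(a,b).
\end{equation}
The key structural point is that both exponents are affine in $(a,b)$, so $F$ is a sum of exponentials of affine maps and hence jointly convex on $[0,1]^2$. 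A convex function on a box attains its maximum at a vertex, so it suffices to check $F\le 1$ at the four corners.

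Two corners are immediate. At $(0,0)$ we get $F(0,0)=1$. At $(1,1)$ we get $F(1,1)=e^{\gamma_1(1-\gamma_2)}$, and the constraint rearranges to $\gamma_1(1-\gamma_2)\le-(e^{\gamma_1}-1-\gamma_1)(1+\gamma_2^2)\le 0$, so $F(1,1)\le 1$. By symmetry the remaining two corners coincide, $F(0,1)=F(1,0)=\tfrac12\big(e^{\gamma_1}+e^{-\gamma_1\gamma_2}\big)$, so the entire claim reduces to the scalar inequality $e^{\gamma_1}+e^{-\gamma_1\gamma_2}\le 2$.

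This scalar inequality is the crux, and the step where the exact form of the constraint is needed. I would control the two exponentials by polynomial surrogates chosen so that their sum lands precisely on the constraint: the identity $e^{\gamma_1}=1+\gamma_1+(e^{\gamma_1}-1-\gamma_1)$ together with $e^{-t}\le 1-t+t^2/2$ (valid for $t=\gamma_1\gamma_2\ge 0$) gives $e^{\gamma_1}+e^{-\gamma_1\gamma_2}\le 2+\gamma_1(1-\gamma_2)+(e^{\gamma_1}-1-\gamma_1)+\tfrac12\gamma_1^2\gamma_2^2$. Using $e^{\gamma_1}-1-\gamma_1\ge \gamma_1^2/2$ to absorb $\tfrac12\gamma_1^2\gamma_2^2\le(e^{\gamma_1}-1-\gamma_1)\gamma_2^2$, the bound becomes $2+\gamma_1(1-\gamma_2)+(e^{\gamma_1}-1-\gamma_1)(1+\gamma_2^2)$, whose correction term is nonpositive by hypothesis. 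Hence $F(0,1)\le 1$, and combining with the corner evaluations establishes \eqref{eq:steinke-lambdagamma-concentration}. The main obstacle is precisely this matching of surrogates to the constraint; the convexity reduction itself is routine.

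For \eqref{eq:steinke-interp-concentration} I set $a=0$, so $X\in\{0,b\}$ and $\bar X\in\{b,0\}$ correlate as before. The two-point average is
\begin{equation}
\Ex{X}{e^{\log 2\,(\bar X-\gamma X)}}=\tfrac12\,2^{\,b}+\tfrac12\,2^{-\gamma b}.
\end{equation}
Letting $\gamma\to\infty$, the second term vanishes when $b>0$ and remains $1$ when $b=0$; in both cases the limit is at most $\tfrac12\,2^{\,b}\le \tfrac12\cdot 2=1$ since $b\in[0,1]$, which gives \eqref{eq:steinke-interp-concentration}.
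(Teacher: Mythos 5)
Your proof is correct, but the first inequality is handled quite differently from the paper: there, \eqref{eq:steinke-lambdagamma-concentration} is dispatched by citing Eq.~(7) of the reference \emph{hellstrom-21a}, whereas you give a self-contained argument. Your route --- writing the two-point expectation as a jointly convex function $F(a,b)$ of exponentials of affine maps, reducing to the four vertices of $[0,1]^2$, and then proving the only nontrivial corner inequality $e^{\gamma_1}+e^{-\gamma_1\gamma_2}\leq 2$ via the surrogates $e^{-t}\leq 1-t+t^2/2$ and $e^{\gamma_1}-1-\gamma_1\geq\gamma_1^2/2$ --- is sound and has the virtue of showing exactly how the constraint $\gamma_1(1-\gamma_2)+(e^{\gamma_1}-1-\gamma_1)(1+\gamma_2^2)\leq 0$ arises, namely as the condition making the corner $(a,b)=(0,1)$ work; the paper's citation is shorter but opaque. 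Two small points: your Taylor bound for $e^{-t}$ needs $t=\gamma_1\gamma_2\geq 0$, so you are implicitly using the positivity of $\gamma_1,\gamma_2$ (which holds in the paper's usage, where $\Gamma\subset\reals_+^2$, but is not stated in the lemma itself and should be flagged); and in the last sentence the phrase ``in both cases the limit is at most $\tfrac12 2^b$'' is literally false for $b=0$, where the limit equals $1$ rather than $\tfrac12$ --- the conclusion $\leq 1$ still holds, so this is only a wording slip. The second part of your argument is essentially identical to the paper's.
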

\begin{proof}
The first part follows by \cite[Eq. (7)]{hellstrom-21a}.
For the second part, note that
\begin{equation}\label{eq:if-b-equal-0}
\Ex{X}{e^{\log 2(\bar X-\gamma X})}=\frac{e^{b\log 2}+e^{-b\gamma\log 2}}{2}.
\end{equation}
If $b=0$, the right-hand side of~\eqref{eq:if-b-equal-0} equals~$1$ for all~$\gamma$.
If~$b\neq 0$, we have
\begin{equation}
\lim_{\gamma \rightarrow \infty}\frac{e^{b\log 2}+e^{-b\gamma\log 2}}{2} = \frac{e^{b\log 2}}{2} \leq \frac{e^{\log 2}}{2} = 1,
\end{equation}
where the last inequality follows because~$b\in[0,1]$.
\end{proof}

We now proceed with the proof of Theorem \ref{thm:steinke-results}.
Fix~$\supersamplesmall$ and~$\randomsubsetsmall$.
To apply Corollary~\ref{cor:inequality-for-cmi}, we set~$f_\gamma(\lambdatrainriskp,\lambdatestriskp)=m\gamma_1(\lambdatrainriskp-\gamma_2\lambdatestriskp)$, where we use the same notation as in Corollary~\ref{cor:inequality-for-cmi}.
Since the elements of~$\subsetchoice'_u$ are independent,
\begin{equation}
\Ex{\subsetchoice'_\randomsubsetsmall}{e^{m\gamma_1(\lambdatrainriskp-\gamma_2\lambdatestriskp)}} = \prod_{i=1}^m \Ex{\subsetchoice'_{u_i}}{e^{\gamma_1\lefto(\lambda_{u_i,\subsetchoice'_{u_i}} -\gamma_2\lambda_{u_i,\bar\subsetchoice'_{u_i}} \righto)}} \leq 1
\end{equation}
where the last inequality follows from~\eqref{eq:steinke-lambdagamma-concentration}.
Hence,
\begin{equation}\label{eq:since-it-holds-for-supremum}
\sup_\gamma\Ex{\lambda_{\randomsubsetsmall},\subsetchoice_{\randomsubsetsmall}}{n\gamma_1(\lambdatrainrisk-\gamma_2\lambdatestrisk)} \leq I^{\supersamplesmall, u}(\lambda_{\randomsubsetsmall};S_{\randomsubsetsmall}).
\end{equation}
Since~\eqref{eq:since-it-holds-for-supremum} holds when we take the supremum over~$\gamma$, it also holds for any other choice of $\gamma$.
For now, let~$\gamma=(\gamma_1,\gamma_2)$ be any pair of parameters that satisfy the constraint of Theorem~\ref{thm:steinke-results}.
Averaging over~$\supersample$ and~$\randomsubset$, replacing shorthands by their long forms, we get
\begin{equation}
\popriskavg \leq \gamma_2\trainriskavg + \frac{\lCMIU}{m\gamma_1}.
\end{equation}
Since the right-hand side is an increasing function of~$m$, the tightest result is obtained for~$m=1$.
Making this choice and optimizing the parameters, we obtain~\eqref{eq:steinke-lambdagamma}.

We now turn to \eqref{eq:interpolating-steinke}.
Fix $\supersamplesmall$ and~$\randomsubsetsmall$.
Now, set $f_\gamma(\lambdatrainriskp,\lambdatestriskp)=m\log 2(\lambdatestriskp-\gamma\lambdatrainriskp)$.
Then, \eqref{eq:result_of_lemma_for_cmi} implies that
\begin{equation}\label{eq:proof-of-interp-before-limit}
\Ex{\lambda_{\randomsubsetsmall},\subsetchoice_{\randomsubsetsmall}}{m\log 2(\lambdatestrisk-\gamma\lambdatrainrisk)} - \log \Ex{\lambda_{\randomsubsetsmall},\subsetchoice'_{\randomsubsetsmall}}{e^{ m\log 2(\lambdatestriskp-\gamma\lambdatrainriskp) }} \leq I^{\supersamplesmall,\randomsubsetsmall}(\lambda_{\randomsubsetsmall};S_{\randomsubsetsmall}).
\end{equation}
In view of~\eqref{eq:steinke-interp-concentration}, we let~$\gamma\rightarrow\infty$. 
Note that the assumption that~$\trainriskavgshort=0$ implies that~$\lambda_{i,\subsetchoice_i}=0$ with probability~$1$.
Thus, by the independence of the elements of $\subsetchoice'_\randomsubsetsmall$, \eqref{eq:steinke-interp-concentration} implies that
\begin{equation}
\lim_{\gamma \rightarrow \infty}\Ex{\lambda_{\randomsubsetsmall\!},\subsetchoice'_\randomsubsetsmall}{e^{m\log 2(\lambdatestriskpU-\gamma\lambdatrainriskpU)}\!} \!=\! \lim_{\gamma \rightarrow \infty}\prod_{i=1}^m \Ex{\lambda_{\randomsubsetsmall},\subsetchoice'_{U_i}\!}{e^{\log 2\lefto(\lambda_{U_i,\bar\subsetchoice'_{U_i}} -\gamma\lambda_{U_i,\subsetchoice'_{U_i}} \righto)}} \!\leq\! 1.\!
\end{equation}
We now average over $\supersample$ and
$\randomsubset$.
By the assumption that $\trainriskavgshort=0$, we have
\begin{equation}
\Ex{\supersample,\randomsubset,\lambda_{\randomsubset},\subsetchoice_{\randomsubset}}{m\log 2(\lambdatestrisk-\gamma\lambdatrainrisk)} = \Ex{\supersample,\randomsubset,\lambda_{\randomsubset},\subsetchoice_{\randomsubset}}{m\log 2(\lambdatestrisk)}.
\end{equation}
This means that we can discard the second term in \eqref{eq:proof-of-interp-before-limit} when~$\gamma\rightarrow\infty$.
We finally establish the desired result by setting~$m=1$.
\end{proof}

\begin{proof}[Proof of Lemma~\ref{lem:mcallester-concentration}]
Let~$Y_1,\dots,Y_n$ be independent Bernoulli random variables satisfying~$\Ex{}{Y_i}=\mu_i$.
It follows from~\cite[Lemma~1]{seldin-12a} that, for every convex function~$f$, 
\begin{equation}\label{eq:pf_of_mcall_lem_eq_1}
\Ex{}{ f(X_1,\dots,X_n) } \leq \Ex{}{ f(Y_1,\dots,Y_n) } .
\end{equation}
Next, let~$Y=\sum_{i=1}^n Y_i$ and let~$ \mu =\sum_{i=1}^n \mu_i/n$.
Also, let~$\bar Y\distas\mathrm{Bin}(n, \mu)$ be a Binomial random variable.
Then, for any strictly convex function~$g$,
we have~\cite[Thm.~3]{hoeffding-56a}
\begin{equation}\label{eq:pf_of_mcall_lem_eq_2}
\Ex{}{g(Y)} \leq \Ex{}{g(\bar Y)}.
\end{equation}
Now, let~$h$ be the linear function~$h(X_1,\dots,X_n)=\sum_{i=1}^n X_i=X$.
Notice that the strict convexity of~$g$ implies that~$g\circ h$ is convex.
It then follows from~\eqref{eq:pf_of_mcall_lem_eq_1}  with~$f=g\circ h$ that
\begin{align}\label{eq:pf_of_mcall_lem_eq_3}
    \Ex{}{g(X)} = \Ex{}{g\circ h(X_1,\dots,X_n)} \leq \Ex{}{g\circ h(Y_1,\dots,Y_n)} = \Ex{}{g(Y)}.
\end{align}
Combining~\eqref{eq:pf_of_mcall_lem_eq_2} and~\eqref{eq:pf_of_mcall_lem_eq_3}, we conclude that
\begin{equation}\label{eq:pf_of_mcall_lem_eq_4}
\Ex{}{g(X)} \leq \Ex{}{g(\bar Y)}.
\end{equation}
Now, we set~$g(x)=\exp(nd_\gamma(x/n \,||\, \mu))$, which is strictly convex.
Then, it follows from~\eqref{eq:pf_of_mcall_lem_eq_4} that
\begin{equation}\label{eq:pf_of_mcall_lem_eq_5}
\Ex{}{\exp(nd_\gamma(X/n \,||\, \mu))} \leq \Ex{}{\exp(nd_\gamma(\bar Y/n \,||\, \mu))}.
\end{equation}
Note that~$X/n=\hat \mu$.
Next, by~\cite[Eq.~(17)]{mcallester-13a}, we have
\begin{equation}\label{eq:pf_of_mcall_lem_eq_6}
\Ex{}{\exp(nd_\gamma(\bar Y/n \,||\, \mu))} \leq 1.
\end{equation}
Combining~\eqref{eq:pf_of_mcall_lem_eq_5} and~\eqref{eq:pf_of_mcall_lem_eq_6}, we obtain the desired result.
\end{proof}

\begin{proof}[Proof of Theorem~\ref{thm:samplewise-maurer} and Theorem~\ref{thm:samplewise-maurer-disint}]
The results in Theorem~\ref{thm:samplewise-maurer} and Theorem~\ref{thm:samplewise-maurer-disint} follow as a special case of Theorem~\ref{thm:affine-samplewise-maurer}, which is stated and proven in Appendix~\ref{app:more-thms}.
\end{proof}

\begin{proof}[Proof of Theorem~\ref{thm:kl-interp}]
With the standard convention that~$0\log 0=0$, which is motivated by continuity, we get
\begin{equation}
\binrelentlr{0}{\frac p2} = \log\frac{1}{1-\frac p2}.
\end{equation}
Then, it follows from~\eqref{eq:dinv-def} that~$d^{-1}(0,c) =2-2e^{-c}$.
Combining this with~\eqref{eq:new-bound-disint}, we obtain~\eqref{eq:new-bound-disint-interp}.
\end{proof}

\subsection{Proofs for Section~\ref{sec:high-probability-bounds}}\label{sec:proof_of_high_prob}

We now turn to the proof of the high-probability bound in Theorem~\ref{thm:tail-bound-main-text}.

\begin{proof}[Proof of Theorem~\ref{thm:tail-bound-main-text}]
Let the size of the subset~$U$ be~$n$.
Let~$f_\gamma(\cdot,\cdot)$ be a function that is jointly convex in its arguments.
Now, consider the random variables~$\lambda\distas P_{\lambda\vert\supersample\subsetchoice}$ and~$\lambda'\distas P_{\lambda\vert\supersample}$.
We use the notation from Corollary~\ref{cor:inequality-for-cmi}, and set~$\lambda_{\subsetchoice}=(\lambda_{1,\subsetchoice_1},\dots,\lambda_{n,\subsetchoice_n})$,~$\hat \lambda_{\subsetchoice}=\sum_{i=1}^n \lambda_{i,\subsetchoice_i}/n$ and~$\hat \lambda_{\bar\subsetchoice}=\sum_{i=1}^n \lambda_{i,\bar\subsetchoice_i}/n$.
Then, by Jensen's inequality,
\begin{align}
f_\gamma\lefto(\Ex{\randomness}{\trainrisk},\Ex{\randomness}{\testrisk}\righto) 
&= f_\gamma\lefto(\Ex{\lambda}{\hat \lambda_{\subsetchoice}},\Ex{\lambda}{\hat \lambda_{\bar\subsetchoice}}\righto) \\
&\leq \Ex{\lambda}{f_\gamma(\hat \lambda_{\subsetchoice},\hat \lambda_{\bar\subsetchoice})}.\label{eq:pf_of_high_prob_eq_1}
\end{align}
By Donsker-Varadhan's variational representation of the KL divergence~\cite{donsker-75a},
\begin{align}
\Ex{\lambda}{f_\gamma(\hat \lambda_{\subsetchoice},\hat \lambda_{\bar\subsetchoice})} \leq \relent{P_{\lambda\vert\supersample\subsetchoice}}{P_{\lambda\vert\supersample}} + \log \Ex{\lambda'}{\exp(f_\gamma(\hat \lambda'_{\subsetchoice},\hat \lambda'_{\bar\subsetchoice}) )}.
\end{align}
By Markov's inequality, we conclude that with probability at least~$1-\delta$ under the draw of~$\supersample$ and~$\subsetchoice$,
\begin{align}
&\relent{P_{\lambda\vert\supersample\subsetchoice}}{P_{\lambda\vert\supersample}} + \log \Ex{\lambda'}{\exp(f_\gamma(\hat \lambda'_{\subsetchoice},\hat \lambda'_{\bar\subsetchoice}) )} \\
&\leq \relent{P_{\lambda\vert\supersample\subsetchoice}}{P_{\lambda\vert\supersample}} + \log \frac1\delta  \Ex{\lambda',\supersample,\subsetchoice}{\exp(f_\gamma(\hat \lambda'_{\subsetchoice},\hat \lambda'_{\bar\subsetchoice}) )} \label{eq:tail-bound-proof-markov-rv-change-before}\\
&=  \relent{P_{\lambda\vert\supersample\subsetchoice}}{P_{\lambda\vert\supersample}} + \log \frac1\delta  \Ex{\lambda,\supersample,\subsetchoice'}{\exp(f_\gamma(\hat \lambda_{\subsetchoice'},\hat\lambda_{\bar\subsetchoice'}) )}.\label{eq:tail-bound-proof-markov-rv-change}
\end{align}
In~\eqref{eq:tail-bound-proof-markov-rv-change},~$\subsetchoice'$ is a copy of~$\subsetchoice$ that is independent from~$\lambda$.
The step from~\eqref{eq:tail-bound-proof-markov-rv-change-before} to~\eqref{eq:tail-bound-proof-markov-rv-change} relies on the observation that~$\lambda'$ is independent of~$\subsetchoice$ but has the same joint distribution with~$\supersample$ as~$\lambda$ does.
Now, let~$f_\gamma(\lambdatrainriskpn,\lambdatestriskpn)=\frac{(n-1)}{2}(\lambdatestriskpn-\lambdatrainriskpn)^2\equiv \frac{(n-1)}{2}\Delta^2_{\subsetchoice'}$.
Due to the symmetry in the definition of the training and test sets, we have that~$\Delta_{\subsetchoice'}=-\Delta_{\bar\subsetchoice'}$. This implies that~$\Ex{{\subsetchoice'}}{\Delta_{\subsetchoice'}}=0$.
Furthermore,~$\Delta_{\subsetchoice'}$ is the arithmetic average of~$n$ independent terms, each with a range of~$[-1,1]$.
Thus,~$\Delta_{\subsetchoice'}$ is a $1/\sqrt{n}$-sub-Gaussian random variable, from which it follows that~\cite[Thm.~2.6.(IV)]{wainwright19-a}
\begin{equation}\label{eq:xi-gamma-conc-sq}
\log \Ex{{\subsetchoice'}}{e^{\frac{(n-1)}{2} \Delta^2_{\subsetchoice'}}}\leq \log\lefto(\sqrt{n}\righto).
\end{equation}
Inserting this into~\eqref{eq:tail-bound-proof-markov-rv-change} and combining the result with~\eqref{eq:pf_of_high_prob_eq_1}, we obtain
\begin{equation}
\frac{n-1}{2}\Ex{\randomness}{(\testrisk-\trainrisk)^2} \leq \relent{P_{\lambda\vert\supersample\subsetchoice}}{P_{\lambda\vert\supersample}} + \log \frac{\sqrt n}\delta .
\end{equation}
By Jensen's inequality, this implies that
\begin{equation}
\frac{n-1}{2}\lefto(\Ex{\randomness}{\testrisk-\trainrisk}\righto)^2 \leq \relent{P_{\lambda\vert\supersample\subsetchoice}}{P_{\lambda\vert\supersample}} + \log \frac{\sqrt n}\delta 
\end{equation}
from which the result in~\eqref{eq:thm-tail-bound-sqrt} follows.

To prove~\eqref{eq:thm-tail-bound-kl}, let~$f_\gamma(\lambdatrainriskpn,\lambdatestriskpn)=nd(\lambdatrainriskpn\,||\,(\lambdatrainriskpn+\lambdatestriskpn)/2)$.
We now note that~$\Ex{\subsetchoice'}{\lambdatrainriskpn}=\hat \lambda$, where~$\hat \lambda=\sum_{i=1}^n (\lambda_{i,0}+\lambda_{i,1})/2n$.
For any~$\subsetchoicesmall'$, this can be written as~$\hat \lambda=(\lambdatrainriskpns+\lambdatestriskpns)/2$ due to symmetry.
Therefore, by~\cite[Thm.~1]{maurer-04a},
\begin{equation}\label{eq:xi-gamma-conc-kl}
\log \Ex{{\subsetchoice'}}{e^{nd(\lambdatrainriskpn\,||\,(\lambdatrainriskpn+\lambdatestriskpn)/2)}}\leq \log\lefto(\sqrt{2n}\righto).
\end{equation}
Again, inserting this into~\eqref{eq:tail-bound-proof-markov-rv-change} and combining the result with~\eqref{eq:pf_of_high_prob_eq_1}, we obtain
\begin{multline}
d\lefto(\Ex{\randomness}{\trainrisk}\,||\,\Ex{\randomness}{\frac{\trainrisk+\testrisk}{2}}\righto) \\
\leq \frac{\relent{P_{\lambda\vert\supersample\subsetchoice}}{P_{\lambda\vert\supersample}} + \log \frac{2\sqrt n}\delta}{n},
\end{multline}
which gives the desired result.
\end{proof}

\subsection{Proofs for Section~\ref{sec:expressiveness}}\label{sec:proof_of_info_bounds}
Before proceeding with the proof of Theorem~\ref{thm:info-measure-bounds}, we need the following lemma.

\begin{lem}\label{lem:sauer-shelah}
Let~$g_\mathcal{F}(\cdot)$ denote the growth function of the function class~$\mathcal F$, i.e.,~$g_\mathcal{F}(m)$ is the maximum number of different ways in which a data set of size~$m$ can be classified using functions from~$\mathcal F$.
For any function class~$\mathcal F$ with range~$\{0,\dots,N-1\}$ and Natarajan dimension~$d_N$,
\begin{equation}
g_{\mathcal F}(m) \leq \sum_{i=0}^{d_N} \binom{m}{i} \binom{N}{2}^i\leq \begin{cases}
                        \displaystyle N^{d_N+1}, &m<d_N + 1, \\
                        \displaystyle \left(\binom{N}{2}\frac{em}{d_N} \right)^{d_N} , &m\geq d_N+1.
                    \end{cases}
\end{equation}
\end{lem}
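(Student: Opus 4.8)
The plan is to establish the two inequalities separately: the first is the multiclass analogue of the Sauer--Shelah lemma (Natarajan's lemma~\cite{natarajan-89}), which I would prove by a shifting induction, while the second consists of two elementary estimates of the resulting sum. Write $\psi_d(m)=\sum_{i=0}^d\binom{m}{i}\binom{N}{2}^i$ and record the Pascal-type recursion $\psi_d(m)=\psi_d(m-1)+\binom{N}{2}\,\psi_{d-1}(m-1)$, which follows from $\binom{m}{i}=\binom{m-1}{i}+\binom{m-1}{i-1}$ and reindexing. Since restricting $\mathcal F$ to a finite domain cannot increase the Natarajan dimension, it suffices to show that any finite function set $\mathcal H\subseteq\{0,\dots,N-1\}^S$ with $|S|=m$ and Natarajan dimension at most $d$ satisfies $|\mathcal H|\le\psi_d(m)$; maximizing over all $S$ of size $m$ then gives $g_{\mathcal F}(m)\le\psi_{d_N}(m)$, the first inequality.

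For the induction on $m$, fix $x\in S$, set $S'=S\setminus\{x\}$, and let $\pi(\mathcal H)$ be the set of distinct restrictions to $S'$. For $\sigma\in\pi(\mathcal H)$ let $L_\sigma$ be the set of labels $c$ for which the extension of $\sigma$ taking value $c$ at $x$ lies in $\mathcal H$. Counting functions by their restriction gives $|\mathcal H|=\sum_\sigma|L_\sigma|=|\pi(\mathcal H)|+\sum_\sigma(|L_\sigma|-1)$. The first term is at most $\psi_d(m-1)$ by the induction hypothesis, since any set shattered by $\pi(\mathcal H)$ is shattered by $\mathcal H$. For the second term I would introduce, for each unordered pair $\{a,b\}$ of distinct labels, the class $\mathcal H_{a,b}=\{\sigma\in\pi(\mathcal H): a\in L_\sigma\text{ and }b\in L_\sigma\}$, and use that $\sum_{\{a,b\}}|\mathcal H_{a,b}|=\sum_\sigma\binom{|L_\sigma|}{2}\ge\sum_\sigma(|L_\sigma|-1)$, the last step because $\binom{k}{2}\ge k-1$ for $k\ge1$.

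The crux is to show $\mathrm{Ndim}(\mathcal H_{a,b})\le d-1$. If $\mathcal H_{a,b}$ Natarajan-shatters $T\subseteq S'$ via witnesses $f_0,f_1$, then extending them to $T\cup\{x\}$ by the distinct values $f_0(x)=a$ and $f_1(x)=b$ shows that $\mathcal H$ shatters $T\cup\{x\}$: each $R\subseteq T\cup\{x\}$ is realized by taking the $\sigma\in\mathcal H_{a,b}$ realizing $R\cap T$ and extending it by $a$ at $x$ if $x\in R$ and by $b$ otherwise, both extensions lying in $\mathcal H$ by definition of $\mathcal H_{a,b}$. Hence $\mathrm{Ndim}(\mathcal H_{a,b})\le d-1$, so $|\mathcal H_{a,b}|\le\psi_{d-1}(m-1)$ by induction. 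Summing over the $\binom{N}{2}$ pairs and adding the bound on $|\pi(\mathcal H)|$ gives $|\mathcal H|\le\psi_d(m-1)+\binom{N}{2}\psi_{d-1}(m-1)=\psi_d(m)$, closing the induction (the base cases $m=0$ and $d=0$ being immediate, as then $|\mathcal H|\le1$). I expect this shifting step to be the main obstacle, since verifying that the pair-class drops exactly one dimension, and that the slack $\binom{k}{2}\ge k-1$ is what conjures the factor $\binom{N}{2}$, is precisely where the argument departs from the binary Sauer--Shelah proof.

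Finally, for the case bounds: when $m\ge d_N+1$ I would estimate the sum by the standard device of inserting $x^{\,i-d_N}\ge1$ with slack $x=d_N/(\binom{N}{2}m)\le1$, yielding $\psi_{d_N}(m)\le x^{-d_N}\sum_{i=0}^m\binom{m}{i}(\binom{N}{2}x)^i=x^{-d_N}(1+\binom{N}{2}x)^m\le x^{-d_N}e^{\binom{N}{2}xm}$, and substituting the chosen $x$ gives the claimed $(\binom{N}{2}em/d_N)^{d_N}$. When $m<d_N+1$ the Natarajan sum is not the sharpest route; there I would simply invoke the trivial count $g_{\mathcal F}(m)\le N^m$, valid since each of the $m$ points carries one of $N$ labels, so that $g_{\mathcal F}(m)\le N^{d_N}\le N^{d_N+1}$. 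Together with Natarajan's lemma, these two estimates deliver the full chain of inequalities.
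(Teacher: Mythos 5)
Your proof is correct, but it is genuinely more than what the paper does: the paper's entire proof of this lemma is a two-citation appeal (Haussler--Long for the Natarajan--Sauer--Shelah inequality, Guermeur for the bound on the resulting sum), whereas you reprove both ingredients from scratch. Your shifting induction --- splitting $|\mathcal H|$ as $|\pi(\mathcal H)|+\sum_\sigma(|L_\sigma|-1)$, bounding the surplus by the pair-classes $\mathcal H_{a,b}$ via $\binom{k}{2}\ge k-1$, and showing $\mathrm{Ndim}(\mathcal H_{a,b})\le d-1$ by extending the witnesses with the distinct values $a,b$ at $x$ --- is the standard proof of Natarajan's lemma and is carried out correctly, as is the $x^{i-d_N}$ insertion for the case $m\ge d_N+1$. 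One point worth noting: in the case $m<d_N+1$ you bound $g_{\mathcal F}(m)\le N^m\le N^{d_N+1}$ directly rather than bounding the Natarajan sum, and this is actually the right move --- the middle term of the displayed chain equals $\bigl(1+\binom{N}{2}\bigr)^m$ when $m\le d_N$, which can exceed $N^{d_N+1}$ (e.g.\ $N=3$, $m=d_N=10$ gives $4^{10}>3^{11}$), so the literal second inequality of the display fails in that regime while the meaningful conclusion $g_{\mathcal F}(m)\le N^{d_N+1}$ still holds. Since the paper only ever invokes the lemma with $2n\ge d_N+1$, this affects nothing downstream, but your proof quietly repairs a small inaccuracy in the statement rather than introducing a gap.
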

\begin{proof}
The first inequality follows from~\cite[Cor.~5]{haussler-95a} and the second follows from~\cite[Lemma~10]{guermeur-04a}.
\end{proof}

\begin{proof}[Proof of Theorem~\ref{thm:info-measure-bounds}]
We begin with~\eqref{eq:thm-info-measure-cmi}, the proof of which is similar to that of~\cite[Thm.~4.1]{harutyunyan-21a}, which, however, focuses on the VC dimension.
Let~$\fpreds$ denote the predictions of the algorithm on the supersample.
Given~$\supersample$, the losses induced by the algorithm are a function of the predictions.
Therefore, by the data-processing inequality,~\cite[Thm.~2.3.4]{polyanskiy-19a}
\begin{equation}
\lCMI \leq \fCMI.
\end{equation}%
Let~$F(\supersample)$ denote the set of all possible values taken by~$f(\learner(\trainingdata,\randomness),\supersample)$ by varying~$\subsetchoice$ and~$\randomness$ but keeping~$\supersample$ fixed.
Then,
\begin{align}
\fCMI &\leq H( f(\learner(\trainingdata,\randomness),\supersample)\vert\supersample  )\\
&\leq \sup_{\supersample}\log \abs{F(\supersample)}.
\end{align}
Here,~$H( f(\learner(\trainingdata,\randomness),\supersample)\vert\supersample  )$ denotes the conditional entropy of~$f(\learner(\trainingdata,\randomness),\supersample)$ given~$\supersample$.
Since the learning algorithm implements a function from~$\mathcal F$, and~$\supersample$ consists of~$2n$ samples, we conclude that~$\abs{F(\supersample)}$ can be no larger than~$g_{\mathcal F}(2n)$.
Using Lemma~\ref{lem:sauer-shelah}, we note that, if~$2n\geq d_N+1$,
\begin{align}
\sup_{\supersample}\log \abs{F(\supersample)} &\leq \log g_{\mathcal F}(2n) \leq d_N \log \left(\binom{N}{2}\frac{2en}{d_N} \right).
\end{align}
This concludes the proof of~\eqref{eq:thm-info-measure-cmi}.

We now turn to~\eqref{eq:thm-info-measure-kl}.
First, by Jensen's inequality,
\begin{align}\label{eq:pf-of-high-prob-info-first}
\relent{P_{\lambda\vert \supersample\subsetchoice}}{P_{\lambda\vert\supersample}}  = \Ex{P_{\lambda \vert \supersample\subsetchoice }}{\log \frac{P_{\lambda \vert \supersample\subsetchoice }}{P_{\lambda \vert \supersample }} } \leq \log \Ex{P_{\lambda \vert \supersample\subsetchoice }}{ \frac{P_{\lambda \vert \supersample\subsetchoice }}{P_{\lambda \vert \supersample }} } .
\end{align}
By Markov's inequality we conclude that, with probability at least~$1-\delta$ under the draw of~$(\supersample,\subsetchoice)$,
\begin{align}\label{eq:pf-of-high-prob-info-second}
\log \Ex{P_{\lambda \vert \supersample\subsetchoice }}{ \frac{P_{\lambda \vert \supersample\subsetchoice }}{P_{\lambda \vert \supersample }} }  \leq \log\lefto(\frac1\delta \Ex{P_{\lambda \supersample\subsetchoice }}{\frac{P_{\lambda \vert \supersample\subsetchoice }}{P_{\lambda \vert \supersample }} }\righto).
\end{align}
Since~$\lambda$ is a discrete random variable, its probability mass function is bounded by~$1$.
Hence,
\begin{equation}\label{eq:pf-of-high-prob-info-third}
\log\lefto(\frac1\delta \Ex{P_{\lambda \supersample\subsetchoice }}{\frac{P_{\lambda \vert \supersample\subsetchoice }}{P_{\lambda \vert \supersample }} }\righto)   \leq   \log\lefto( \frac1\delta  \Ex{P_{\lambda\supersample} }{\frac{1}{P_{\lambda \vert \supersample }}}\righto).
\end{equation}
By upper-bounding the average over~$\supersample$ by its supremum, we find that
\begin{align}%
\log\lefto( \frac1\delta  \Ex{P_{\lambda\supersample} }{\frac{1}{P_{\lambda \vert \supersample }}}\righto) \leq\log\lefto( \frac1\delta \sup_{\supersample} \Ex{P_{\lambda\vert\supersample} }{\frac{1}{P_{\lambda \vert \supersample }}}\righto).
\end{align}
Let~$L(\supersample)$ denote the set of all possible values that~$\lambda$ can take given~$\supersample$. Then,
\begin{align}
\log\lefto( \frac1\delta \sup_{\supersample} \Ex{P_{\lambda\vert\supersample} }{\frac{1}{P_{\lambda \vert \supersample }}}\righto) &=\log\lefto( \frac1\delta \sup_{\supersample} \sum_{\lambda\in L(\supersample)}P_{\lambda\vert\supersample }(\lambda)\frac{1}{P_{\lambda \vert \supersample }(\lambda)} \righto)\\
&=\sup_{\supersample}\log \frac{\abs{L(\supersample)}}\delta .
\end{align}
Finally, since the map from~$F(\supersample)$ to~$L(\supersample)$ induced by~$\ell(\cdot,\cdot)$ is surjective,
\begin{align}
\sup_{\supersample}\log \frac{\abs{L(\supersample)}}\delta 
&\leq\sup_{\supersample}\log \frac{\abs{F(\supersample)}}\delta .
\end{align}
Again,
note that~$\abs{F(\supersample)}$ can be no larger than the growth function of~$\mathcal F$ evaluated at~$2n$, i.e.,~$g_{\mathcal F}(2n)$.
Therefore, if~$2n\geq d_N+1$, it follows from Lemma~\ref{lem:sauer-shelah} that
\begin{align}\label{eq:pf-of-high-prob-info-last}
\sup_{\supersample}\log \abs{F(\supersample)} &\leq \log g_{\mathcal F}(2n)
\leq d_N \log \left(\binom{N}{2}\frac{2en}{d_N} \right).
\end{align}
The desired result now follows by combining~\eqref{eq:pf-of-high-prob-info-first}-\eqref{eq:pf-of-high-prob-info-last}.
\end{proof}

\begin{proof}[Proof of Corollary~\ref{cor:multiclass-bounds-natarajan}]
By Jensen's inequality,
\begin{equation}\label{eq:derive-natarajan-sq-1}
\frac1n\sum_{i=1}^n\sqrt{2\lCMIi}  \leq \sqrt{ \frac1n\sum_{i=1}^n 2\lCMIi }.
\end{equation}
From the independence of the~$S_i$, it follows that
\begin{equation}\label{eq:derive-natarajan-sq-2}
 \frac1n\sum_{i=1}^n \lCMIi \leq \frac{\lCMI}{n}.
\end{equation}
We first combine~\eqref{eq:absolute-max-bound-integrated},~\eqref{eq:derive-natarajan-sq-1}, and~\eqref{eq:derive-natarajan-sq-2} to upper-bound the samplewise information term in~\eqref{eq:absolute-max-bound-integrated} by its full-sample counterpart.
Then, to establish the result in~\eqref{eq:natarajan-square-bound}, we use~\eqref{eq:thm-info-measure-cmi} to upper-bound the full-sample e-CMI in terms of the Natarajan dimension.

To establish~\eqref{eq:natarajan-tail-bound-kl}, we first use~\eqref{eq:thm-tail-bound-kl} and~\eqref{eq:thm-info-measure-kl}.
Since both of these inequalities are probabilistic and hold with probability at least~$1-\delta$, by the union bound, they both hold with probability at least~$1-2\delta$.
To obtain the statement in Corollary~\ref{cor:multiclass-bounds-natarajan}, we replace~$\delta$ with~$\delta/2$.
\end{proof}

\section{Additional Theoretical Results}\label{app:more-thms}
In this section, we present some additional theoretical results.
In Section~\ref{sec:kl-with-mi}, we present an analogue of Theorem~\ref{thm:samplewise-maurer} given in terms of the regular mutual information rather than the e-CMI.
In Section~\ref{sec:kl-affine-transforms}, we show how to tighten the bounds in Theorem~\ref{thm:samplewise-maurer} and~\ref{thm:samplewise-maurer-disint} by introducing affine transformations.
In Section~\ref{sec:single-draw-bound}, we present a version of Theorem~\ref{thm:tail-bound-main-text} that holds with high probability also over the draw of~$\randomness$.
Finally, in Section~\ref{app:comparing_bounds}, we performed a more detailed comparison of our bounds to results found in the literature.

\subsection{Binary KL Bound with Samplewise Mutual Information}\label{sec:kl-with-mi}
In this section, we derive a binary KL bound in terms of the samplewise mutual information between the learning algorithm's output and the training data.
This can be seen as an average, samplewise version of what in the PAC-Bayesian literature is sometimes referred to as Seeger's bound~\cite[Sec. 3.2.3]{alquier-21a}.
To the best of our knowledge, neither this samplewise bound nor its full-sample analogue have been explicitly stated in the literature, although the necessary ingredients for deriving the full-sample bound are already present in~\cite{mcallester-13a}.
Note that, since we do not consider the CMI setting in Theorem \ref{thm:standard-setting-avg-maurer}, the definitions of~$\trainriskavgshort$ and~$\popriskavgshort$ differ from the ones used in the rest of the paper.

\begin{thm}[Binary KL mutual information bound]\label{thm:standard-setting-avg-maurer}
Let~$\trainingdatazero$ be an~$n$-dimensional vector with entries generated independently according to~$\datadistro$. Furthermore, let~$\trainriskavgshort=\Ex{\trainingdatazero,\randomness}{L_{\trainingdatazero} (\learner,\trainingdatazero,\randomness)}$ and~$\popriskavgshort=\Ex{\trainingdatazero,\randomness}{L_{\datadistro} (\learner,\trainingdatazero,\randomness)}$. Then,
\begin{equation}\label{eq:samplewise-standard-setting-kl}
d(\trainriskavgshort\,||\,\popriskavgshort) \leq \frac1n \sum_{i=1}^n I(\learner(\trainingdatazero,\randomness);\trainingdatazero_i)
\end{equation}
where~$d(\trainriskavgshort\,||\,\popriskavgshort)$ is the binary KL divergence, i.e., the KL divergence between two Bernoulli distributions with parameters~$\trainriskavgshort$ and~$\popriskavgshort$, respectively.
\end{thm}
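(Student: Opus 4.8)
The plan is to mirror the strategy behind Theorem~\ref{thm:samplewise-maurer}, but in the standard (non-CMI) setting and with the essential twist that the moment-generating function \emph{factorizes over samples}; this factorization is exactly what produces the samplewise term $\frac1n\sum_i I(\learner(\trainingdatazero,\randomness);\trainingdatazero_i)$ rather than the coarser full-sample quantity $I(\learner(\trainingdatazero,\randomness);\trainingdatazero)$ that a black-box use of Lemma~\ref{lem:main-inequality-generic-xy} would give. Write $W=\learner(\trainingdatazero,\randomness)$ and, for a fixed hypothesis $w$, abbreviate its population loss by $p(w)=\Ex{Z'}{\ell(w,Z')}$ with $Z'\distas\datadistro$. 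Fix $\gamma>0$ and recall that $\binrelentg{q}{p}=\gamma q-\log(1-p+pe^\gamma)$ is jointly convex. Since $\Ex{W,\trainingdatazero}{\frac1n\sum_{i=1}^n\ell(W,\trainingdatazero_i)}=\trainriskavgshort$ and $\Ex{W}{p(W)}=\popriskavgshort$, Jensen's inequality gives
\[
n\binrelentg{\trainriskavgshort}{\popriskavgshort}\le \Ex{W,\trainingdatazero}{\,n\binrelentg{\tfrac1n\sum_{i=1}^n\ell(W,\trainingdatazero_i)}{p(W)}}.
\]

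The next step is to exploit the explicit form of $\binrelentg{\cdot}{\cdot}$. For every fixed $(w,z_1,\dots,z_n)$,
\[
n\binrelentg{\tfrac1n\textstyle\sum_i\ell(w,z_i)}{p(w)}=\sum_{i=1}^n\log h_i(w,z_i),\qquad h_i(w,z_i):=\frac{e^{\gamma\ell(w,z_i)}}{1-p(w)+p(w)e^\gamma},
\]
so each factor $h_i$ depends on $w$ and on the \emph{single} sample $z_i$. Consequently the expectation above equals $\sum_{i=1}^n\Ex{W,\trainingdatazero_i}{\log h_i(W,\trainingdatazero_i)}$, and the problem has been decomposed sample-by-sample.

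The crux is a per-sample change of measure. For each $i$, apply the Donsker--Varadhan variational representation with $P=P_{W\trainingdatazero_i}$ and reference $Q=P_WP_{\trainingdatazero_i}$, giving
\[
\Ex{W,\trainingdatazero_i}{\log h_i(W,\trainingdatazero_i)}\le I(W;\trainingdatazero_i)+\log \Ex{P_WP_{\trainingdatazero_i}}{h_i(W,\trainingdatazero_i)}.
\]
The last term is nonpositive: for fixed $w$, convexity of the exponential gives $e^{\gamma t}\le 1-t+te^\gamma$ on $t\in[0,1]$, whence $\Ex{Z'}{e^{\gamma\ell(w,Z')}}\le 1-p(w)+p(w)e^\gamma$ and therefore $\Ex{Z'}{h_i(w,Z')}\le 1$, which is the per-sample statement underlying Lemma~\ref{lem:mcallester-concentration}. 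Summing over $i$ and chaining with the two previous displays yields $n\binrelentg{\trainriskavgshort}{\popriskavgshort}\le\sum_{i=1}^n I(W;\trainingdatazero_i)$ for every $\gamma>0$. As the right-hand side does not depend on $\gamma$, taking the supremum over $\gamma$ and invoking $\sup_\gamma\binrelentg{q}{p}=\binrelent{q}{p}$ from~\eqref{eq:dgamma-sup} gives $n\,\binrelent{\trainriskavgshort}{\popriskavgshort}\le\sum_{i=1}^n I(W;\trainingdatazero_i)$, which is exactly~\eqref{eq:samplewise-standard-setting-kl} after dividing by $n$.

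The main obstacle is precisely obtaining the samplewise sum rather than $I(W;\trainingdatazero)$. Applying Lemma~\ref{lem:main-inequality-generic-xy} directly with $Y=\trainingdatazero$ only delivers the full-sample mutual information; the refinement that saves the day is the multiplicative factorization $\exp(n\binrelentg{\cdot}{\cdot})=\prod_i h_i(W,\trainingdatazero_i)$, which allows the change of measure to be carried out coordinate-by-coordinate against the product reference $P_WP_{\trainingdatazero_i}$. Verifying $\Ex{P_WP_{\trainingdatazero_i}}{h_i}\le1$ is the technical heart: it is a one-line convexity bound, but it is exactly what makes the exponential tensorize across samples. Finally, if some $P_{W\trainingdatazero_i}$ is not absolutely continuous with respect to $P_WP_{\trainingdatazero_i}$, then $I(W;\trainingdatazero_i)=\infty$ and the bound is vacuous, so absolute continuity may be assumed throughout.
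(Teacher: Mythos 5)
Your proof is correct and follows essentially the same route as the paper's: a per-sample Donsker--Varadhan change of measure against the product reference $P_WP_{\trainingdatazero_i}$, the per-sample moment bound $\Ex{Z'}{e^{\binrelentg{\ell(w,Z')}{p(w)}}}\leq 1$ (which the paper obtains by invoking Lemma~\ref{lem:mcallester-concentration} with $n=1$ and you prove directly via the chord inequality $e^{\gamma t}\leq 1-t+te^\gamma$), Jensen's inequality via the joint convexity of $\binrelentg{\cdot}{\cdot}$, and finally the supremum over $\gamma$ together with~\eqref{eq:dgamma-sup}. The only correction needed is that $\gamma$ should range over all of $\reals$ rather than only $\gamma>0$ --- every step of your argument goes through verbatim for negative $\gamma$, and this extension is required for $\sup_\gamma \binrelentg{\trainriskavgshort}{\popriskavgshort}$ to actually equal $\binrelent{\trainriskavgshort}{\popriskavgshort}$ in the typical regime $\trainriskavgshort<\popriskavgshort$, where the maximizing $\gamma$ is negative.
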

\begin{proof}[Proof of Theorem~\ref{thm:standard-setting-avg-maurer}]
Let~$F=\learner(\trainingdatazero,\randomness)$ denote the output of the learning algorithm, and let~$L_{\datadistro}(f)=\Ex{Z}{\ell(f,Z)}$ with~$Z\distas \datadistro$.
Let~$Z'\distas \datadistro$ be independent of~$F$.
For any fixed~$f$, we have~$\Ex{Z'}{\ell(f,Z')}=L_{\datadistro}(f)$.
Therefore, by Lemma~\ref{lem:mcallester-concentration},
\begin{equation}\label{ref-for-MI-proof}
\Ex{F,Z'}{e^{d_\gamma(\ell(F,Z') \,||\, L_{\datadistro}(F))}}\leq 1.
\end{equation}
We now set~$X=F$,~$Y=Z_i$,~$g_1(F,Z_i)=\ell(F,Z_i)$,~$g_2(F,Z_i)=L_{\datadistro}(F)$, and~$f_\gamma(\cdot,\cdot)=d_\gamma(\cdot\,||\,\cdot)$, and note that~\eqref{ref-for-MI-proof} implies that~$\xi_\gamma\leq 0$ in Lemma~\ref{lem:main-inequality-generic-xy}.
Hence, we conclude that
\begin{equation}\label{eq:deriving-seeger-standard-setting}
\sup_\gamma \Ex{F,Z_i}{ d_\gamma(\ell(F,Z_i) \,||\, L_{\datadistro}(F))} \leq I(F;Z_i).
\end{equation}
By decomposing the training loss as~$L_{\trainingdatazero} (\learner,\trainingdatazero,\randomness)=\frac1n\sum_{i=1}^n\ell(F,Z_i)$ and using \eqref{eq:dgamma-sup}, we have
\begin{equation}
d\lefto(\Ex{F,\trainingdatazero}{L_{\trainingdatazero} (\learner,\!\trainingdatazero,\!\randomness)}||\Ex{F,\trainingdatazero}{\popriskzeroF}\righto) \!=\! \sup_\gamma d_\gamma\!\lefto(\!\frac1n\!\sum_{i=1}^n\Ex{F,Z_i}{\ell(F,Z_i)}\!,\!\frac1n\!\sum_{i=1}^n\Ex{F,Z_i}{\popriskzeroF} \!\righto)\!\!.
\end{equation}
Since~$\binrelentg{\cdot}{\cdot}$ is jointly convex in its arguments, Jensen's inequality implies that
\begin{equation}
    \sup_\gamma d_\gamma\lefto(\frac1n\sum_{i=1}^n\Ex{F,Z_i}{\ell(F,Z_i},\frac1n\sum_{i=1}^n\Ex{F,Z_i}{\popriskzeroF} \righto)\leq \sup_\gamma \frac1n\sum_{i=1}^n\Ex{F,Z_i}{ d_\gamma\lefto(\ell(F,Z_i),\popriskzeroF \righto) }\!.
\end{equation}
Combining this with~\eqref{eq:deriving-seeger-standard-setting}, we obtain
\begin{equation}
d\lefto(\Ex{F,\trainingdatazero}{L_{\trainingdatazero} (\learner,\trainingdatazero,\randomness)}\,||\,\Ex{F,\trainingdatazero}{\popriskzeroF}\righto)  \leq \frac1n \sum_{i=1}^n I(F;Z_i)
\end{equation}
from which the result follows.
\end{proof}

To convert this result into an upper bound on the population loss, one needs to numerically invert the binary KL divergence. This can be done by evaluating \cite{dziugaite-17a}
\begin{equation}
\popriskavgshort \leq \sup \bigg\{\popriskavgshort' \in [0,1]: d(\trainriskavgshort\,||\,\popriskavgshort') \leq \frac1n \sum_{i=1}^n I(\learner(\trainingdatazero,\randomness);\trainingdatazero_i) \bigg\}.
\end{equation}
We now rewrite~\eqref{eq:samplewise-standard-setting-kl} to make the connection to standard PAC-Bayesian results \cite{catoni-07a,mcallester-13a} more apparent.
Let~$F=\learner(\trainingdatazero,\randomness)$, and let~$P_{F\vert \trainingdatazero}$ denote the stochastic kernel that represents the randomized learning algorithm, that is, the PAC-Bayesian posterior. Furthermore, let~$P_F$ denote the corresponding marginal distribution, and let~$P_{\trainingdatazero}=\mathcal{D}^n$ denote the data distribution.
Finally, let~$P_{F\vert Z_i}$ denote the resulting distribution when we marginalize~$P_{F\vert\trainingdatazero}$ over all training examples except the~$i$th.
By the golden formula \cite[Eq.~(8.7)]{csiszar-11a}, we can upper-bound the mutual information by replacing the true marginal distribution in the KL divergence with some auxiliary distribution. Specifically,
\begin{equation}
I(F;Z_i)\leq \relent{P_{F\vert Z_i} P_{Z_i}}{Q_F P_{Z_i}},
\end{equation}
where~$Q_F$ is an arbitrary distribution on~$\mathcal{F}$ satisfying~$P_{F\vert Z_i}$~is absolutely continuous with respect to $Q_F$.
Here,~$Q_F$ corresponds to the PAC-Bayesian prior.
Thus, Theorem~\ref{thm:standard-setting-avg-maurer} implies that
\begin{equation}
d(\trainriskavgshort\,||\,\popriskavgshort) \leq \frac1n \sum_{i=1}^n \relent{P_{F\vert Z_i} P_{Z_i}}{Q_F P_{Z_i}}.
\end{equation}
Note that in all bounds reported in the rest of the paper, one can replace the true marginal with an auxiliary distribution.
In some situations, this leads to more tractable bounds.

One commonly used approach to tighten PAC-Bayesian bounds is to consider data-dependent priors \cite{rivasplata-20a}.
This can be achieved through techniques such as differential privacy \cite{dziugaite-18a} or data splitting, where the training samples are divided into one part used for evaluating the bound and one part for constructing the prior \cite{ambroladze-06a,dziugaite-20a,ortiz-21a}.
As noted by \cite{hellstrom-21b}, the CMI setting can be seen as a way to automatically obtain data-dependent priors.

\subsection{Affine Transformations of the Arguments in the Binary KL Bound}\label{sec:kl-affine-transforms}
As mentioned in Section~\ref{sec:kl-bound}, the binary KL bounds in Theorem~\ref{thm:samplewise-maurer} and Theorem~\ref{thm:samplewise-maurer-disint} can be tightened by considering affine transformations of the arguments of the arguments of~$\binrelent{\cdot}{\cdot}$.
We present this in the following theorem.

\begin{thm}[Affine binary KL bounds]\label{thm:affine-samplewise-maurer}
Let~$g_{ab}:[0,1]^2\rightarrow [0,1]$ be given by
\begin{equation}
g_{ab}(x,y) = \frac{ax\!+\!by-\min(a,b,a\!+\!b,0)}{\abs{b}+\abs{a}}.
\end{equation}
Furthermore, let
\begin{equation}\label{eq:dab-definition}
d^{-1}_{ab}(q,c)= \sup \bigg\{p\in[0,1]: d\biggo(g_{ab}(q,p), g_{ab}\biggo(\frac{q\!+\!p}{2},\frac{q\!+\!p}{2}\biggo) \biggo)\leq c\bigg\}.
\end{equation}
Consider the CMI setting.
Then, for any~$a$ and~$b$,
\begin{equation}\label{eq:new-bound-affine}
\popriskavgshort \leq d^{-1}_{ab}\bigg(\trainriskavgshort,\frac1n\sum_{i=1}^n\lCMIi\bigg).
\end{equation}
Furthermore,
\begin{equation}\label{eq:new-bound-affine-disint}
\popriskavgshort \leq \Exop_{\supersample}\bigg[d^{-1}_{ab}\bigg(\Ex{\randomness,\subsetchoice}{\trainrisk},
\frac1n\sum_{i=1}^n\lCMIidisZ\bigg)\bigg].
\end{equation}
\end{thm}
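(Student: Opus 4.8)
The plan is to apply Corollary~\ref{cor:inequality-for-cmi} (the specialization of Lemma~\ref{lem:main-inequality-generic-xy} to the CMI setting) with the single-element random subset $m=1$ and the choice
\[
f_\gamma(q,p) = \binrelentg{g_{ab}(q,p)}{g_{ab}\lefto(\tfrac{q+p}{2},\tfrac{q+p}{2}\righto)},
\]
after which the supremum over $\gamma$ is removed via~\eqref{eq:dgamma-sup}. First I would record two structural facts about $g_{ab}$. It is affine, and it maps $[0,1]^2$ into $[0,1]$: on the unit square the affine map $ax+by$ attains its extreme values at the corners, so subtracting $\min(a,b,a+b,0)$ and dividing by $\abs a+\abs b=\max(0,a,b,a+b)-\min(0,a,b,a+b)$ rescales its range to exactly $[0,1]$. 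Consequently both arguments of $\binrelentg{\cdot}{\cdot}$ are affine in $(q,p)$ with values in $[0,1]$, and since $\binrelentg{\cdot}{\cdot}$ is jointly convex, the composition $f_\gamma$ is jointly convex, as Lemma~\ref{lem:main-inequality-generic-xy} requires. The non-affine bounds of Theorems~\ref{thm:samplewise-maurer} and~\ref{thm:samplewise-maurer-disint} are recovered by $a=1$, $b=0$, for which $g_{ab}(q,p)=q$.

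The crux is showing that the concentration term $\xi_\gamma$ in~\eqref{eq:result_of_lemma_for_cmi} is non-positive. Fix $\supersamplesmall$ and a sample index $i$, write $\lambda_{i,0},\lambda_{i,1}\in[0,1]$ for the two losses on row $i$, and let $S_i'$ be an independent copy of $S_i$ (with complement $\bar S_i'=1-S_i'$). For the chosen $f_\gamma$, the relevant random variable is $X_1=g_{ab}(\lambda_{i,S_i'},\lambda_{i,\bar S_i'})\in[0,1]$, taking the values $g_{ab}(\lambda_{i,0},\lambda_{i,1})$ and $g_{ab}(\lambda_{i,1},\lambda_{i,0})$ with equal probability. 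Here affineness is what makes the argument work: its mean is
\[
\tfrac12\lefto(g_{ab}(\lambda_{i,0},\lambda_{i,1})+g_{ab}(\lambda_{i,1},\lambda_{i,0})\righto)=g_{ab}(\bar\lambda_i,\bar\lambda_i),\qquad \bar\lambda_i=\tfrac{\lambda_{i,0}+\lambda_{i,1}}{2},
\]
and since $\bar\lambda_i=(\lambda_{i,S_i'}+\lambda_{i,\bar S_i'})/2$, this mean is exactly the second argument of $\binrelentg{\cdot}{\cdot}$ in $f_\gamma$. Applying Lemma~\ref{lem:mcallester-concentration} with $n=1$ to $X_1$ gives $\Ex{S_i'}{e^{\binrelentg{X_1}{g_{ab}(\bar\lambda_i,\bar\lambda_i)}}}\leq 1$ for every fixed $\lambda_i$; averaging over $\lambda_i$ preserves the inequality, so $\xi_\gamma\le 0$ and Corollary~\ref{cor:inequality-for-cmi} collapses to $\sup_\gamma\Ex{\lambda_i,S_i}{f_\gamma(\lambda_{i,S_i},\lambda_{i,\bar S_i})}\le \lCMIidisZ$.

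It remains to push the supremum outside all expectations and to aggregate over $i$. Writing $q_i^{\supersamplesmall}=\Ex{S_i,\randomness}{\lambda_{i,S_i}}$ and $p_i^{\supersamplesmall}=\Ex{S_i,\randomness}{\lambda_{i,\bar S_i}}$ for the conditional per-sample training and test losses, one Jensen step (joint convexity of $f_\gamma$) gives $\sup_\gamma f_\gamma(q_i^{\supersamplesmall},p_i^{\supersamplesmall})\le\lCMIidisZ$; a second Jensen step over the uniform average in $i$, together with $\sup_\gamma\frac1n\sum_i(\cdot)\le\frac1n\sum_i\sup_\gamma(\cdot)$ and~\eqref{eq:dgamma-sup}, yields the per-$\supersamplesmall$ inequality $\binrelent{g_{ab}(\bar q,\bar p)}{g_{ab}(\tfrac{\bar q+\bar p}{2},\tfrac{\bar q+\bar p}{2})}\le\frac1n\sum_i\lCMIidisZ$, where $\bar q=\frac1n\sum_i q_i^{\supersamplesmall}$ and $\bar p=\frac1n\sum_i p_i^{\supersamplesmall}$. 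This is precisely the constraint defining $d^{-1}_{ab}$ in~\eqref{eq:dab-definition}, so solving it for the conditional test loss $\bar p$ gives $\bar p\le d^{-1}_{ab}(\bar q,\frac1n\sum_i\lCMIidisZ)$; taking $\Ex{\supersample}{\cdot}$ and using that the averaged test loss equals $\popriskavgshort$ produces the disintegrated bound~\eqref{eq:new-bound-affine-disint}. For~\eqref{eq:new-bound-affine} I would instead move $\Ex{\supersample}{\cdot}$ inside by one further Jensen step before inverting, replacing $\lCMIidisZ$ by $\lCMIi=\Ex{\supersample}{\lCMIidisZ}$. The main obstacle is the bookkeeping of these interchanges: the supremum over $\gamma$ must remain outside every expectation (as stressed after Theorem~\ref{thm:samplewise-maurer}, moving it inside $\xi_\gamma$ would cost an extra $\log\sqrt n$ term via~\cite[Thm.~1]{maurer-04a}), and one must track whether each $\Ex{\supersample}{\cdot}$ sits inside or outside the binary-KL inversion so that the averaged and disintegrated bounds emerge with the stated placement of the expectation.
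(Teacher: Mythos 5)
Your proposal is correct and follows essentially the same route as the paper's proof: Corollary~\ref{cor:inequality-for-cmi} with $f_\gamma$ the composition of $\binrelentg{\cdot}{\cdot}$ with the affine $g_{ab}$, Lemma~\ref{lem:mcallester-concentration} to force $\xi_\gamma\leq 0$ (using linearity of $g_{ab}$ to identify the conditional mean $g_{ab}(\bar\lambda_i,\bar\lambda_i)$), Jensen steps to pull expectations inside before taking $\sup_\gamma$, and inversion of the binary KL with the expectation over $\supersample$ placed inside or outside depending on which of the two bounds is targeted. The only cosmetic difference is that you work per-index with $m=1$ and aggregate over $i$ by convexity, whereas the paper carries a general random subset of size $m$ (where the i.n.i.d.\ strength of Lemma~\ref{lem:mcallester-concentration} is actually exercised) and then sets $m=1$ at the end.
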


\begin{proof}[Proof of Theorem~\ref{thm:affine-samplewise-maurer}]
Fix~$\supersamplesmall$ and~$\randomsubsetsmall$.
Let~$\subsetchoice'$ be a random variable with the same marginal distribution as~$\subsetchoice$ such that $\lambda$ and $S'$ are independent.
We will use the same notation as in Corollary~\ref{cor:inequality-for-cmi}, and set~$\lambda_\randomsubsetsmall=\ell(\learner(\trainingdatazS,\randomness),\supersamplesmall_\randomsubsetsmall)$,~$\lambda_{\subsetchoice_\randomsubsetsmall}=(\lambda_{\randomsubsetsmall_1,\subsetchoice_{\randomsubsetsmall_1}},\dots,\lambda_{\randomsubsetsmall_m,\subsetchoice_{\randomsubsetsmall_m}})$, and~$\lambdatrainrisk=\frac1m \sum_{i=1}^m \lambda_{u_i,S_{u_i}}$.
Then,
\begin{equation}
\Ex{\subsetchoice'}{\lambdatrainriskp}=\Ex{\subsetchoice'}{\lambdatestriskp} =\frac1m\sum_{i=1}^m \frac{\lambda_{u_i,0}+\lambda_{u_i,1}}{2}= \lambdasuperrisk.%
\end{equation}
Notice that, for any $\subsetchoicesmall$, we have $\lambdasuperrisk=(\lambdatrainriskus + \lambdatestriskus)/2$.
Since~$g_{ab}$ is linear in both of its arguments, it follows that
\begin{equation}
\Ex{\subsetchoice'}{g_{ab}\lefto(\lambdatrainriskp,\lambdatestriskp\righto) } = g_{ab}\lefto(\lambdasuperrisk,\lambdasuperrisk \righto).
\end{equation}
Thus, we can apply Lemma~\ref{lem:mcallester-concentration} with~$\hat \mu=g_{ab}(\lambdatrainriskp,\lambdatestriskp)$ and~$\bar \mu=g_{ab}(\lambdasuperrisk,\lambdasuperrisk )$.
Note that, since~$\supersamplesmall$ is fixed, the summands in~$\hat \mu$ are independent but not identically distributed, and in particular, they do not have the same mean.
This implies that~\cite[Eq.~(17)]{mcallester-13a} does not suffice and we need Lemma~\ref{lem:mcallester-concentration}.
It then follows from Corollary~\ref{cor:inequality-for-cmi}, applied with~$f_\gamma(\lambdatrainrisk,\lambdatestrisk)=md_\gamma\lefto(g_{ab}\lefto(\lambdatrainrisk\righto),g_{ab}\lefto(\lambdatestrisk\righto) \righto)$, that
\begin{equation}
\sup_\gamma \Ex{\lambda_{\randomsubsetsmall},\subsetchoice_{\randomsubsetsmall}}{md_\gamma\lefto(g_{ab}\lefto(\lambdatrainrisk,\lambdatestrisk\righto),g_{ab}\lefto(\lambdasuperrisk,\lambdasuperrisk\righto) \righto)} \leq I^{\supersamplesmall,u}(\lambda_{\randomsubsetsmall};S_{\randomsubsetsmall}).
\end{equation}
By Jensen's inequality, we can move the expectation inside the jointly convex function~$\binrelentg{\cdot}{\cdot}$ and linear function~$g_{ab}$, and then perform the optimization over~$\gamma$, to get
\begin{equation}\label{eq:before-averaging}
d\lefto(g_{ab}\lefto(\Ex{\lambda_{\randomsubsetsmall},\subsetchoice_{\randomsubsetsmall}}{\lambdatrainrisk},\Ex{\lambda_{\randomsubsetsmall},\subsetchoice_{\randomsubsetsmall}}{\lambdatestrisk}\righto),g_{ab}\lefto(\Ex{\lambda_{\randomsubsetsmall},\subsetchoice_{\randomsubsetsmall}}{\lambdasuperrisk},\Ex{\lambda_{\randomsubsetsmall},\subsetchoice_{\randomsubsetsmall}}{\lambdasuperrisk}\righto) \righto) \leq \frac{I^{\supersamplesmall,u}(\lambda_{\randomsubsetsmall};S_{\randomsubsetsmall})}{m}.
\end{equation}
Finally, averaging over~$\supersample$ and~$\randomsubset$, replacing shorthands by their long forms, and again using Jensen's inequality to move the expectations inside~$\binrelent{\cdot}{\cdot}$ and~$g_{ab}$, we get
\begin{equation}
d\lefto(g_{ab}\lefto(\trainriskavgshort,\popriskavgshort\righto)\,||\,g_{ab}\lefto(\frac{\trainriskavgshort+\popriskavgshort}{2},\frac{\trainriskavgshort+\popriskavgshort}{2}\righto) \righto) \leq \frac{\lCMIU}{m}.
\end{equation}
Since the right-hand side is an increasing function of the size~$m$ of the random subset~$\randomsubset$, the tightest bound is obtained by setting~$m=1$, from which the result in \eqref{eq:new-bound-affine} follows.

To prove \eqref{eq:new-bound-affine-disint}, we return to \eqref{eq:before-averaging}.
By averaging over $\randomsubset$ and using Jensen's inequality to move this average inside the convex function $\binrelent{\cdot}{\cdot}$, we find that
\begin{multline}
d\lefto(g_{ab}\lefto(\Ex{\randomsubset,\lambda_{\randomsubset},\subsetchoice_{\randomsubset}}{\lambdatrainrisk}\,||\,\Ex{\randomsubset,\lambda_{\randomsubset},\subsetchoice_{\randomsubset}}{\lambdatestrisk}\righto),g_{ab}\lefto(\Ex{\randomsubset,\lambda_{\randomsubset},\subsetchoice_{\randomsubset}}{\lambdasuperrisk},\Ex{\randomsubset,\lambda_{\randomsubset},\subsetchoice_{\randomsubset}}{\lambdasuperrisk}\righto) \righto)\\
\leq \frac{I^{\supersamplesmall}(\lambda_{\randomsubset};S_{\randomsubset}\vert\randomsubset)}{m}.
\end{multline}
By the definition of $d^{-1}_{ab}$, this implies that
\begin{equation}
\Ex{\randomsubset,\lambda_{\randomsubset},\subsetchoice_{\randomsubset}}{\lambdatestrisk}\leq d^{-1}_{ab}\lefto(\Ex{\randomsubset,\lambda_{\randomsubset},\subsetchoice_{\randomsubset}}{\lambdatrainrisk},\frac{I^{\supersamplesmall}(\lambda_{\randomsubset};S_{\randomsubset}\vert\randomsubset)}{m}\righto).
\end{equation}
By averaging over $\supersample$ and replacing shorthands with their long forms, we find that
\begin{equation}
\popriskavg \!\leq\! \Ex{\supersample}{d^{-1}_{ab}\lefto(\!\trainrisk, \frac{\lCMIUdisZ}{m}}\righto)\!.\!\!\!
\end{equation}
Since $d^{-1}_{ab}$ is an increasing function of its second argument and~$\lCMIUdisZ/m$ is increasing in~$m$, the tightest bound is obtained with~$m=1$.
With this choice, the result in \eqref{eq:new-bound-affine-disint} follows.
\end{proof}

Note that we can establish~\eqref{eq:new-bound-inverted} and~\eqref{eq:new-bound-disint}, hence proving Theorem~\ref{thm:samplewise-maurer} and~\ref{thm:samplewise-maurer-disint}, by setting~$a=1$ and~$b=0$ in~\eqref{eq:new-bound-affine} and~\eqref{eq:new-bound-affine-disint} respectively.
However, we can also optimize over~$a$ and~$b$, which sometimes leads to tighter bounds.
As an example,~$d^{-1}_{0,1}(0.3,0.125)<d^{-1}_{1,0}(0.3,0.125)$.
This shows that~$a=1$,~$b=0$ does not always give the tightest result.

\subsection{Single-draw bound}\label{sec:single-draw-bound}
As mentioned in Section~\ref{sec:high-probability-bounds}, under a technical assumption of absolute continuity, it is possible to obtain a version of Theorem~\ref{thm:tail-bound-main-text} that holds with high probability also over the draw of~$\randomness$.
We present this result in the following theorem.

\begin{thm}[High-probability bounds with respect to~$\randomness$]\label{thm:tail-bound-single-draw-app}
Let~$\lambda = \ellfull$.
Furthermore, let~$P_{\lambda\vert \supersample\subsetchoice}$ denote the conditional distribution of~$\lambda$ given~$\supersample$ and~$\subsetchoice$, and let~$P_{\lambda\vert\supersample}$ denote the conditional distribution of~$\lambda$ given~$\supersample$.
Let~$\imath(\lambda,\subsetchoice\vert\supersample)=\log  P_{\lambda\vert \supersample\subsetchoice}/ P_{\lambda\vert \supersample}$ denote the conditional information density between~$\lambda$ and~$\subsetchoice$ given~$\supersample$.
Assume that~$P_{\lambda\vert\supersample}$~is absolutely continuous with respect to $P_{\lambda\vert \supersample\subsetchoice}$.
Then, with probability at least~$1-\delta$ over the draw of~$\supersample$,~$\subsetchoice$, and~$\randomness$,
\begin{align}\label{eq:thm-tail-bound-single-draw-sqrt}
\testrisk-\trainrisk \leq \sqrt{\frac2{n-1} \lefto( \imath(\lambda,\subsetchoice\vert\supersample) + \log \frac{\sqrt n}{\delta}\righto) }.
\end{align}
Furthermore, also with probability at least~$1-\delta$ over the draw of~$\supersample$,~$\subsetchoice$ and~$\randomness$,
\begin{equation}\label{eq:thm-tail-bound-single-draw-kl}
d\lefto(\trainrisk\,||\,\frac{\trainrisk+\testrisk}{2}\righto) \\
\leq \frac{\imath(\lambda,\subsetchoice\vert\supersample) + \log \frac{2\sqrt n}\delta}{n} .
\end{equation}
\end{thm}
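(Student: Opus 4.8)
The plan is to follow the skeleton of the proof of Theorem~\ref{thm:tail-bound-main-text}, but to replace the Donsker--Varadhan step and the resulting KL divergence by a pathwise change of measure built around the information density $\imath(\lambda,\subsetchoice\vert\supersample)$. Because the statement is now over a single draw of $\randomness$ (equivalently, of $\lambda\distas P_{\lambda\vert\supersample\subsetchoice}$), there is no expectation over $\randomness$ to dispose of, so the Jensen step of that proof becomes unnecessary: I would work directly with a jointly convex $f_\gamma$ evaluated at the realized loss matrix. Adopting the notation of the proof of Theorem~\ref{thm:tail-bound-main-text}, write $\hat\lambda_\subsetchoice=\frac1n\sum_{i=1}^n\lambda_{i,\subsetchoice_i}$ and $\hat\lambda_{\bar\subsetchoice}=\frac1n\sum_{i=1}^n\lambda_{i,\bar\subsetchoice_i}$, so that for the realized draw $\hat\lambda_\subsetchoice=\trainrisk$ and $\hat\lambda_{\bar\subsetchoice}=\testrisk$.

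First I would apply Markov's inequality to the nonnegative random variable $e^{f_\gamma(\hat\lambda_\subsetchoice,\hat\lambda_{\bar\subsetchoice})-\imath(\lambda,\subsetchoice\vert\supersample)}$, obtaining, for a constant $C$ to be determined,
\begin{equation}
\Pr\left[ f_\gamma(\hat\lambda_\subsetchoice,\hat\lambda_{\bar\subsetchoice})-\imath(\lambda,\subsetchoice\vert\supersample) > \log\frac{C}{\delta}\right] \leq \frac{\delta}{C}\,\Ex{\supersample,\subsetchoice,\lambda}{e^{f_\gamma(\hat\lambda_\subsetchoice,\hat\lambda_{\bar\subsetchoice})}e^{-\imath(\lambda,\subsetchoice\vert\supersample)}},
\end{equation}
where $\lambda\distas P_{\lambda\vert\supersample\subsetchoice}$. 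The crucial step is then a change of measure: since $e^{-\imath}=dP_{\lambda\vert\supersample}/dP_{\lambda\vert\supersample\subsetchoice}$ and $P_{\lambda\vert\supersample}$ is assumed absolutely continuous with respect to $P_{\lambda\vert\supersample\subsetchoice}$, for each fixed $(\supersample,\subsetchoice)$ one has
\begin{equation}
\Ex{\lambda\distas P_{\lambda\vert\supersample\subsetchoice}}{e^{f_\gamma(\hat\lambda_\subsetchoice,\hat\lambda_{\bar\subsetchoice})}e^{-\imath}} = \Ex{\lambda'\distas P_{\lambda\vert\supersample}}{e^{f_\gamma(\hat\lambda'_\subsetchoice,\hat\lambda'_{\bar\subsetchoice})}},
\end{equation}
with $\lambda'$ now drawn from the prior $P_{\lambda\vert\supersample}$ independently of $\subsetchoice$. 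Because of this independence, I would then average over $\subsetchoice$ innermost, reducing the right-hand side of the Markov bound to $\Ex{\supersample,\lambda'}{\Ex{\subsetchoice}{e^{f_\gamma(\hat\lambda'_\subsetchoice,\hat\lambda'_{\bar\subsetchoice})}}}$, whose inner factor, with $\lambda'$ held fixed, is exactly the moment generating function controlled in the proof of Theorem~\ref{thm:tail-bound-main-text}.

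It then remains to instantiate $f_\gamma$. For the square-root bound I would take $f_\gamma(\hat\lambda_\subsetchoice,\hat\lambda_{\bar\subsetchoice})=\frac{n-1}{2}(\hat\lambda_{\bar\subsetchoice}-\hat\lambda_\subsetchoice)^2$, so that the sub-Gaussian estimate \eqref{eq:xi-gamma-conc-sq} gives $\Ex{\subsetchoice}{e^{\frac{n-1}{2}(\hat\lambda'_{\bar\subsetchoice}-\hat\lambda'_\subsetchoice)^2}}\leq\sqrt n$ and hence $C=\sqrt n$; for the binary KL bound I would take $f_\gamma=n\,\binrelent{\hat\lambda_\subsetchoice}{(\hat\lambda_\subsetchoice+\hat\lambda_{\bar\subsetchoice})/2}$, so that Maurer's inequality \eqref{eq:xi-gamma-conc-kl} gives $\Ex{\subsetchoice}{e^{n\binrelent{\hat\lambda'_\subsetchoice}{(\hat\lambda'_\subsetchoice+\hat\lambda'_{\bar\subsetchoice})/2}}}\leq\sqrt{2n}$ and hence $C=\sqrt{2n}$ (which may be bounded by $2\sqrt n$ to match the stated constant). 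In both cases the overall expectation is at most $C$, so the Markov bound leaves probability at most $\delta$, and with probability at least $1-\delta$ we obtain
\begin{equation}
f_\gamma(\trainrisk,\testrisk)\leq\imath(\lambda,\subsetchoice\vert\supersample)+\log\frac{C}{\delta}.
\end{equation}
Substituting the two choices of $f_\gamma$ and $C$ and rearranging---taking a square root in the first case---would yield \eqref{eq:thm-tail-bound-single-draw-sqrt} and \eqref{eq:thm-tail-bound-single-draw-kl}.

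The main obstacle is the change-of-measure identity and locating precisely where absolute continuity is indispensable. Unlike Theorem~\ref{thm:tail-bound-main-text}, whose KL-based form follows from Donsker--Varadhan with no such hypothesis, the single-draw bound invokes $\imath$ pathwise, and the substitution $\Ex{\lambda\distas P_{\lambda\vert\supersample\subsetchoice}}{e^{-\imath}g(\lambda)}=\Ex{\lambda'\distas P_{\lambda\vert\supersample}}{g(\lambda')}$ is legitimate only when $dP_{\lambda\vert\supersample}/dP_{\lambda\vert\supersample\subsetchoice}$ exists, which is exactly the assumption that $P_{\lambda\vert\supersample}$ is absolutely continuous with respect to $P_{\lambda\vert\supersample\subsetchoice}$. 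A secondary point to verify is that the concentration estimates \eqref{eq:xi-gamma-conc-sq} and \eqref{eq:xi-gamma-conc-kl}, originally applied to a fresh membership vector in the proof of Theorem~\ref{thm:tail-bound-main-text}, transfer verbatim here, since after the change of measure we are again averaging $e^{f_\gamma}$ over $\subsetchoice$ with a fixed loss matrix $\lambda'$.
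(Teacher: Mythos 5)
Your proposal is correct and follows essentially the same route as the paper: the paper likewise establishes $\Exop_{\lambda,\supersample,\subsetchoice}[e^{f_\gamma-\imath(\lambda,\subsetchoice\vert\supersample)-\xi_\gamma}]=1$ via the absolute-continuity assumption (its version of your change-of-measure step), applies Markov's inequality, and then instantiates $f_\gamma$ with the same two choices and invokes the same concentration estimates \eqref{eq:xi-gamma-conc-sq} and \eqref{eq:xi-gamma-conc-kl}. Your identification of where absolute continuity is indispensable, and the observation that the bound $\sqrt{2n}\leq 2\sqrt n$ matches the stated constant, both align with the paper's argument.
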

\begin{proof}
For any function~$f_\gamma(\cdot,\cdot)$, define
\begin{align}
 \xi_\gamma = \log \Ex{{\lambda,\supersample,\subsetchoice'}}{e^{f_\gamma(\lambdatrainriskpn,\lambdatestriskpn) }} = \log  \Ex{{\lambda',\supersample,\subsetchoice}}{e^{f_\gamma(\lambdatrainriskpn,\lambdatestriskpn) }},
\end{align}
where we used the observation that~$\lambda,\supersample,\subsetchoice'$ has the same distribution as~$\lambda',\supersample,\subsetchoice$.
By our absolute continuity assumption,~\cite[Prop.~17.1]{polyanskiy-19a} implies that
\begin{align}
 1 =  \Ex{{\lambda,\supersample,\subsetchoice}}{e^{f_\gamma(\lambdatrainriskpn,\lambdatestriskpn) - \imath(\lambda,\subsetchoice\vert\supersample) -  \xi_\gamma}}.
\end{align}
Note that if~$\supersample$ and~$\subsetchoice$ are fixed, the randomness of~$\lambda$ is fully captured by~$\randomness$.
By Markov's inequality, we conclude that, with probability at least~$1-\delta$ under the draw of~$\supersample$,~$\subsetchoice$ and~$\randomness$,
\begin{align}
e^{f_\gamma(\lambdatrainriskpn,\lambdatestriskpn) - \imath(\lambda,\subsetchoice\vert\supersample) -  \xi_\gamma} \leq \log\frac1\delta
\end{align}
from which it follows that
\begin{align}
f_\gamma(\lambdatrainriskpn,\lambdatestriskpn)  \leq \log \frac1\delta + \imath(\lambda,\subsetchoice\vert\supersample) +  \xi_\gamma.
\end{align}
We establish the result in~\eqref{eq:thm-tail-bound-single-draw-sqrt} by setting~$f_\gamma(\lambdatrainriskpn,\lambdatestriskpn)=\frac{(n-1)}{2}(\lambdatestriskpn-\lambdatrainriskpn)^2$ and using~\eqref{eq:xi-gamma-conc-sq}.
Similarly, we establish the result in~\eqref{eq:thm-tail-bound-single-draw-kl} by setting~$f_\gamma(\lambdatrainriskpn,\lambdatestriskpn)=nd(\lambdatrainriskpn\,||\,(\lambdatrainriskpn+\lambdatestriskpn)/2)$ and using~\eqref{eq:xi-gamma-conc-kl}.

\end{proof}

The bounds in Theorem~\ref{thm:tail-bound-single-draw-app} are given in terms of the conditional information density~\cite{hellstrom-20b}.
Assuming that the learning algorithm implements a function from a class of bounded Natarajan dimension, the conditional information density can be bounded in a similar way as was done for the e-CMI and the KL divergence in Theorem~\ref{thm:info-measure-bounds}.
We present the resulting bound in the following theorem.

\begin{thm}\label{thm:single-draw-info-measure-bounds}
Consider a multiclass classification setting, for which~$\mathcal Z=\mathcal X\times \mathcal Y$, where~$\mathcal X$ is the instance space and~$\mathcal Y$ the label space, and assume that~$\abs{\mathcal Y}=N$.
Furthermore, assume that the learning algorithm implements a function~$f:\mathcal X\rightarrow \mathcal Y$ where~$f\in\mathcal F$ belongs to a class of finite Natarajan dimension~$d_N$~\cite{natarajan-89}.
Finally, assume that~$2n\geq d_N+1$.
Then, with probability at least~$1-\delta$ under the draw of~$\supersample$,~$\subsetchoice$, and~$\randomness$,
\begin{align}\label{eq:thm-info-measure-info-density}
\imath(\lambda,\subsetchoice\vert\supersample) \leq d_N\log \left(\binom{N}{2}\frac{2en}{d_N} \right) + \log \frac1\delta.
\end{align}
\end{thm}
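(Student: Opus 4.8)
The plan is to bound the tail of the information density directly via Markov's inequality under the joint law, and then reuse the combinatorial counting already carried out in the proof of Theorem~\ref{thm:info-measure-bounds}. The first observation is that drawing~$\supersample$,~$\subsetchoice$, and~$\randomness$ induces a draw of the triple~$(\lambda,\supersample,\subsetchoice)$ from the joint distribution~$P_{\lambda\supersample\subsetchoice}$, since~$\lambda=\ellfull$ is determined once~$\randomness$ is fixed. Hence a statement holding with probability~$1-\delta$ over~$\supersample$,~$\subsetchoice$,~$\randomness$ is precisely a statement holding with probability~$1-\delta$ under~$P_{\lambda\supersample\subsetchoice}$, and no change of measure is needed for this particular bound.

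First I would apply Markov's inequality to the nonnegative random variable~$e^{\imath(\lambda,\subsetchoice\vert\supersample)}$: for any threshold~$t$,
\[
\Pr_{P_{\lambda\supersample\subsetchoice}}\!\left[\imath(\lambda,\subsetchoice\vert\supersample)>t\right] \leq e^{-t}\,\Ex{P_{\lambda\supersample\subsetchoice}}{e^{\imath(\lambda,\subsetchoice\vert\supersample)}}.
\]
Since~$e^{\imath(\lambda,\subsetchoice\vert\supersample)}=P_{\lambda\vert\supersample\subsetchoice}/P_{\lambda\vert\supersample}$, the expectation can be controlled exactly as in the proof of~\eqref{eq:thm-info-measure-kl}. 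Using that~$\lambda$ is discrete, so that~$P_{\lambda\vert\supersample\subsetchoice}\leq 1$, gives~$e^{\imath}\leq 1/P_{\lambda\vert\supersample}$; since~$1/P_{\lambda\vert\supersample}$ does not depend on~$\subsetchoice$, marginalizing over~$\subsetchoice$ and bounding the average over~$\supersample$ by its supremum yields
\[
\Ex{P_{\lambda\supersample\subsetchoice}}{e^{\imath(\lambda,\subsetchoice\vert\supersample)}} \leq \sup_{\supersample}\Ex{P_{\lambda\vert\supersample}}{\frac{1}{P_{\lambda\vert\supersample}}} = \sup_{\supersample}\abs{L(\supersample)},
\]
where~$L(\supersample)$ is the set of values attainable by~$\lambda$ given~$\supersample$, and the last equality is the telescoping identity used in Theorem~\ref{thm:info-measure-bounds}.

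It then remains only to count, and here I would transcribe the argument from Theorem~\ref{thm:info-measure-bounds} verbatim: the loss map induces a surjection from the prediction set~$F(\supersample)$ onto~$L(\supersample)$, so~$\abs{L(\supersample)}\leq\abs{F(\supersample)}\leq g_{\mathcal F}(2n)$, and Lemma~\ref{lem:sauer-shelah} together with the hypothesis~$2n\geq d_N+1$ gives~$g_{\mathcal F}(2n)\leq\left(\binom{N}{2}\tfrac{2en}{d_N}\right)^{d_N}$. Choosing~$t=\log\left(g_{\mathcal F}(2n)/\delta\right)$ makes the right-hand side of the Markov bound equal to~$\delta$, so with probability at least~$1-\delta$ we obtain~$\imath(\lambda,\subsetchoice\vert\supersample)\leq\log g_{\mathcal F}(2n)+\log(1/\delta)\leq d_N\log\left(\binom{N}{2}\tfrac{2en}{d_N}\right)+\log(1/\delta)$, which is~\eqref{eq:thm-info-measure-info-density}.

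I expect the only real care to be bookkeeping rather than a genuine obstacle: one must check that Markov is taken under the joint measure~$P_{\lambda\supersample\subsetchoice}$, and that discarding~$\subsetchoice$ is legitimate only after the numerator PMF has been bounded by~$1$. The absolute continuity hypothesis inherited from Theorem~\ref{thm:tail-bound-single-draw-app} is not actually restrictive here, because in the discrete multiclass setting~$P_{\lambda\vert\supersample}$ is a strictly positive mixture of the kernels~$P_{\lambda\vert\supersample\subsetchoice}$, so the information density is automatically finite.
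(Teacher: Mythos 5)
Your proposal is correct and follows essentially the same route as the paper's proof: Markov's inequality applied to $e^{\imath(\lambda,\subsetchoice\vert\supersample)}$ under the joint law $P_{\lambda\supersample\subsetchoice}$, followed by the same chain of bounds (discreteness of $\lambda$, supremum over $\supersample$, the surjection from $F(\supersample)$ onto $L(\supersample)$, the growth function, and Lemma~\ref{lem:sauer-shelah}) already used for~\eqref{eq:thm-info-measure-kl}. Your explicit choice of threshold $t=\log(g_{\mathcal F}(2n)/\delta)$ is just a rephrasing of the paper's one-line Markov step, and the closing remarks on bookkeeping and positivity of $P_{\lambda\vert\supersample}$ are accurate.
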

\begin{proof}
Note that if~$\supersample$ and~$\subsetchoice$ are fixed, the randomness of~$\lambda$ is fully captured by~$\randomness$.
Thus, by Markov's inequality, we conclude that with probability at least~$1-\delta$ under the draw of~$\supersample$,~$\subsetchoice$, and~$\randomness$,
\begin{align}\label{eq:pf-of-high-prob-info-sd-first}
\imath(\lambda,\subsetchoice\vert\supersample) = \log \frac{P_{\lambda \vert \supersample\subsetchoice }}{P_{\lambda \vert \supersample }}   \leq \log\lefto(\frac1\delta \Ex{P_{\lambda \supersample\subsetchoice }}{\frac{P_{\lambda \vert \supersample\subsetchoice }}{P_{\lambda \vert \supersample }} }\righto).
\end{align}
The right-hand side of~\eqref{eq:pf-of-high-prob-info-sd-first} coincides with the right-hand side of~\eqref{eq:pf-of-high-prob-info-second}.
The desired result now follows by combining~\eqref{eq:pf-of-high-prob-info-sd-first}
and~\eqref{eq:pf-of-high-prob-info-third}-\eqref{eq:pf-of-high-prob-info-last}.
\end{proof}

\subsection{Comparison to previous bounds}\label{app:comparing_bounds}

In this section, we compare the bounds in this paper to comparable results in from previous work.
First, we perform a comparison with the results reported in~\cite{hellstrom-21a}, where a number of bounds that are functionally similar to ours are derived.
The bounds that we derive in this paper are tighter due to the use of evaluated CMI.
By the data-processing inequality for KL divergence~\cite[Thm.~2.2.6]{polyanskiy-19a}, our bounds can be relaxed to obtain bounds with the ordinary CMI in place of the evaluated CMI, demonstrating that they compare favorably to those of~\cite{hellstrom-21a}.
The same argument holds for the high-probability bounds.

Next, we compare~\eqref{eq:thm-tail-bound-sqrt} to the bound in~\cite[Corollary~1]{grunwald-21a}.
In terms of convergence rates, the bound in~\cite[Corollary~1]{grunwald-21a} interpolates between our square-root and linear bounds, where the specific rate depends on the parameter~$\beta^*$ in the Bernstein condition in~\cite{grunwald-21a}.
For the case of $0/1$-loss, we have that~$B=4$ and~$\beta^*=0$ in the Bernstein condition.
Thus,~\cite[Corollary~1]{grunwald-21a} has the same~$1/\sqrt n$ rate as our square-root bound.
Quantitatively, one expects our square-root bound to be tighter for several reasons:
\begin{inparaenum}[i)]
\item the bound in~\cite[Corollary~1]{grunwald-21a} includes a constant~$1/\eta_{\textnormal{max}}>28.8$ that multiplies the KL divergence and~$\log 1/\delta$ terms,
\item the bound in~\cite[Corollary~1]{grunwald-21a} includes an extra~$\eta_{\textnormal{max}}/(4n)$ term, and
\item the~$\log 1/\delta$ term in the bound in~\cite[Corollary~1]{grunwald-21a} appears outside of the square-root containing the KL divergence.
\end{inparaenum}
To perform a direct quantitative comparison, some simplifying assumptions are needed. First, while the data-processing inequality implies that the KL divergence in~\eqref{eq:thm-tail-bound-sqrt} is always smaller than or equal to the KL divergence in the bound of~\cite[Corollary~1]{grunwald-21a}, we shall assume that both KL divergences equal~$1$. Furthermore, we set~$\delta=0.01$ and~$n=1000$. Under these assumptions, the bound in~\cite[Corollary~1]{grunwald-21a} gives a generalization gap of approximately~$2.89$, which is vacuous, whereas the bound in~\eqref{eq:thm-tail-bound-sqrt} gives a generalization gap of approximately~$0.13$. This discrepancy arises mainly due to the large constants described above, and holds for other reasonable values of the parameters involved.

\section{Additional Numerical Results}\label{app:extra-numerical}
In this section, we present some additional numerical results for deep learning settings.

\begin{figure*}
\centering
\begin{subfigure}{.329\textwidth}
    \centering
    \includegraphics[width=\textwidth]{mnist-sgd.pdf}
    \caption{No label corruption}
    \label{fig:corrupt-0}
\end{subfigure}
\begin{subfigure}{.329\textwidth}
    \centering
    \includegraphics[width=\textwidth]{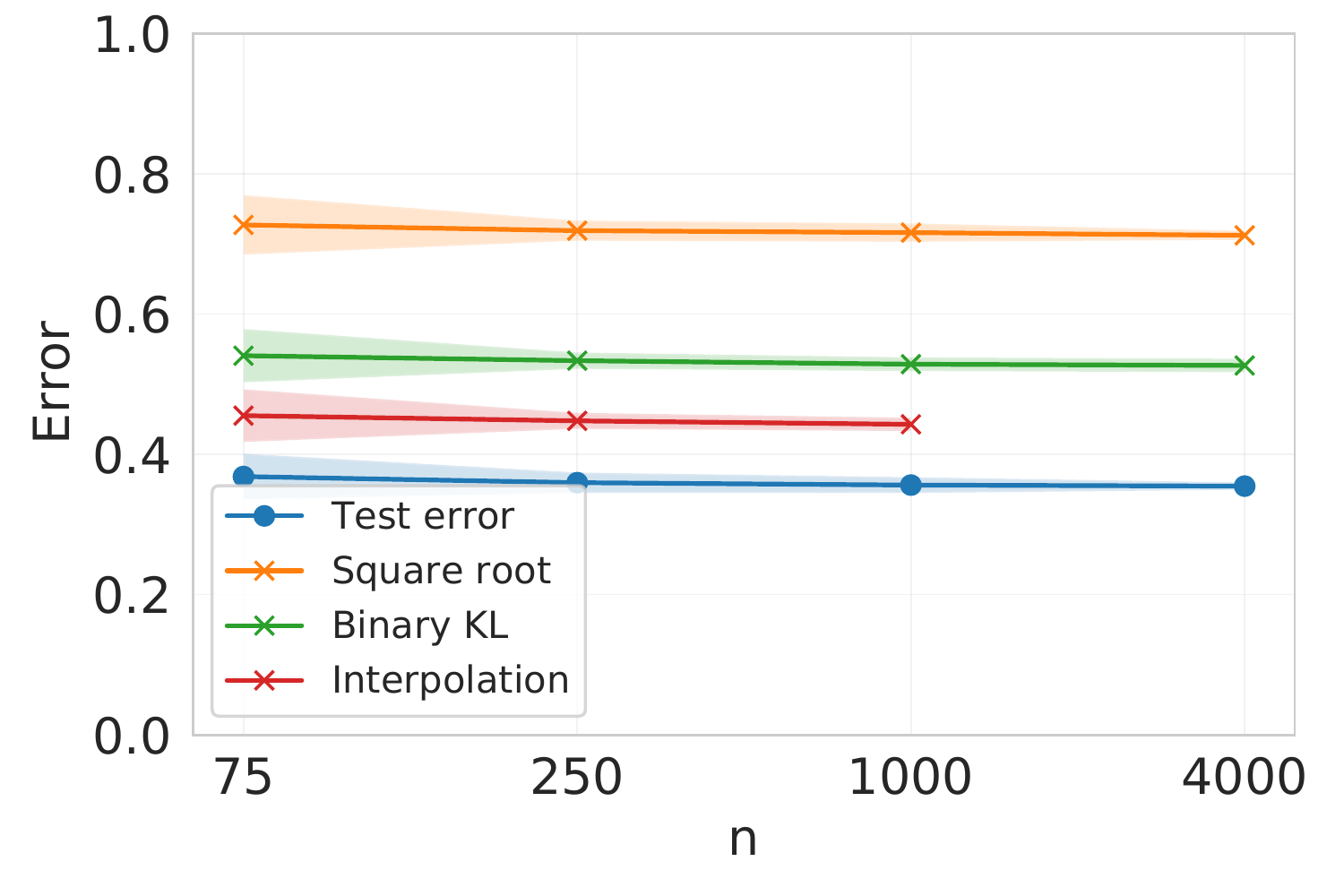}
    \caption{50$\%$ label corruption}
    \label{fig:corrupt-50}
\end{subfigure}
\begin{subfigure}{.329\textwidth}
    \centering
    \includegraphics[width=\textwidth]{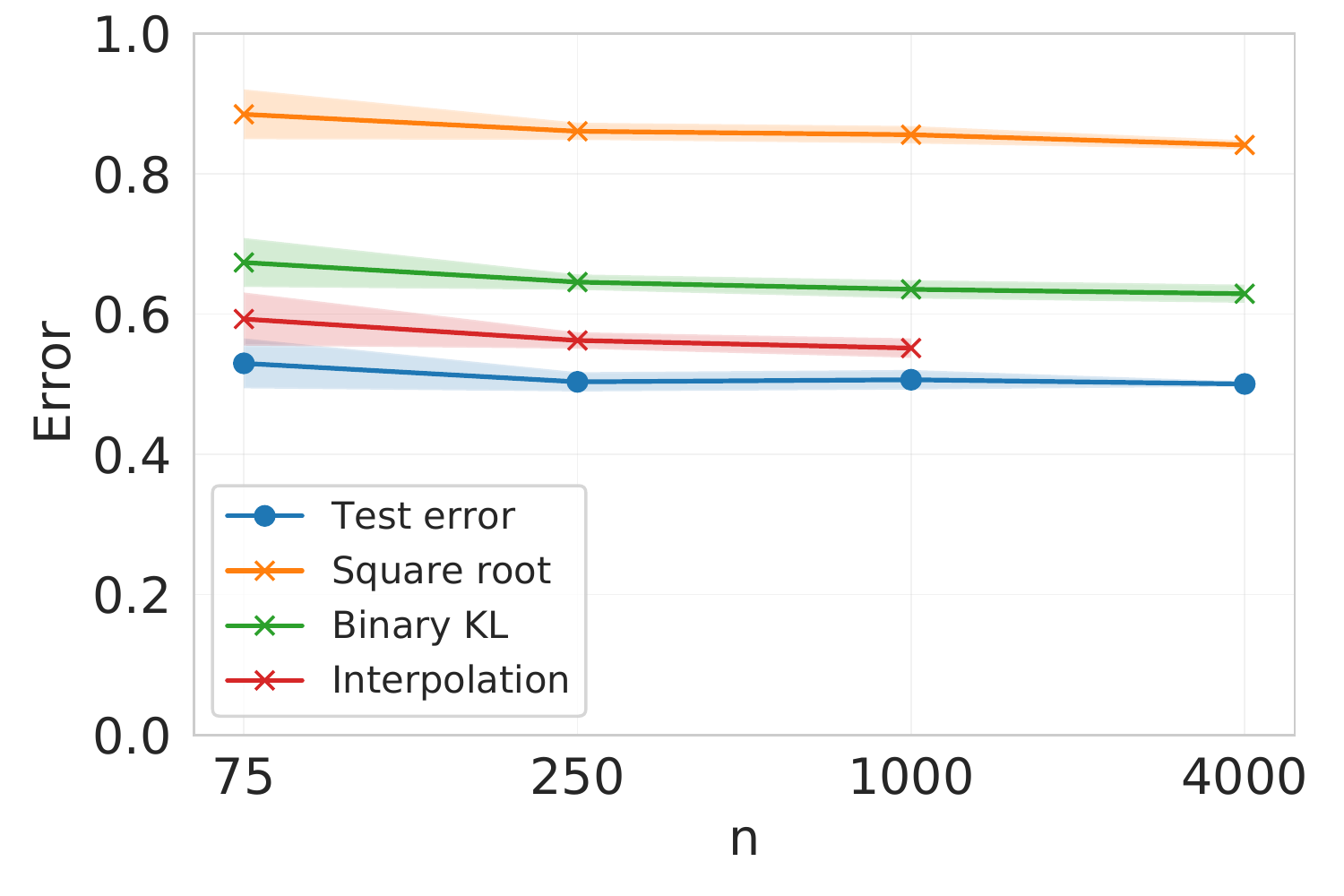}
    \caption{100$\%$ label corruption}
    \label{fig:corrupt-100}
\end{subfigure}

    \caption{Numerical evaluation of the test error for the binarized MNIST setting considered in Figure~\ref{fig:mnist-sgd}, but with varying degrees of label corruption, along with the upper bounds provided by the square-root bound in~\eqref{eq:absolute-max-bound-disintegrated}, the binary KL bound in~\eqref{eq:new-bound-disint} and, when applicable, the interpolation bound in~\eqref{eq:interpolating-steinke}.}
    \label{fig:labelnoise-experiment}
\end{figure*}

\begin{figure*}
\centering
\begin{subfigure}{.33\textwidth}
    \centering
    \includegraphics[width=\textwidth]{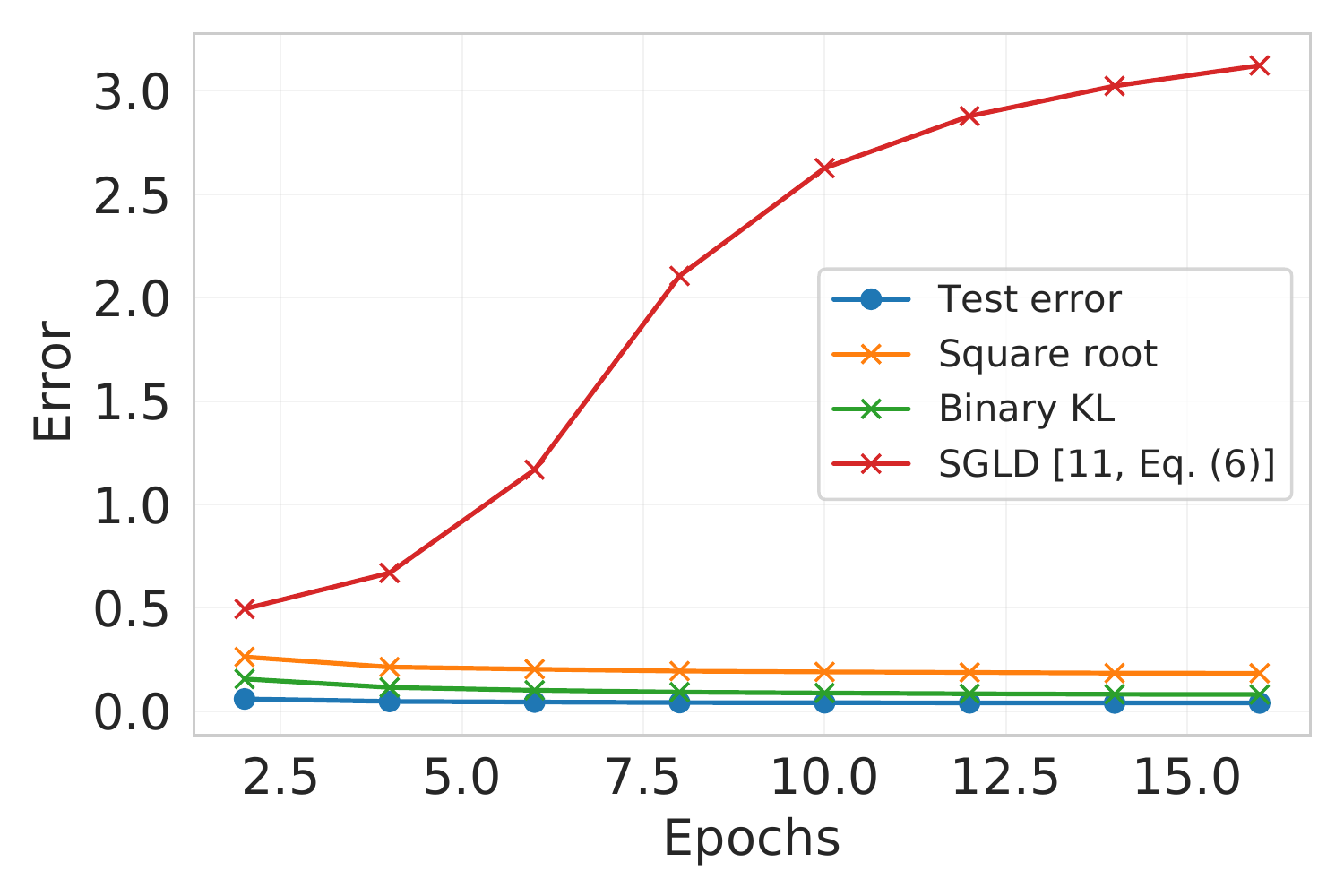}
    \caption{CIFAR10, SGLD}
    \label{fig:cifar-sgld-out}
\end{subfigure}
\begin{subfigure}{.33\textwidth}
    \centering
    \includegraphics[width=\textwidth]{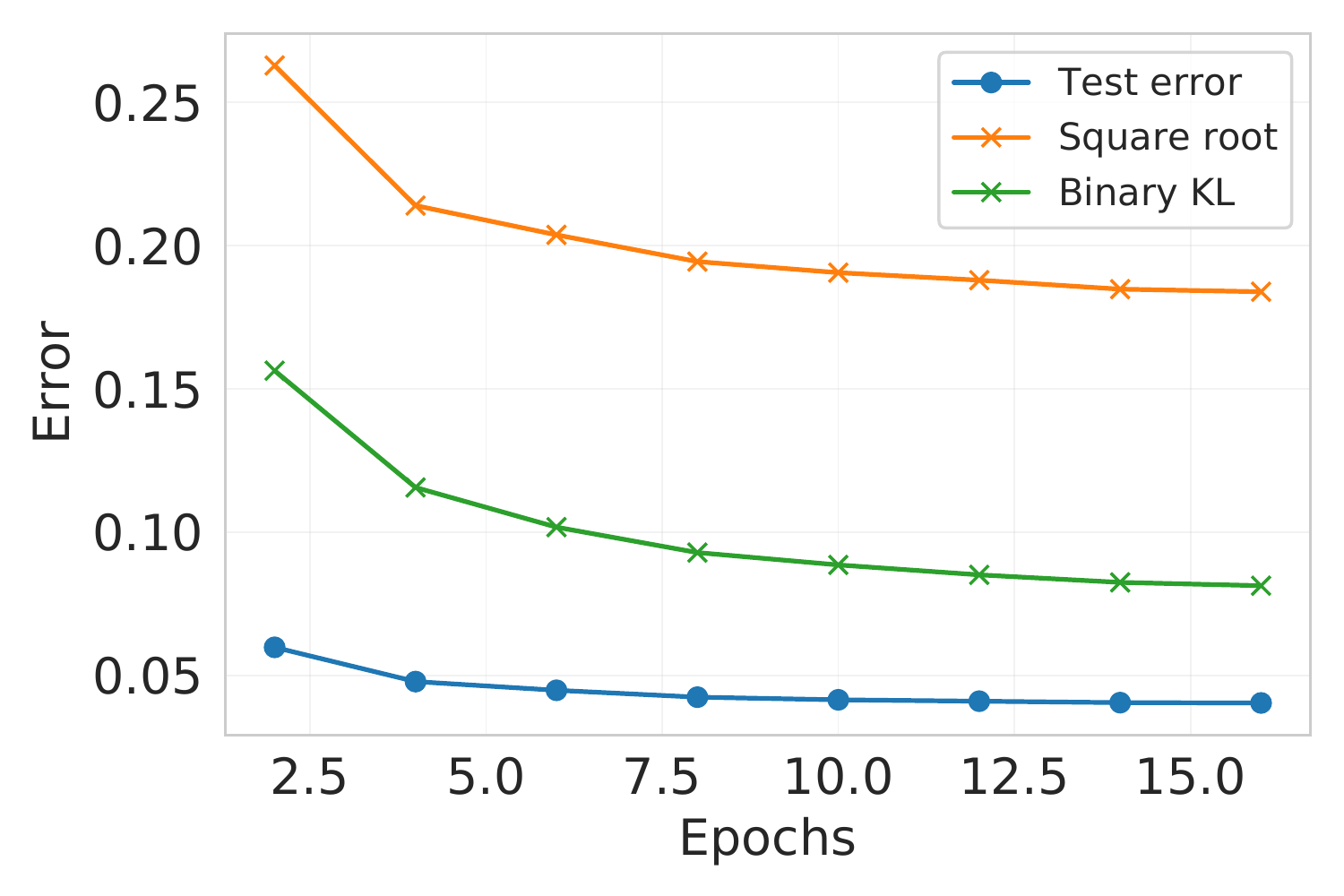}
    \caption{CIFAR10, SGLD (zoomed)}
    \label{fig:cifar-sgld-zoom}
\end{subfigure}

    \caption{Numerical evaluation of the test error for a pre-trained ResNet-50 fine-tuned with SGLD on CIFAR-10, along with the upper bounds provided by the square-root bound in~\eqref{eq:absolute-max-bound-disintegrated}, the binary KL bound in~\eqref{eq:new-bound-disint}, and the SGLD bound in~\cite[Eq.~6]{negrea-19a}.}
    \label{fig:cifar-sgld}
\end{figure*}

\begin{figure*}
\centering
\begin{subfigure}{.329\textwidth}
    \centering
    \includegraphics[width=\textwidth]{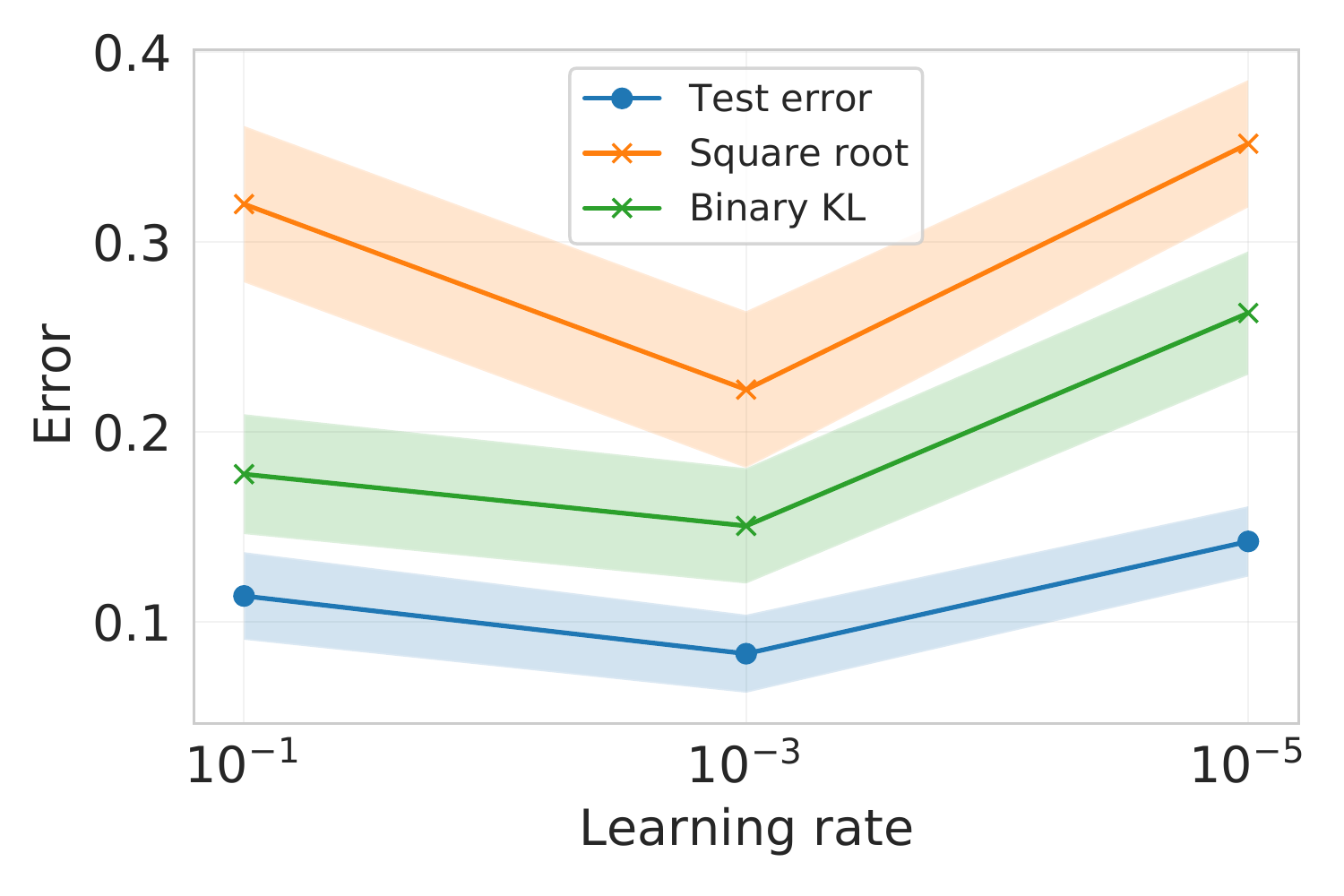}
    \caption{Learning rate,~$n=75$}
    \label{fig:mnist-lr}
\end{subfigure}
\begin{subfigure}{.329\textwidth}
    \centering
    \includegraphics[width=\textwidth]{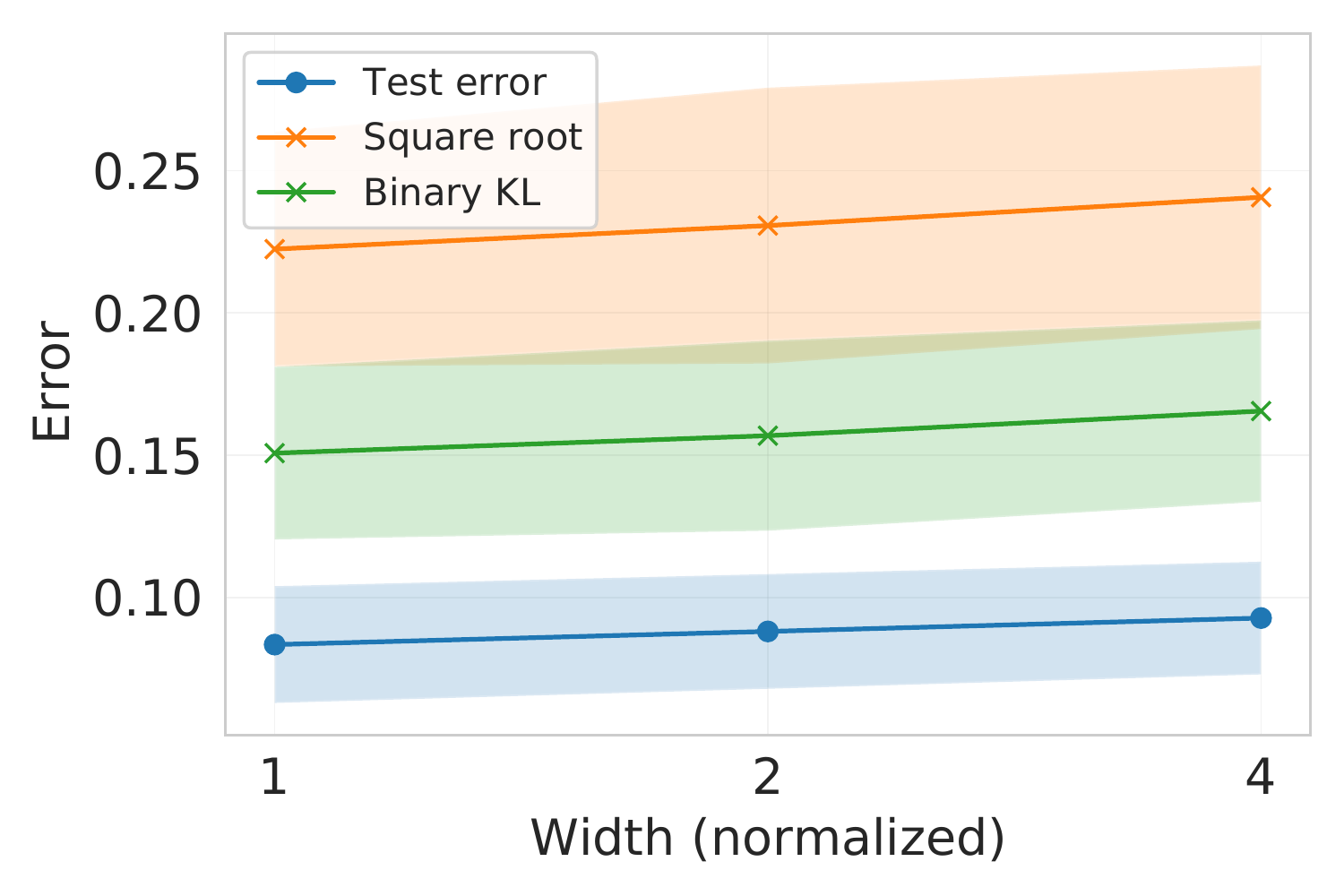}
    \caption{Width,~$n=75$}
    \label{fig:mnist-width}
\end{subfigure}
\begin{subfigure}{.329\textwidth}
    \centering
    \includegraphics[width=\textwidth]{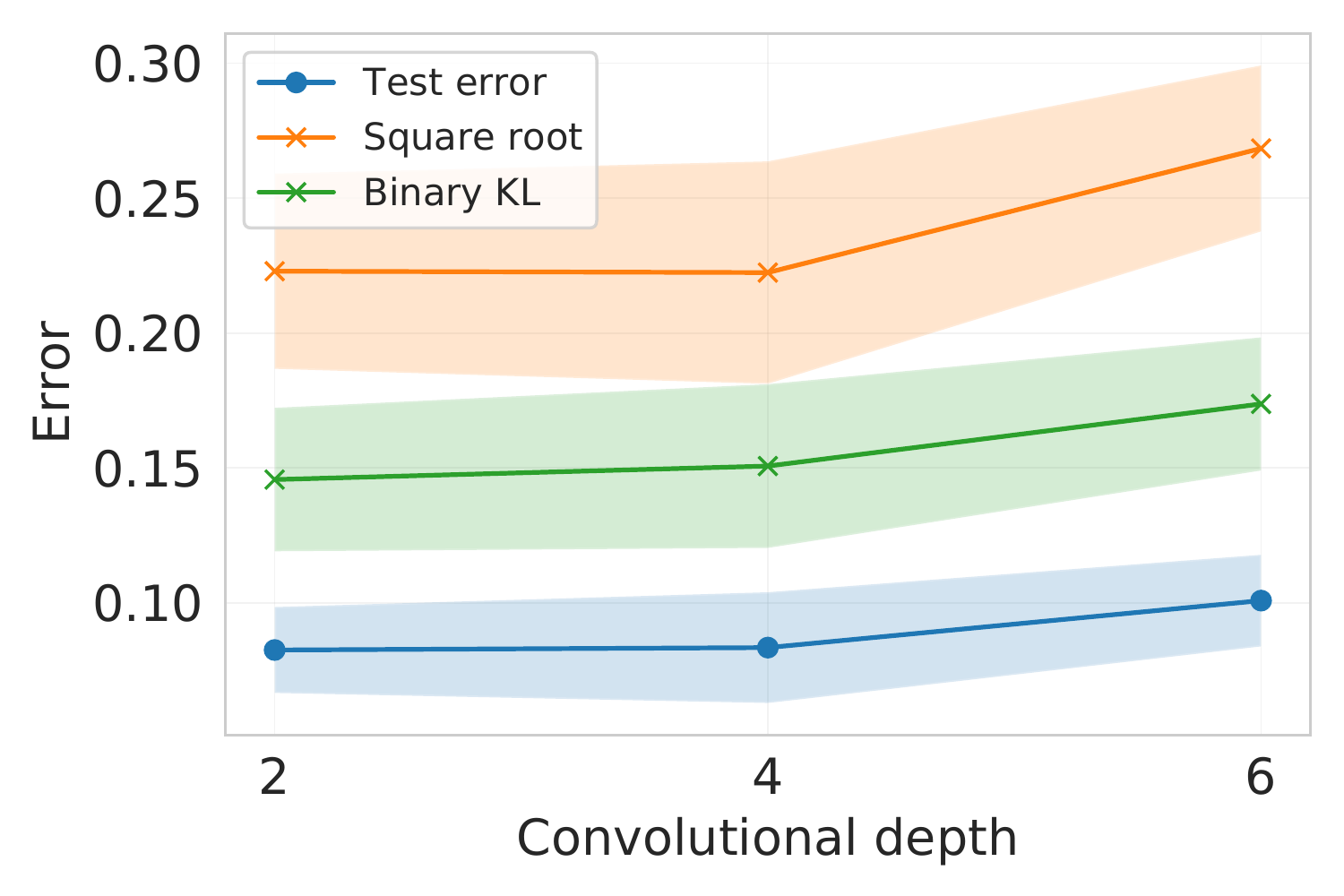}
    \caption{Depth,~$n=75$}
    \label{fig:mnist-depth}
\end{subfigure}

\centering
\begin{subfigure}{.329\textwidth}
    \centering
    \includegraphics[width=\textwidth]{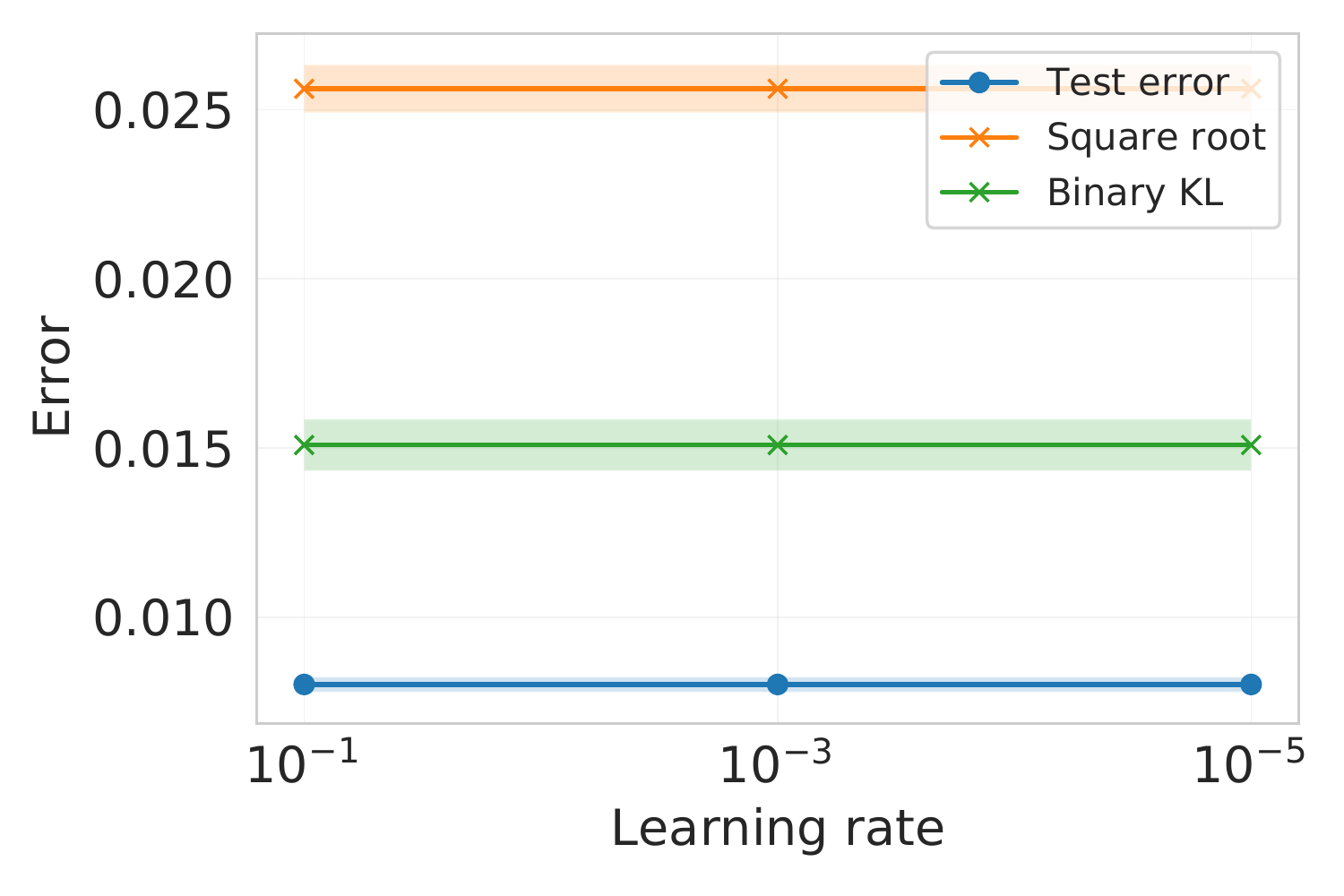}
    \caption{Learning rate,~$n=4000$}
    \label{fig:mnist-lr-new}
\end{subfigure}
\begin{subfigure}{.329\textwidth}
    \centering
    \includegraphics[width=\textwidth]{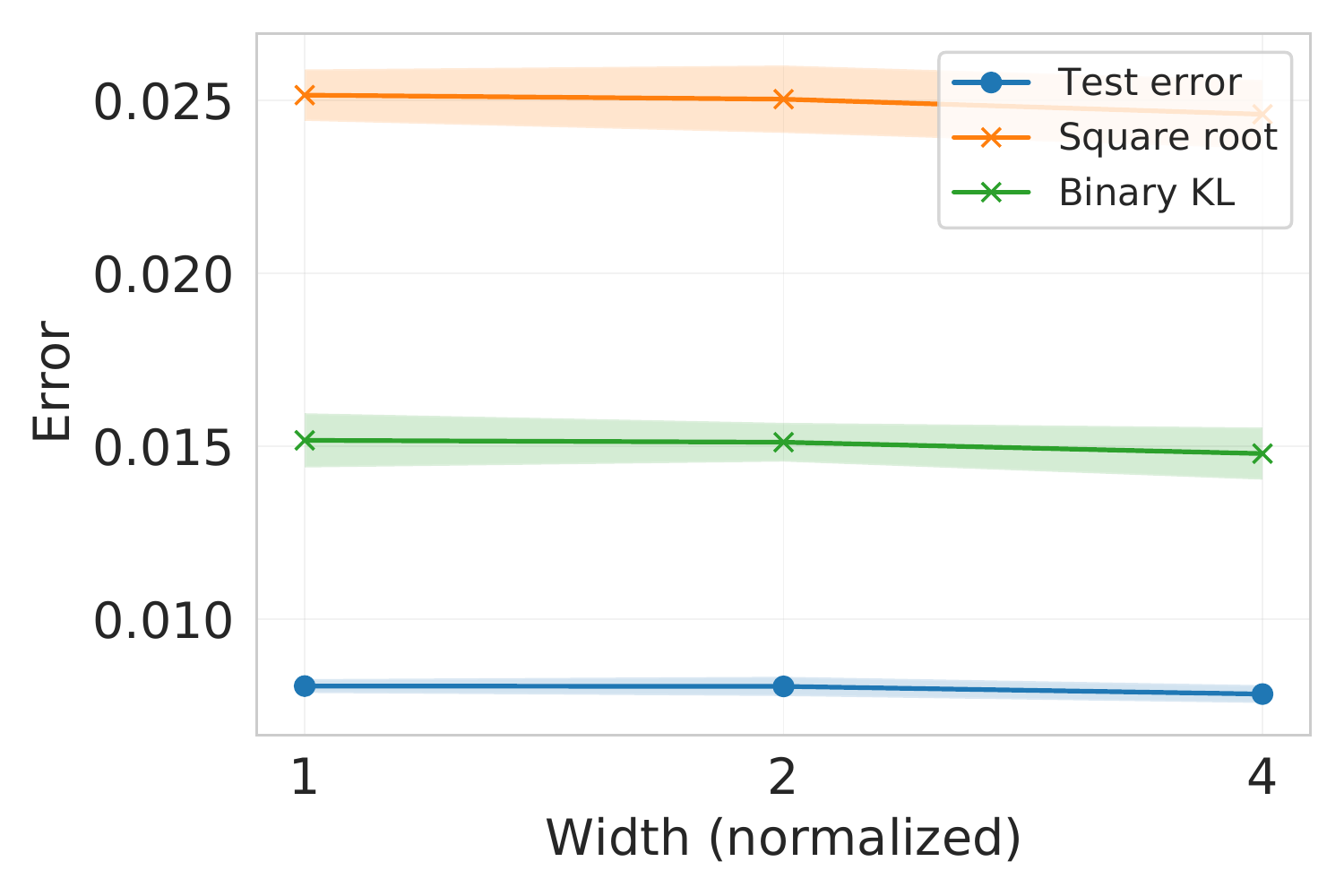}
    \caption{Width,~$n=4000$}
    \label{fig:mnist-width-new}
\end{subfigure}
\begin{subfigure}{.329\textwidth}
    \centering
    \includegraphics[width=\textwidth]{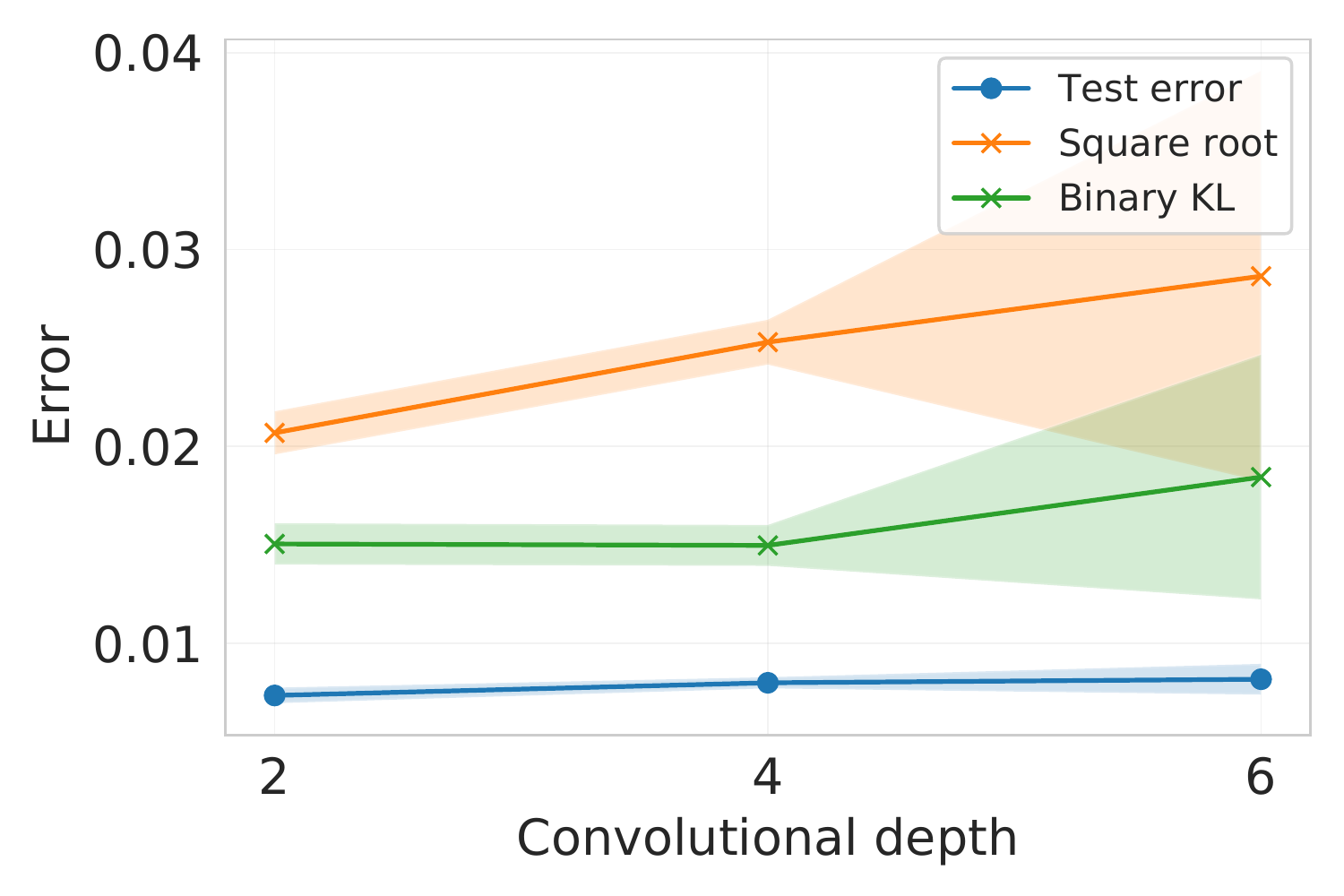}
    \caption{Depth,~$n=4000$}
    \label{fig:mnist-depth-new}
\end{subfigure}

    \caption{Numerical evaluation of the test error for the binarized MNIST setting considered in Figure~\ref{fig:mnist-sgd}, but with varying learning rate, width, and depth of the network, along with the upper bounds provided by the square-root bound in~\eqref{eq:absolute-max-bound-disintegrated} and the binary KL bound in~\eqref{eq:new-bound-disint}. For the figures in the first row, we fix~$n=75$, and for the second row, we fix~$n=4000$.}
    \label{fig:vary-stuff}
\end{figure*}

To study a scenario with heavy overfitting, we repeat the experiment with SGD on the binarized MNIST data set, as presented in Figure~\ref{fig:mnist-sgd}, but with varying degrees of label randomization.
Specifically, for each sample in the binarized MNIST data set, we consider a random variable~$C\distas \mathrm{Bern}(a)$, where~$a$ is the probability of corruption.
If~$C=1$, the label is set to either~$4$ or~$9$, picked uniformly at random.
If~$C=0$, we leave the label unchanged.
For all levels of label corruption, the networks reached training errors of zero or near zero.
In Figure~\ref{fig:corrupt-0}, we consider~$a=0$, so no labels are corrupted.
This is the same as Figure~\ref{fig:mnist-sgd}, and is included only for reference.
In Figure~\ref{fig:corrupt-50}, we consider~$a=0.5$.
Finally, in Figure~\ref{fig:corrupt-100}, we set~$a=1$, so that all labels are corrupted.
Thus, for this scenario, the training set carries no relevant information, and the classifier completely overfits to the data set.
For all levels of corruptions, the interpolation bound in~\eqref{eq:interpolating-steinke} gives the tightest bound when applicable.
Furthermore, the binary KL bound in~\eqref{eq:new-bound-disint} is tighter than the square-root bound in~\eqref{eq:absolute-max-bound-disintegrated}.
The bounds give somewhat reasonable estimates of the test error.
In contrast, the generalization bounds for randomized labels reported in previous works are vacuous~\cite{dziugaite-17a,hellstrom-21a}.

In Figure~\ref{fig:cifar-sgld}, we consider SGLD, as in Figure~\ref{fig:mnist-sgld}, but for CIFAR10 instead of the binarized MNIST data set.
Again, the binary KL bound in~\eqref{eq:new-bound-disint} gives the tightest result.
Unlike for the binarized MNIST scenario, the SGLD bound in~\cite[Eq.~6]{negrea-19a} is not the tightest during early epochs.

Finally, to study the robustness of our bounds to various hyperparameter changes, we again repeat the experiment with SGD on the binarized MNIST data set, but with varying learning rate, network width, and network depth.
First, we consider the case of~$n=75$, since this leads to higher test errors, making variations in it more noticeable.
Second, we consider~$n=4000$ to examine whether the same qualitative conclusions hold for a larger sample size.
This is shown in Figure~\ref{fig:vary-stuff}.
When varying the width in Figure~\ref{fig:mnist-width}, the number of units in each of the convolutional layers of the original network, described in Table~\ref{tab:mnist-cnn}, is multiplied by the factor given on the horizontal axis.
When varying the depth in Figure~\ref{fig:mnist-depth}, the convolutional depth of~$4$ corresponds to the original network, described in Table~\ref{tab:mnist-cnn}, the depth of~$2$ is the same network but with the two first convolutional layers removed, and the depth of~$6$ is the same network but with two additional convolutional layers with 64 units,~$3\times 3$ size, a stride of~1, and padding~1.
The bounds seem to correlate well with the test error across these scenarios, and they correctly predict the values for each hyperparameter that lead to the lowest and highest test errors.
This effect is noticeable for~$n=75$, where there is significant variation between the different hyperparameter values.
For~$n=4000$, this variation is heavily reduced, but the overall behavior of our bounds is still consistent with the behavior of the test error.

\section{Experimental Details}\label{app:experiments}
In this section, we describe the training procedures, network architectures, and experimental details used in Section~\ref{sec:numerical_results}.
Note that the setups are the same as those considered in~\cite{harutyunyan-21a}.
The experiments were run on Google Colab Pro GPUs.

In all experiments, we estimate the test error of the networks and the upper bounds provided by the square-root bound in~\eqref{eq:absolute-max-bound-disintegrated} and the binary KL bound in~\eqref{eq:new-bound-disint} (and~\cite[Eq.~(6)]{negrea-19a} for Figure~\ref{fig:mnist-sgld}) by drawing~$k_1$ samples of~$\supersample$, which consists of~$n$ pairs of samples drawn randomly from the corresponding dataset, and~$k_2$ samples of $\subsetchoice$ (and~$\randomness$ for Figure~\ref{fig:mnist-sgld}).
We run the training algorithm for each of these samples, compute the training loss, and estimate the population loss using $\testdatazs$.
For each~$\supersamplesmall$, we estimate the mutual information~$\lCMIidisz$ using a plug-in estimator~\cite{paninski-03a}.
Based on these estimates, the bounds are computed.
The results in Figure~\ref{fig:dnn-results} illustrate the estimated averages, with shaded areas indicating one standard deviation.

\subsection{Binarized MNIST}
For Figure~\ref{fig:mnist-sgd} and~\ref{fig:mnist-sgld}, we consider the network described in Table~\ref{tab:mnist-cnn}.
The dataset that we use consist of the parts of MNIST that corresponds to the digits~$4$ and~$9$.
Both Figure~\ref{fig:mnist-sgd} and Figure~\ref{fig:mnist-sgld} are based on~$5$ samples of~$\supersample$, with~$30$ samples of~$\subsetchoice$ for each~$\supersamplesmall$.

\begin{table}[t]
\caption{The neural network used for the binarized MNIST experiments in Figure~\ref{fig:mnist-sgd} and Figure~\ref{fig:mnist-sgld}.}
\label{tab:mnist-cnn}
\vskip 0.15in
\begin{center}
\begin{small}
\begin{sc}
\begin{tabular}{l}
\toprule
Conv. layer, 32 units, $4\times 4$ size, stride 2, padding 1, batch norm., ReLU activation \\
Conv. layer, 32 units, $4\times 4$ size, stride 2, padding 1, batch norm., ReLU activation \\
Conv. layer, 64 units, $3\times 3$ size, stride 2, padding 0, batch norm., ReLU activation \\
Conv. layer, 256 units, $3\times 3$ size, stride 1, padding 0, batch norm., ReLU activation \\
Fully connected layer, 128 units, ReLU activation\\
Fully connected layer, 2 units, linear activation\\
\bottomrule
\end{tabular}
\end{sc}
\end{small}
\end{center}
\vskip -0.1in
\end{table}

\subsubsection{Adam}
For Figure~\ref{fig:mnist-sgd}, we optimize the network using the Adam algorithm, with a~$0.001$ learning rate and~$\beta_1=0.9$, for~$200$ epochs with a batch size of~$128$.

\subsubsection{SGLD}
For Figure~\ref{fig:mnist-sgld}, we optimize the network using SGLD for~$40$ epochs with a batch size of~$100$.
The learning rate starts at~$0.01$ and decays by a factor of~$0.9$ after each~$100$ iterations.
The inverse temperature schedule is given by~$\min(4000,\max(100,10e^{t/100}))$ where~$t$ is the iteration.

\subsection{CIFAR10 and SGD}
For Figure~\ref{fig:cifar10-sgd}, we consider the network ResNet-50, which is pretrained on Imagenet.
Then, we fine-tune it on CIFAR10 using SGD with a~$0.01$ learning rate and~$0.$9 momentum for~$40$ epochs with a batch size of~$64$.
During training, we use random horizontal flips and random $28\times 28$ cropping as data augmentations.
The values in the figure are based on~$2$ samples of~$\supersample$, with~$40$ samples of~$\subsetchoice$ for each~$\supersamplesmall$.

\end{document}